\def\BibTeX{{\rm B\kern-.05em{\sc i\kern-.025em b}\kern-.08em
		T\kern-.1667em\lower.7ex\hbox{E}\kern-.125emX}}
\newtheorem{assumption}{Assumption}
\newtheorem{definition}{Definition}
\newtheorem{theorem}{Theorem}[section]
\newtheorem{corollary}{Corollary}[theorem]
\newtheorem{lemma}[theorem]{Lemma}
\newtheorem{remark}{Remark}
\begin{document}
	
	\title{Effective Proximal Methods for Non-convex Non-smooth Regularized Learning}
	
	\author{\IEEEauthorblockN{ Guannan Liang$^1$, Qianqian Tong$^1$, Jiahao Ding$^2$, Miao Pan$^2$, Jinbo Bi$^1$ }
		\IEEEauthorblockA{
			$^1$University of Connecticut, $^2$University of Houston\\
			Email: $^1$\{guannan.liang, qianqian.tong, jinbo.bi\}@uconn.edu,$^2$\{jding7, mpan2\}@uh.edu} 
	}

	\maketitle
	
	\begin{abstract}
		Sparse learning is a very important tool for mining useful information and patterns from high dimensional data. Non-convex non-smooth regularized learning problems play essential roles in sparse learning, and have drawn extensive attentions recently. We design a family of stochastic proximal gradient methods by applying arbitrary sampling to solve the empirical risk minimization problem with a  non-convex and non-smooth regularizer.
		These methods draw mini-batches of training examples according to an arbitrary probability distribution when computing stochastic gradients. A unified analytic approach is developed to examine the convergence and computational complexity of these methods, allowing us to compare the different sampling schemes. We show that the independent sampling scheme tends to improve performance over the commonly-used uniform sampling scheme. Our new analysis also derives a tighter bound on convergence speed for the uniform sampling than the best one available so far. 
		Empirical evaluations demonstrate that the proposed algorithms converge faster than the state of the art. 
	\end{abstract}
	
	\begin{IEEEkeywords}
		Stochastic algorithm, proximal methods, arbitrary sampling.
	\end{IEEEkeywords}
	
	\section{Introduction}
	
	High dimensional problems in data mining are challenging from both the statistical and computational analysis. Many successful applications for high dimensional problems rely on regularization for sparsity. For example, genomic analyses use sparse regularization to identify (a sparse set of) genes contributing to the risk of a disease \cite{wahlsten2003different} and smartphone-based healthcare systems use sparsity regularization to learn the most important mobile health indicators\cite{lee2012smartphone}. In this work, we consider the following non-smooth non-convex regularized empirical risk minimization (ERM) problems, which have been widely used in high-dimensional data analyses:
	\begin{equation} \label{finite_sum_regularized}
	\min_{x \in \mathbb{R}^d}  F( x ) : = f(x) + r(x) = \frac{1}{n} \sum_{i=1}^{n} f_i( x) + r( x )
	\end{equation}
	where $f( x )$ is the average over a large number of non-convex smooth  functions $f_i( x )$, $i \in [n] :=\{1,2,\dots,n\}$, and the regularizer $r(x): \mathbb{R}^d \rightarrow \mathbb{R}$ is possibly non-differentiable or non-convex, or both (e.g., the $l_1$ norm, $l_p$ $(0\leq p <1 )$ norm and quantization function). Particularly, $l_p$ $(0\leq p < 1 )$ are one of the most widely-used sparsity constrains, which introduce non-smoothness and  non-convexity to Problem (\ref{finite_sum_regularized}). 
	Due to NP-hardness of non-smooth and non-convex regularizer \cite{natarajan1995sparse}, the goal of this work is to find an $\epsilon$-stationary point $x$ satisfying 
	\begin{align*}
	E[dist( 0, \hat{\partial}F(x))] \leq \epsilon, 
	\end{align*}
	where $\hat{\partial} F(x) $ is Fr\'{e}chet subgradient of $F(x)$ and $dist(\cdot, \cdot)$ is the Euclidean distance metric (formal definitions can be found in preliminaries section).  
	
	
	Non-convex loss functions have been observed to give better generalization performance, such as the Savage loss function \cite{masnadi2009design}, Lorenz loss function \cite{nitanda2017stochastic} and the objective functions used in deep learning models \cite{lecun2015deep}, due to better robustness to noisy sample data or representation capabilities.
	Non-smooth non-convex regularizers also become popular recently since they have been shown to reduce bias in parameter estimation in comparison with their convex relaxation counterparts, such as the $l_0$ norm penalty \cite{yuan2014gradient}, smoothly clipped absolute
	deviation \cite{fan2001variable}, or minimax concave
	penalty \cite{zhang2010nearly}. 
	
	Problem (\ref{finite_sum_regularized}) with a non-smooth convex regularizer $r( x ) $ has been extensively studied for both convex $f( x )$ \cite{allen2017katyusha,lan2018optimal} and non-convex $f(x)$ \cite{davis2019stochastic,allen2017natasha,li2018simple,pham2019proxsarah}, but
	solving non-smooth non-convex regularized problems is still underexplored. Previous analyses, depending on the convexity of $r(x)$, can no longer be applicable.  For a non-convex regularizer $r(x)$, to our best knowledge, \cite{xu2018stochastic} is the first paper to provide non-asymptotic theoretical guarantees for finding an $\epsilon$-stationary point. Stagewise Stochastic algorithm and its variance reduced algorithm have been proposed for Difference of Convex functions (SSDC)
	- SSDC-SPG and SSDC-VR with computational complexities $O\left(\epsilon^{-8}\right)$ and $O\left(n\epsilon^{-4}\right)$, respectively.  Both  algorithms are designed based on multi-stage analysis of the following difference of convex functions 
	\begin{align*}
	\min _{x \in \mathbb{R}^{d}} f( x ) +\frac{1}{2 \mu}\|x\|^{2}-R_{\mu}(x),
	\end{align*}
	where $R_{\mu}(x) = \underset{y\in \mathbb{R}^{d}}{\max}~\frac{1}{\mu}   y^{\top} x-\frac{1}{2 \mu}\| y\|^{2}-r(y)$
	is convex and comes from the Moreau envelope of $r_{\mu}( x )$:
	$$r_{\mu}(\mathrm{x})=\min _{  {y} \in \mathbb{R}^{d}} \frac{1}{2 \mu}\|  {y}-x\|^{2}+r(  {y}).$$ 
	Rather than using stage-based analysis in \cite{xu2018stochastic}, \cite{metel2019simple} provides a simplified analytic procedure and presents the mini-batch stochastic gradient descent (MBSGD) algorithm  and variance reduced stochastic gradient descent (VRSGD) algorithm, with computational complexities $O\left(\epsilon^{-5}\right)$ and $O\left(n^{2/3}\epsilon^{-3}\right)$, respectively. These methods improve performance by reformulating the objective function $F(x)$ at each iteration $k$ as follows:
	\begin{align*}
	f(x) +\frac{1}{2 \mu}\|x\|^{2}-R_{\mu}(x^k) - \langle prox_{\lambda \mu}(x^k),  ( x - x^k) \rangle,
	\end{align*}
	where $prox_{\lambda \mu}(x) := \underset{y \in \mathbb{R}^{d}}{\operatorname{argmin}}\left\{\frac{1}{2 \mu}\|x-y\|_{2}^{2}+r(y)\right\}$ is a proximal operator. Previous analysis on non-convex non-smooth regularized problems heavily relies on the Moreau envelope of $r_{\mu}( x )$, which can slow down the convergence due to the approximation error introduced at each iteration or stage. Furthermore, an extra parameter $\mu$ for smoothness has been introduced, which requires expensive tuning in practice and  prevents the algorithms from broad utility. To overcome these issues, \cite{xu2019stochastic} directly solves Problem (\ref{finite_sum_regularized}) with the Mini-batch Stochastic Proximal Gradient (MB-SPG) and  Stochastic Proximal Gradient with SPIDER/SARAH (SPGR) methods, and proposes new theoretical analysis to guarantee convergence for non-convex  non-smooth regularized problems with the state-of-the-art computational complexities $O\left(\epsilon^{-4}\right)$ for SPG and $O\left(n^{1 / 2} \epsilon^{-2}+n\right)$ for SPGR.
	All of these analyses use the standard uniform sampling in the stochastic process, which results in high variance of the estimator, and hence has a negative effect for the convergence of proximal algorithms. Effective sampling techniques can enhance all these methods, which we will explore in this work. 
	
	
	When sample size in the statistical learning problems boosts, subsampling is commonly used  to extract useful information (subsets $S$) from the massive  whole data set $[n]$. To improve computational efficiency, subsampling is often implemented by sampling the full sample with a replacement or via a specific distribution. Later, arbitrary sampling has been introduced and shown a more general and relaxed sampling without any additional assumptions, and has been analyzed for popular stochastic algorithms \cite{horvath2019nonconvex}, 
	and coordinate gradient algorithms \cite{hanzely2018accelerated}. However, there has no prior work investigating arbitrary sampling for non-smooth non-convex regularized problems.
	In this work, we study and develop arbitrary-sampling based algorithms that can more efficiently solve non-smooth non-convex regularized problems.  
	
	\subsection{Contributions}
	Our main contributions are summarized as follows:
	\begin{itemize}
		\item The scheme of arbitrary sampling is incorporated into the MB-SPG, (which leads to the mini-batch ProxSGD-AS), and the variance-reduction versions of SPG: Proximal SARAH (ProxSARAH-AS) and Proximal SPIDER (ProxSPIDER-AS) to effectively solve non-smooth non-convex regularized problems. 
		An analytic strategy is provided for proximal methods to use any sampling technique to speed up the process of solving non-convex non-smooth regularized problems. 
		\item  We present a new analytic approach to investigate the convergence and computational complexity of the proposed methods. Our analysis helps compare the different sampling schemes. As a concrete example, we show that the methods with independent sampling can be faster than the ones with uniform sampling by up to a factor of $\frac{n \sum_{i=1}^n G_i^2}{(\sum_{i=1}^{n} G_{i})^2}$ or $\frac{n \sum_{i=1}^n L_i^2}{(\sum_{i=1}^{n} L_{i})^2}$, where  $G_i$ and $L_i$ are the measurements of $f_i(x)$ for  Lipschitz continuous and smoothness, respectively.
		\item{When the uniform sampling scheme is employed, we derive an upper bound,  $28\frac{L\sqrt{n}}{\epsilon^2} (F( \tilde{x}^1  ) - F( x^*))$, on the convergence speed of these methods, especially ProxSARAH, which is tighter than the latest  bound by a constant factor. The latest bound given in \cite{xu2019stochastic} is: 
			$\frac{\frac{4}{c} + 8c -2}{1-3c}\frac{L\sqrt{n}}{\epsilon^2} ( F( \tilde{x}^1 ) - F( x^*))$
			(where $0< c < \frac{1}{3}$ and  $\frac{\frac{4}{c} + 8c -2}{1-3c} \geq 46.67$).}
		
		\item Experimental evaluations also demonstrate that the proposed arbitrary sampling, specifically the independent sampling method, helps the stochastic proximal methods to decrease the objective value faster than the state of the art.
	\end{itemize}
	
	\subsection{Other related work}
	{\noindent\bf Stochastic gradient decent methods.} SGD method and its variants are commonly used to solve the problem  
	\begin{equation} \label{finite_sum}
	\min_{x \in \mathbb{R}^d} f( x ) := \mathbb{E}_{i \sim  \mathcal{D}}[ f_i( x)],
	\end{equation}
	where both $f$ and $f_{i}( x ) $ can be non-convex, and their gradients and Hessians are Lipschitz continuous. 
	For Problem (\ref{finite_sum}), finding global or local minimum of $f$ is generally NP-hard \cite{anandkumar2016efficient}. 
	Recent studies have shown that an $\epsilon$-first-order stationary point $x$, i.e., $\forall~\epsilon>0$,
	$\| \nabla f( x ) \| \leq \epsilon$ for a smooth non-convex function $f$, can be found by the gradient descent (GD) in $\mathcal{O}( \epsilon^{-2})$ iterations and the SGD in $\mathcal{O}( \epsilon^{-4} )$ iterations \cite{nesterov2013introductory}.
	
	\noindent\textbf{Stochastic variance reduced methods.} For convex optimization, variance reduced methods have been extensively studied, e.g., the stochastic variance reduced gradient (SVRG) \cite{johnson2013accelerating}, stochastically controlled stochastic gradient (SCSG) \cite{lei2016less}, stochastic average gradient (SAGA) \cite{defazio2014saga}, stochastic recursive gradient algorithm (SARAH) \cite{nguyen2017sarah} and  stochastic path-integrated
	differential estimator  (SPIDER)\cite{fang2018spider} methods, and they are well-known for faster convergence rates. In non-convex optimization, variance reduced methods have been proved to converge to $\epsilon$-first-order stationary points \cite{lei2017non}.

	\section{Preliminaries}
	
	\noindent\textbf{Notations.} We use  uppercase letters, e.g. $A$, to denote matrices and lowercase letters, e.g. $x$, to denote vectors. We use $\|\cdot\|_p$ ($p> 0$) to denote the $p$-norm of a vector, and $\|\cdot\|$ to denote the 2-norm for vectors. For two matrices $A$ and $B$, $A\succeq B$ iff $A-B$ is positive semi-definite. In this paper, 
	the notation ${O}(\cdot)$ is used to hide all $\epsilon$-independent constants. The operator $E[\cdot]$ represents the expectation over all randomness, $[n]$ denotes the integer set $\{1, ..., n\}$, $\nabla f( \cdot )$, $\nabla f_I( \cdot)$ and $\nabla f_i ( \cdot )$ 
	are the full gradient, the stochastic gradient over a mini-batch $I\subset [n]$ and the stochastic gradient over a single training example indexed by $i \in [n]$, respectively.  $dist(x, y) = \|x-y\|$ is the Euclidean distance.

	In addition, we assume that there exists proximal mapping $prox_{\eta r}(\cdot)$ for $r(x)$, such that 
	$ {prox}_{\eta r}(  {x})=\arg \min _{  {y} \in \mathbb{R}^{d}}\left\{\frac{1}{2 \eta}\|  {y}-  {x}\|^{2}+r(  {y})\right\}.$
	
	Given a non-smooth function $f(x) : \mathbb{R}^d \rightarrow \mathbb{R}$, denote its Fr\'{e}chet subgradient by $\hat{\partial} f(x) $ and  the limiting subgradient by $\partial f( x ) $, i.e., $$\hat{\partial} f(x)=\left\{v: \lim _{\overline{x} \rightarrow x} \inf \frac{f(\overline{x})-f(x)-v^{\top}(\overline{x}-x)}{\|\overline{x}-x\|} \geq 0\right\},$$
	$$\partial f(x)=\left\{v: \exists~ x_{k} \stackrel{f}{\rightarrow} x, v_{k} \in \hat{\partial} f\left(x_{k}\right), v_{k} \rightarrow v\right\},$$ where the subgradient vector $v \in \mathbb{R}^d$, the notation $x_{k} \stackrel{f}{\rightarrow} x$ means that $\underset{k\rightarrow \infty}{\lim} x_{k} = x $ and $\underset{k\rightarrow \infty}{\lim} f( x_{k}) = f( x)$.
	
	In order to make a fair comparison about the computational performance and avoid the dependence on the actual implementation of algorithms, we use the number of IFO as computational complexity, which is a convention of stochastic optimization.
	\begin{definition} \label{def:IFO} 
		{(Incremental First-order Oracle (IFO) \cite{agarwal2014lower})} An IFO  is a subroutine  that takes a point $x\in \mathbb{R}^d$ and an index $i \in [n]$ and returns a pair $( f_i(x), \nabla f_i(x)).$ 
	\end{definition}
	
	\subsection{Assumptions}
	Assume that the function $F(x)$ is lower-bounded by a constant $F( x ^*)$, which is the minimum of the objective. An assumption commonly used in the related works on stochastic optimization is that the gradient of $f_i$ is $G_i$-Lipschitz continuous  and   $L_i$-smoothness.
	
	\begin{assumption}\label{assumption4f}
		A differentiable function $f_i(x )$, $\forall~i \in [n]$, satisfies: 
		\begin{enumerate}
			\item $G_i$-Lipschitz continuous, i.e, $\| \nabla f_i( x ) \| \leq G_i$, $\forall  x \in \mathbb{R}^d$.  Without loss of generality, we assume that $0 \leq G_1 \leq G_2 \leq \dots \leq G_n$.
			\item  $L_i$-smoothness, i.e., $\| \nabla f_i( x_1 ) - \nabla f_i( x_2 ) \| \leq L_i \| x_1 - x_2 \|$, $\forall x_1, x_2 \in \mathbb{R}^d$. Without loss of generality, we assume that $L_1 \leq L_2 \leq \dots \leq L_n$.
		\end{enumerate}
	\end{assumption}
	
	Clearly, we can arrive at the following lemma.
	
	\begin{lemma} \label{f_L_smooth}
		With $L_i$-smoothness of each loss function $f_i(x)$, the averaged function $f( x ) = \frac{1}{n} \sum_{i=1}^{n} f_i( x)$ is $\tilde{L}$-smooth, where $\tilde{L} :=\frac{1}{n} \sum_{i=1}^{n} L_i$, i.e.,
		\begin{align}
		f( x_{1}) \leq f( x_2) + \langle \nabla f( x_2), x_{1}-x_2\rangle + \frac{\tilde{L}}{2}\|x_{1}-x_2\|^2.
		\end{align}
	\end{lemma}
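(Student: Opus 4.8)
The plan is to proceed in two standard steps: first establish that $\nabla f$ is $\tilde{L}$-Lipschitz continuous with $\tilde{L} = \frac{1}{n}\sum_{i=1}^n L_i$, and then invoke the classical descent inequality that holds for any function with Lipschitz-continuous gradient.

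For the first step, I would observe that since $f(x) = \frac{1}{n}\sum_{i=1}^n f_i(x)$ is a finite average of differentiable functions, its gradient is $\nabla f(x) = \frac{1}{n}\sum_{i=1}^n \nabla f_i(x)$. Then, for any $x_1, x_2 \in \mathbb{R}^d$, the triangle inequality together with part (2) of Assumption~\ref{assumption4f} gives
\begin{align*}
\|\nabla f(x_1) - \nabla f(x_2)\| &= \left\| \frac{1}{n}\sum_{i=1}^n \big(\nabla f_i(x_1) - \nabla f_i(x_2)\big) \right\| \\
&\leq \frac{1}{n}\sum_{i=1}^n \|\nabla f_i(x_1) - \nabla f_i(x_2)\| \leq \frac{1}{n}\sum_{i=1}^n L_i \|x_1 - x_2\| = \tilde{L}\|x_1 - x_2\|.
\end{align*}

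For the second step, I would use the standard consequence of $\tilde{L}$-Lipschitz gradient: writing $f(x_1) - f(x_2) = \int_0^1 \langle \nabla f(x_2 + t(x_1 - x_2)), x_1 - x_2 \rangle\, dt$ by the fundamental theorem of calculus, subtracting $\langle \nabla f(x_2), x_1 - x_2\rangle$ from both sides, and bounding the resulting integrand via Cauchy--Schwarz and the Lipschitz estimate just derived, $|\langle \nabla f(x_2 + t(x_1-x_2)) - \nabla f(x_2), x_1 - x_2\rangle| \leq \tilde{L}\,t\,\|x_1-x_2\|^2$. Integrating $\int_0^1 \tilde{L}\,t\,dt = \tilde{L}/2$ yields exactly the claimed inequality $f(x_1) \leq f(x_2) + \langle \nabla f(x_2), x_1 - x_2\rangle + \frac{\tilde{L}}{2}\|x_1 - x_2\|^2$.

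I do not anticipate a genuine obstacle here; the only point requiring care is recognizing that averaging the per-function smoothness constants (rather than taking $L_n$ or $\max_i L_i$) is both valid and tight, which is precisely what the triangle-inequality bound above delivers and what makes the subsequent complexity comparisons in terms of $\tilde{L}$ meaningful.
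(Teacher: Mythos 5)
Your proof is correct and is exactly the standard argument the paper omits (the lemma is stated with only the remark ``Clearly, we can arrive at the following lemma'' and no written proof): triangle inequality to get $\tilde{L}$-Lipschitz continuity of $\nabla f$, then the integral-form descent lemma. Note that establishing the Lipschitz property of $\nabla f$ itself, rather than only the quadratic upper bound, is the right choice here, since the paper later uses $\|\nabla f(x_{t+1})-\nabla f(x_t)\|\leq \tilde{L}\|x_{t+1}-x_t\|$ directly in the proofs of Theorems 4.2 and 5.1.
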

	
	\section{Sampling Techniques}
	
	Let $\mathbb{S}$ be a sampling scheme, which is a  mapping function from the subsets of $[n]$ to $\mathbb{R}$. Therefore, a sampling $\mathbb{S}$ is uniquely defined by assigning probabilities to all $2^n$ subsets of $[n]$.  Let $S$ be a random sample drawing with sampling $\mathbb{S}$ from $[n]$ with a sample size of $E[|S|]=b$. 
	
	For each sampling scheme $\mathbb{S}$, 
	we denote its probability matrix as $\mathbf{P} \in \mathbb{R}^{n\times n}$, where the element in $i$-th row $j$-th column is
	$$\mathbf{P}_{ij}=Prob( \{i, j\}\subset \mathbb{S}).$$
	We denote the diagonal elements of $\mathbf{P}$ by $p=( p_1, p_2, ..., p_n)$ and assume that $p_1\leq p_2 \leq ... \leq p_n$. We also define constant $k= \max\{i: p_i < 1\}$.
	The sampling scheme $\mathbb{S}$ is \textbf{proper} if $p_{i}>0$ for $i \in [n]$. 
	
	For probability matrix $\mathbf{P}$, we further assume that there is a vector $v\in \mathbb{R}^n$ such that 
	\begin{align} \label{ineq:P_v}
	\mathbf{P} - \mathbf{p}\mathbf{p}^T \preceq Diag( \mathbf{p}\circ v),
	\end{align}
	where $\circ$ calculates the element-wise product of $p$ and $v$, and $Diag(x)$ creates a diagonal matrix with the  diagonal entries equal to $x$. For any probability matrix $\mathbf{P}$, associated with proper sampling $\mathbb{S}$, there exists at least one $v$ satisfying Eq. (\ref{ineq:P_v}), where
	$$v_{i}=\left\{\begin{array}{ll}{n(1-p_i)}, & {for \;i \leq k} \\ {0}, & {otherwise}\end{array}\right.$$
	Other values of $v_i$ exist. For instance, the standard uniform sampling admits $v_{i}=\frac{n-b}{n-1}$ and the independent sampling admits $v_{i}=1-p_{i}$ \cite{horvath2019nonconvex}.
	
	We give two specific probability matrices as concrete examples, which are for standard uniform sampling and independent sampling separately.
	
	\noindent\textbf{Standard uniform sampling.} Each element in $S$ can be drawn uniformly from $[n]$ with a fixed mini-batch size $b$. The probability matrix $\mathbf{P}$ is calculated by 
	$$\mathbf{P}_{i j}=\left\{\begin{array}{ll}{\frac{b}{n}}, & {i=j} \\ {\frac{b(b-1)}{n(n-1)}}, & {i \neq j}\end{array} \right.$$
	
	\noindent\textbf{Independent sampling.} Each $i\in [n]$ is independently included into $S$ with a probability $p_i$, where $p_i = Prob(i\in S)$. The probability matrix $\mathbf{P}$ is given by  
	$$ \mathbf{P}_{i j}=\left\{\begin{array}{ll}{p_{i}}, & {i=j} \\ {p_{i} p_{j}}, & {i \neq j}\end{array}\right.$$

	Although this paper provides standard uniform sampling and independent sampling schemes as concrete examples, the analysis can be easily extended to other sampling schemes, such as approximate independent sampling and $\tau$-nice sampling \cite{horvath2019nonconvex}.
	
	\section{The Minibatch ProxSGD with Arbitrary Sampling}
	The proximal SGD methods have been developed recently and use the uniform sampling method to solve non-convex non-smooth regularized problems \cite{xu2019stochastic}. The ProxSGD method we introduced here draws mini-batches using a general probability matrix $\mathbf{P}$ that can be used to characterize any proper sampling technique. Our analytic method provides a united framework to study and compare different sampling schemes. For instance, we have compared uniform sampling and independent sampling schemes.
	\begin{algorithm}[!htb] 
		\caption {The mini-batch ProxSGD-AS }
		\label{alg:PSG_arbi}
		\begin{algorithmic}[1]
			\Require Number of loop $T$,  initial state $x_1\in \mathbb{R}^d$, stepsize $\eta >0$, probability matrix $\mathbf{P}$
			\For{$t=1,2,\dots, T$} 
			\State Draw a subset $S_t \subset  \{1,...,n\}$ according to $\mathbf{P}$
			\State $g_{t}= \sum_{i \in S_t } \frac{ 1}{ n p_{i} }  \nabla  f_{i}(x_{t})$ 
			\State $x_{t+1} = prox_{\eta r}( x_{t} - \eta g_{t} )$
			\EndFor\\
			\Return $x_{R}$, where $R$ is uniformly sampled from $\{1, \ldots,T\}$
		\end{algorithmic}
	\end{algorithm}
	
	We propose to use arbitrary sampling (AS) scheme in the mini-batch ProxSGD method named ProxSGD-AS, shown in Algorithm \ref{alg:PSG_arbi}. It draws a mini-batch $S_t$ of training examples at each iteration $t$, and the mini-batches are all sampled from $[n]$ based on an arbitrary distribution $\mathbf{P}$, with batch size $b$. After AS, we can aggregate the stochastic gradient information by using $g_{t}= \sum_{i \in S_t } \frac{ 1}{ n p_{i} }  \nabla  f_{i}(x_{t})$ and then conduct proximal operator $ {prox}_{\eta r}( x_{t} - \eta g_{t}) =\arg \min _{ x \in \mathbb{R}^{d}}\left\{\frac{1}{2 \eta}\| x- ( x_{t} - \eta g_{t} )\|^{2}+r(x)\right\}$ at current $t$-iteration. With AS technique, optimizers have more choices in subsampling step, while it also brings more challenges in the theoretical analysis. We provide a general convergence analysis for the ProxSGD-AS as follows.
	
	\subsection{Unified analysis of ProxSGD-AS}
	Examine the update in each iteration of ProxSGD-AS: 
	\begin{align}
	x_{t+1} &\in \operatorname*{argmin}_{x \in \mathbb{R}^d} \bigg\{ r( x ) + \frac{1}{2\eta}\|x - ( x_t - \eta g_t)\|^2 \bigg\} \nonumber \\
	&=\operatorname*{argmin}_{x \in \mathbb{R}^d} \bigg\{ r( x ) + \langle g_t, x - x_t\rangle +  \frac{1}{2\eta}\|x - x_t \|^2 \bigg\}. \label{ineq:argmin1}
	\end{align}
	Then, we know that $0\in  \hat{\partial}r( x_{t+1} ) + g_t  + \frac{1}{\eta}( x_{t+1} - x_t)$. By moving the last two terms to the left and adding $\nabla f( x_{t+1})$ on both sides, we get
	\begin{align}\label{ineq:subgradientF1}
	\nabla f( x_{t+1}) - g_t  - \frac{1}{\eta}( x_{t+1} - x_t) 
	&\in \nabla f( x_{t+1} )+ \hat{\partial}r( x_{t+1} )\nonumber\\
	&=  \hat{\partial}F( x_{t+1}).
	\end{align}
	
	Before given the main theorem, we first analyze the difference between consecutive iterates $x_t$ and $x_{t+1}$ and give the following upper bound.
	
	\begin{lemma} \label{distance_bound}
		Suppose that Assumption \ref{assumption4f} holds, we have that  for any $t\geq 1$,
		\begin{align} 
		\left\|x_{t+1}-x_{t}\right\|^{2}   
		& \leq \frac{2\eta}{1-2\tilde{L}\eta}(F\left(x_{t}\right)-F\left(x_{t+1}\right)) \nonumber\\
		&+\frac{\eta}{\tilde{L}-2  \tilde{L}^2\eta}\left\|{g}_{t}-\nabla f\left(x_{t}\right)\right\|^{2}. \label{ineq:dist_x}
		\end{align}
	\end{lemma}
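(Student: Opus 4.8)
The plan is to combine the variational characterization of the proximal step with the descent inequality coming from $\tilde{L}$-smoothness of $f$. First I would use that, by Eq.~(\ref{ineq:argmin1}), $x_{t+1}$ is a global minimizer over $\mathbb{R}^d$ of $r(x) + \langle g_t, x - x_t\rangle + \frac{1}{2\eta}\|x - x_t\|^2$; evaluating this objective at the particular feasible point $x = x_t$ and using optimality of $x_{t+1}$ gives
\begin{align*}
r(x_{t+1}) - r(x_t) \leq -\langle g_t, x_{t+1} - x_t\rangle - \frac{1}{2\eta}\|x_{t+1} - x_t\|^2 .
\end{align*}
This step relies only on $x_{t+1}$ being a minimizer and does not require convexity of $r$, which is essential in our setting. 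Next I would apply Lemma~\ref{f_L_smooth} with $(x_1,x_2)=(x_{t+1},x_t)$, obtaining $f(x_{t+1}) - f(x_t) \leq \langle \nabla f(x_t), x_{t+1} - x_t\rangle + \frac{\tilde{L}}{2}\|x_{t+1} - x_t\|^2$.

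Adding the two inequalities and recalling $F = f + r$, the two gradient-type terms merge into a single cross term, yielding
\begin{align*}
F(x_{t+1}) - F(x_t) \leq \langle \nabla f(x_t) - g_t,\, x_{t+1} - x_t\rangle + \Big(\tfrac{\tilde{L}}{2} - \tfrac{1}{2\eta}\Big)\|x_{t+1} - x_t\|^2 .
\end{align*}
The only quantitative decision in the whole argument is how to split the cross term: applying Young's inequality in the form $\langle a, b\rangle \leq \frac{1}{2\tilde{L}}\|a\|^2 + \frac{\tilde{L}}{2}\|b\|^2$ with $a = \nabla f(x_t) - g_t$ and $b = x_{t+1} - x_t$ is precisely the weighting that reproduces the stated constants, the choice $\tilde{L}$ being calibrated so that the coefficient of $\|x_{t+1}-x_t\|^2$ collapses to $\tilde{L} - \tfrac{1}{2\eta}$.

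After substituting this bound, collecting the $\|x_{t+1}-x_t\|^2$ terms on the left, multiplying through by $2\eta$ and dividing by $1 - 2\tilde{L}\eta$, one obtains exactly
\begin{align*}
\|x_{t+1}-x_t\|^2 \leq \frac{2\eta}{1-2\tilde{L}\eta}\big(F(x_t) - F(x_{t+1})\big) + \frac{\eta}{\tilde{L} - 2\tilde{L}^2\eta}\|g_t - \nabla f(x_t)\|^2 ,
\end{align*}
which is Eq.~(\ref{ineq:dist_x}). I would flag explicitly that the final division is legitimate only when the step size satisfies $\eta < \frac{1}{2\tilde{L}}$ (equivalently $1 - 2\tilde{L}\eta > 0$), the standing condition on $\eta$ under which these methods are analyzed, which simultaneously keeps the second denominator $\tilde{L} - 2\tilde{L}^2\eta$ positive. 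I do not anticipate a genuine obstacle here beyond this bookkeeping: everything after the Young's-inequality step is algebraic rearrangement, and the Lipschitz-continuity part of Assumption~\ref{assumption4f} is not even used, only the smoothness part through Lemma~\ref{f_L_smooth}.
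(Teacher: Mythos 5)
Your proposal is correct and follows essentially the same route as the paper's proof: the optimality of $x_{t+1}$ in the proximal subproblem evaluated at $x_t$, the $\tilde{L}$-smoothness descent inequality, Young's inequality with the same weight $\tilde{L}$ on the cross term, and the final rearrangement under $1-2\tilde{L}\eta>0$. Your explicit remarks that no convexity of $r$ is needed and that only the smoothness half of Assumption~\ref{assumption4f} is used are accurate and consistent with the paper's argument.
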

	
	Using Eq. (\ref{ineq:dist_x}), we can prove that the expected distance $E[dist( 0, \hat{\partial}F( x_T))^2]$ of the mini-batch ProxSGD-AS is bounded by the sum of two terms: the variance of stochastic gradient term, $E[\|\nabla f(x_t) -g_t\|^2]$, which can be controlled by using AS techniques, and the other term associated with $\Delta = F(x_1)-F(x^*)$, where $x_1$ is the initial state and $x^*$ is the optimal of Problem (\ref{finite_sum_regularized}).
	\begin{theorem} \label{theorem:proxsgd} (Convergence guarantee for ProxSGD-AS)
		Given Problem (\ref{finite_sum_regularized}), under Assumption \ref{assumption4f}, if $ 0 < \eta < \frac{1}{2 \tilde{L}}$, then for all $t \geq 1$, ProxSGD-AS (Algorithm \ref{alg:PSG_arbi}) has
		\begin{align*}
		E[dist( 0, \hat{\partial}F( x_T))^2] 
		\leq \frac{C_1}{T} \sum_{t=1}^T E[\|\nabla f(x_t) -g_t\|^2]  + \frac{C_2}{T}\Delta,
		\end{align*}
		where $C_1= \frac{1+ 4\tilde{L}\eta - 2\tilde{L}^2\eta^2}{\tilde{L}\eta-2\tilde{L}^2\eta^2}$, 
		$C_2 = \frac{2+4\tilde{L}\eta + 4\tilde{L}^2\eta^2}{\eta - 2\tilde{L}\eta^2}$, and  $\Delta = F(x_1)-F(x^*)$.
	\end{theorem}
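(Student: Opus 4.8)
\emph{Proof proposal.} The plan is to convert the subgradient inclusion of Eq.~(\ref{ineq:subgradientF1}) into a per-iteration upper bound on $dist(0,\hat{\partial}F(x_{t+1}))^2$, then feed in the one-step estimate of Lemma~\ref{distance_bound} to trade the displacement $\|x_{t+1}-x_t\|^2$ for a telescoping function-value gap, and finally average over $t=1,\dots,T$ and take expectations. The randomness of $g_t$ is kept opaque throughout: the variance term $E[\|\nabla f(x_t)-g_t\|^2]$ is left as is and will be controlled separately by the sampling analysis.

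For the per-iteration bound I would start from Eq.~(\ref{ineq:subgradientF1}), which gives $dist(0,\hat{\partial}F(x_{t+1}))^2 \le \|\nabla f(x_{t+1})-g_t-\frac{1}{\eta}(x_{t+1}-x_t)\|^2$, and split $\nabla f(x_{t+1})-g_t=(\nabla f(x_{t+1})-\nabla f(x_t))+(\nabla f(x_t)-g_t)$. The first bracket is bounded by $\tilde{L}\|x_{t+1}-x_t\|$ using the $\tilde{L}$-smoothness of $f$ from Lemma~\ref{f_L_smooth}; combining it with the $\frac{1}{\eta}(x_{t+1}-x_t)$ term and applying a Young inequality yields $dist(0,\hat{\partial}F(x_{t+1}))^2 \le a\,\|x_{t+1}-x_t\|^2 + b\,\|\nabla f(x_t)-g_t\|^2$ with $a,b$ functions of $\tilde{L},\eta$ only. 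Substituting the bound of Lemma~\ref{distance_bound} for $\|x_{t+1}-x_t\|^2$ turns $a\|x_{t+1}-x_t\|^2$ into a multiple of $F(x_t)-F(x_{t+1})$ plus an extra multiple of $\|g_t-\nabla f(x_t)\|^2$; gathering the two variance pieces and the function-value piece gives, for each $t$, $dist(0,\hat{\partial}F(x_{t+1}))^2 \le C_1\|\nabla f(x_t)-g_t\|^2 + C_2\,(F(x_t)-F(x_{t+1}))$. The hypothesis $0<\eta<\frac{1}{2\tilde{L}}$ is exactly what makes $1-2\tilde{L}\eta>0$ and $\tilde{L}-2\tilde{L}^2\eta>0$, so every denominator introduced here (including those in $C_1,C_2$) is positive.

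To conclude, I would sum the per-iteration inequality over $t=1,\dots,T$; the function-value gaps telescope to $F(x_1)-F(x_{T+1})\le F(x_1)-F(x^*)=\Delta$ since $F$ is bounded below by $F(x^*)$. Dividing by $T$, taking $E[\cdot]$ over all the sampling randomness, and using that $R$ is drawn uniformly from $\{1,\dots,T\}$ so that $\frac{1}{T}\sum_{t=1}^T dist(0,\hat{\partial}F(x_{t+1}))^2$ represents $E[dist(0,\hat{\partial}F(x_T))^2]$, delivers the stated bound $\frac{C_1}{T}\sum_{t=1}^T E[\|\nabla f(x_t)-g_t\|^2]+\frac{C_2}{T}\Delta$.

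The conceptual skeleton is short; the delicate part is the bookkeeping in the middle step, namely choosing the Young-inequality weights so that, \emph{after} the Lemma~\ref{distance_bound} substitution, the coefficients collapse to exactly $C_1=\frac{1+4\tilde{L}\eta-2\tilde{L}^2\eta^2}{\tilde{L}\eta-2\tilde{L}^2\eta^2}$ and $C_2=\frac{2+4\tilde{L}\eta+4\tilde{L}^2\eta^2}{\eta-2\tilde{L}\eta^2}$ rather than some looser relatives; one also has to handle the minor index offset (the natural bound lands on $x_{t+1}$ while $R$ indexes $x_1,\dots,x_T$), which is dealt with by the usual convention in this line of work.
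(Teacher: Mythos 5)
Your proposal is correct and follows essentially the same route as the paper's own proof: expand $\|\nabla f(x_{t+1})-g_t-\frac{1}{\eta}(x_{t+1}-x_t)\|^2$, split off $\nabla f(x_{t+1})-\nabla f(x_t)$, bound it via $\tilde{L}$-smoothness and Young's inequality to reach $2E[\|\nabla f(x_t)-g_t\|^2]+\frac{1+2\tilde{L}\eta+2\tilde{L}^2\eta^2}{\eta^2}\|x_{t+1}-x_t\|^2$, substitute Lemma~\ref{distance_bound}, and telescope. You also correctly flag (and resolve in the standard way) the $x_T$ versus uniformly-drawn $x_R$ index convention, which the paper itself glosses over.
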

	
	\textit{Proof Sketch.}
	In order to evaluate if $0$ is in the subgradient of the regularized non-smooth non-convex problem, we
	compute the $dist( 0, \hat{\partial}F( x_{t+1} ))$. By Eq. (\ref{ineq:subgradientF1}), 
	\begin{align*}
	dist( 0, &\hat{\partial}F( x_{t+1}))^2 
	\leq  \| \nabla f( x_{t+1}) - g_t -\frac{1}{\eta}( x_{t+1} - x_t)\|^2 \nonumber\\
	&=  \| \nabla f(x_{t+1})-g_t\|^2+\frac{1}{\eta^2}\| x_{t+1} - x_t\|^2 \\&
	-\frac{2}{\eta}\langle\nabla f(x_{t+1})-g_t, x_{t+1}-x_t \rangle.
	\end{align*}
	From the $\tilde{L}$-smoothness, the unbiased property of stochastic gradient generated with AS, and subgradient definition in Eq. (\ref{ineq:subgradientF1}), and take the expectation on both sides of the above inequality yields
	\begin{align*}
	E[dist( 0, &\hat{\partial}F( x_{t+1}))^2] 
	\leq 2E[\|\nabla f(x_t) -g_t\|^2] \\&
	+ \frac{1+2\tilde{L}\eta + 2\tilde{L}^2\eta^2}{\eta^2}\| x_{t+1}-x_t \|^2. 
	\end{align*}
	Substituting Eq. (\ref{ineq:dist_x}) into the above inequality further yields
	\begin{align*}
	& E[ dist( 0, \hat{\partial}F( x_{t+1}))^2] \\
	&\leq C_1 E[\|\nabla f(x_t) -g_t\|^2]  + C_2 ( F( x_t ) - F( x_{t+1})).
	\end{align*}
	Our result is then obtained with properly defined $C_1$ and $C_2$ as in the Theorem \ref{theorem:proxsgd}.\qed

	According to the result in Theorem \ref{theorem:proxsgd}, to minimize the expected distance $E[dist( 0, \hat{\partial}F( x_T))^2] $, we need to choose the sampling probability at $t$-th iteration, denoted as $\mathbf{P}^t$, that minimizes the variance of the stochastic gradient, $E[\|\nabla f(x_t) -g_t\|^2]$,  
	i.e., 
	\begin{align}
	\min _{\mathbf{p}^{t}= \{ p_{i}^{t} \in[0,1]| \sum_{i=1}^{n} p_{i}^{t}=b \}} \| \sum_{i \in S_t } \frac{ 1}{ n p_{i}^t }  \nabla  f_{i}(x_{t})- \nabla f(x_t) \|^2.\label{eq:80}
	\end{align}
	
	Since as shown in \cite{horvath2019nonconvex},
	$E\bigg[\| \sum_{i\in S} \frac{\xi_i}{np_i} - \tilde{\xi}\|^2\bigg] \leq \frac{1}{n^2} \sum_{i=1}^n \frac{v_i}{p_i}\|\xi_i\|^2 \label{ineq:xi}
	$ if $\xi_1, \xi_2, ..., \xi_n$ are vectors in $\mathbb{R}^d$ and $\tilde{ \xi} = \frac{1}{n}\sum_{i=1}^{n} \xi_i$,
	Problem (\ref{eq:80}) is equivalent to solve the following problem
	\begin{align} \label{optim_sgd_1}
	\min _{\mathbf{p}^{t}= \{ p_{i}^{t} \in[0,1]| \sum_{i=1}^{n} p_{i}^{t}=b \}} \frac{1}{n^2} \sum_{i=1}^n \frac{ v_i^t}{ p_{i}^t } \|  \nabla  f_{i}(x_{t}) \|^2.
	\end{align}
	However, the solution to (\ref{optim_sgd_1}) is still inefficient since the distribution $\mathbf{P}^t$ needs to be updated at each iteration and Eq. (\ref{optim_sgd_1}) requires to compute the gradient for each sample in $[n]$. Because function $f_i$ is $G_i$-Lipschitz continuous, i.e. $\|  \nabla  f_{i}(x) \| \leq G_i$, we can optimize the following problem instead:
	\begin{align} \label{optim_sgd_2}
	\min _{\mathbf{p}^{t}= \{ p_{i}^{t} \in[0,1]| \sum_{i=1}^{n} p_{i}^{t}=b \}} \frac{1}{n^2} \sum_{i=1}^n \frac{ v_i }{ p_{i} } G_i^2.
	\end{align}
	
	For the first time, we unify the analysis of Problem (\ref{finite_sum_regularized}) for different sampling schemes. With Problem (\ref{optim_sgd_2}), we are able to explore and compare the performance of different sampling schemes.

	We examine the specific values for $v_i$ in different sampling strategies. Our analysis also works for other sampling schemes, such as approximate independent sampling or $\tau$-sampling, etc. Due to the space limitation, we only cover two commonly used sampling schemes -- uniform sampling and independent sampling.
	\subsection{ Mini-batch ProxSGD with uniform sampling}
	The unified analysis can first cover the special case -- uniform sampling, where  $p_i = \frac{b}{n}$ and $v_{i}=\frac{n-b}{n-1}$, we are able to get the following corollary. 
	\begin{corollary}[Convergence with uniform sampling] \label{sgd_uniform}
		Given Problem (\ref{finite_sum_regularized}), under Assumption \ref{assumption4f}, if $ 0 < \eta < \frac{1}{2 \tilde{L}}$, then for all $T \geq 1$, ProxSGD (Algorithm \ref{alg:PSG_arbi}) with uniform sampling achieves
		\begin{align*}
		E[& dist( 0, \hat{\partial}F( x_T))^2]  \leq \frac{1}{b} \frac{1}{n} \frac{n-b}{n-1}(\sum_{i=1}^n G_i^2) C_1 +  \frac{C_2}{T} \Delta,
		\end{align*}
		where $C_1= \frac{1+ 4\tilde{L}\eta - 2\tilde{L}^2\eta^2}{\tilde{L}\eta-2\tilde{L}^2\eta^2}$,
		$C_2 = \frac{2+4\tilde{L}\eta + 4\tilde{L}^2\eta^2}{\eta - 2\tilde{L}\eta^2}$, and  $\Delta = F(x_1)-F(x^*)$.
	\end{corollary}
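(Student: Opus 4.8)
The plan is to derive Corollary~\ref{sgd_uniform} directly from Theorem~\ref{theorem:proxsgd} by controlling the stochastic-gradient variance term $E[\|\nabla f(x_t)-g_t\|^2]$ under uniform sampling. Theorem~\ref{theorem:proxsgd} already supplies
\begin{align*}
E[dist(0,\hat{\partial}F(x_T))^2] \leq \frac{C_1}{T}\sum_{t=1}^T E[\|\nabla f(x_t)-g_t\|^2] + \frac{C_2}{T}\Delta,
\end{align*}
so everything reduces to bounding each summand. First I would recall that $g_t = \sum_{i\in S_t}\frac{1}{np_i}\nabla f_i(x_t)$ is the importance-reweighted mini-batch estimator, which is unbiased for $\nabla f(x_t)$, and apply the inequality of \cite{horvath2019nonconvex} quoted above with $\xi_i = \nabla f_i(x_t)$ and $\tilde\xi = \nabla f(x_t)$, obtaining
\begin{align*}
E[\|\nabla f(x_t)-g_t\|^2] \leq \frac{1}{n^2}\sum_{i=1}^n \frac{v_i}{p_i}\|\nabla f_i(x_t)\|^2.
\end{align*}

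Second, I would plug in the uniform-sampling parameters $p_i = \frac{b}{n}$ and $v_i = \frac{n-b}{n-1}$ — the values already shown in Section~\ref{sec:sampling} (or the \textbf{Standard uniform sampling} paragraph) to satisfy the defining inequality (\ref{ineq:P_v}) for this scheme — so that $\frac{v_i}{p_i} = \frac{n(n-b)}{b(n-1)}$ is the same for every $i$. Then I would invoke the $G_i$-Lipschitz-continuity part of Assumption~\ref{assumption4f}, $\|\nabla f_i(x_t)\| \leq G_i$, to replace $\|\nabla f_i(x_t)\|^2$ by $G_i^2$ termwise, which yields
\begin{align*}
E[\|\nabla f(x_t)-g_t\|^2] \leq \frac{1}{n^2}\cdot\frac{n(n-b)}{b(n-1)}\sum_{i=1}^n G_i^2 = \frac{1}{b}\cdot\frac{1}{n}\cdot\frac{n-b}{n-1}\sum_{i=1}^n G_i^2.
\end{align*}

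The crucial observation is that this bound is free of the iteration index $t$, so averaging over $t=1,\dots,T$ leaves it unchanged; substituting it into the bound of Theorem~\ref{theorem:proxsgd} and retaining the $\frac{C_2}{T}\Delta$ term produces exactly the claimed inequality, with $C_1$ and $C_2$ carried over verbatim. I do not expect a genuine obstacle here; the only two points needing care are (i) confirming that the stated $v_i=\frac{n-b}{n-1}$ really validates (\ref{ineq:P_v}) for uniform sampling — already asserted in the paper and verifiable from the explicit $\mathbf{P}_{ij}$ formula by a short positive-semidefiniteness check — and (ii) ensuring the passage from the gradient-norm bound (\ref{optim_sgd_1}) to the Lipschitz bound (\ref{optim_sgd_2}) only loosens the estimate, which it does since $\|\nabla f_i(x_t)\|^2 \leq G_i^2$ for each $i$. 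Everything else is a mechanical substitution.
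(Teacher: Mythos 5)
Your proposal is correct and follows essentially the same route as the paper's own proof: both apply the variance bound $E[\|g_t-\nabla f(x_t)\|^2]\leq \frac{1}{n^2}\sum_{i=1}^n \frac{v_i}{p_i}\|\nabla f_i(x_t)\|^2$ from the arbitrary-sampling lemma, substitute $p_i=\frac{b}{n}$ and $v_i=\frac{n-b}{n-1}$, invoke $\|\nabla f_i(x_t)\|\leq G_i$, and feed the resulting $t$-independent constant back into Theorem~\ref{theorem:proxsgd}. No gaps; your observation that the per-iteration bound is uniform in $t$ (so the average collapses) is exactly the step the paper leaves implicit.
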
 
	
	
	
	We further can obtain the number of IFO calls for computational complexity to obtain  $\epsilon$-stationary points.
	\begin{corollary}  [Complexity with uniform sampling] \label{sgd_uniform}
		Given Problem (\ref{finite_sum_regularized}), under Assumption 1,
		if $ 0 < \eta < \frac{1}{2 \tilde{L}}$,  $T = \frac{2C_2 \Delta}{\epsilon^2} $, and a fixed batch size $b = \frac{(2\sum_{i=1}^n G_i^2) C_1}{n \epsilon^2}$, ProxSGD (Algorithm \ref{alg:PSG_arbi}) with uniform sampling achieves 
		$E[dist( 0, \hat{\partial}F( x_R))^2] \leq \epsilon^2$.
		Then, the number IFO calls is $ \frac{4(\sum_{i=1}^n G_i^2) C_1 C_2\Delta}{n \epsilon^4}$ so the computational complexity is ${O}(\frac{1}{\epsilon^4})$.
	\end{corollary}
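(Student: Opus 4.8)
The plan is to combine the convergence bound from the preceding corollary (convergence with uniform sampling) with the prescribed choices of $T$ and $b$, and then count oracle calls. First I would note that the returned iterate $x_R$, with $R$ drawn uniformly from $\{1,\dots,T\}$, satisfies $E[dist(0,\hat{\partial}F(x_R))^2] = \frac{1}{T}\sum_{t=1}^T E[dist(0,\hat{\partial}F(x_t))^2]$, which is exactly the quantity already bounded by $\frac{1}{b}\frac{1}{n}\frac{n-b}{n-1}\left(\sum_{i=1}^n G_i^2\right)C_1 + \frac{C_2}{T}\Delta$.

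Next I would bound the two terms separately. For the variance term I would use $\frac{n-b}{n-1}\le 1$, so it is at most $\frac{C_1}{nb}\sum_{i=1}^n G_i^2$; substituting $b=\frac{2C_1\sum_{i=1}^n G_i^2}{n\epsilon^2}$ makes this $\le \epsilon^2/2$. For the optimization term, substituting $T=\frac{2C_2\Delta}{\epsilon^2}$ gives $\frac{C_2\Delta}{T}=\epsilon^2/2$. Adding the two yields $E[dist(0,\hat{\partial}F(x_R))^2]\le\epsilon^2$, which is the first claim.

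For the complexity claim, each iteration of Algorithm~\ref{alg:PSG_arbi} evaluates $\nabla f_i(x_t)$ only for $i\in S_t$, i.e. $E[|S_t|]=b$ IFO calls per iteration, so the total expected number of IFO calls is $Tb=\frac{2C_2\Delta}{\epsilon^2}\cdot\frac{2C_1\sum_{i=1}^n G_i^2}{n\epsilon^2}=\frac{4C_1C_2\Delta\sum_{i=1}^n G_i^2}{n\epsilon^4}$. Since $n$, $\Delta$, $\sum_{i=1}^n G_i^2$, and (for a fixed stepsize $\eta$) the constants $C_1,C_2$ are all independent of $\epsilon$, this is $O(\epsilon^{-4})$.

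The only delicate point is the integrality and feasibility of the prescribed batch size: the closed-form value of $b$ need not be an integer, and it exceeds $n$ once $\epsilon$ is small enough. I would address this by taking $b=\min\{n,\lceil 2C_1(\sum_{i=1}^n G_i^2)/(n\epsilon^2)\rceil\}$; rounding up can only shrink the variance term, and in the regime $b=n$ the stochastic gradient equals the full gradient so that term vanishes outright while $Tb=Tn$ remains $O(\epsilon^{-4})$. Everything else in the argument is a direct substitution into the convergence corollary.
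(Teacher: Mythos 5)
Your proposal is correct and follows essentially the same route as the paper's own proof: bound each of the two terms in the uniform-sampling convergence corollary by $\epsilon^2/2$ using $\frac{n-b}{n-1}\le 1$ and the prescribed $b$ and $T$, then count $T\cdot b$ IFO calls. Your added remark on the integrality and feasibility of $b$ is a detail the paper glosses over, but it does not change the argument.
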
 
	
	
	
	\subsection{ Mini-batch ProxSGD with independent sampling}
	In independent sampling case, $v_{i}=1-p_{i}$ and Problem (\ref{optim_sgd_2}) becomes equivalent to the following optimization problem:
	\begin{align} \label{optim_sgd_3}
	\min _{\mathbf{p}^{t}= \{ p_{i}^{t} \in[0,1]| \sum_{i=1}^{n} p_{i}^{t}=b \}} \frac{1}{n^2} \sum_{i=1}^n \frac{ 1 }{ p_{i} } G_i^2.
	\end{align}
	Employing the KKT conditions, we can derive the solution $\mathbf{P}^t$ to Problem (\ref{optim_sgd_3}) as follows: 
	\begin{align}
	p_{i} :=\left\{\begin{array}{ll}{(b+k-n) \frac{G_{i}}{\sum_{j=1}^{k} G_{j},}} & {\text { if } i \leq k} \\ {1,} & {\text { if } i>k}\end{array}\right.
	\end{align}
	where $k$ is the largest integer satisfying $0< b+ k-n \leq \frac{\sum_{j=1}^{k} G_{j} }{G_{k}}$. When $G_i$'s for each $i \in [ n ]$ are significantly different, such as 
	$ 1<\frac{\sum_{j=1}^{k} G_{j} }{G_{k}} < 2$ for $k \in [ n ]$, then $k = n -b + 1$ and
	\begin{align} \label{p_i}
	p_{i}=\left\{\begin{array}{ll}{ \frac{G_{i}}{\sum_{j=1}^{k} G_{j}},} & {\text { if } i \leq n-b+1} \\ {1,} & {\text { if } i > n-b+1.}\end{array}\right.
	\end{align} 
	When $G_i$'s are similar to each other, i.e., $bG_n \leq \sum_{j=1}^{n} G_{j}$, $k=n$ and $p_{i} =\frac{b G_{i}}{\sum_{j=1}^{n} G_{j}} $ for $ i \in [n]$.

	We then present important corollaries for convergence and computational complexity when the sampling scheme takes on  independent sampling scheme.

	\begin{corollary}[Convergence with independent sampling]
		Given Problem (\ref{finite_sum_regularized}), under Assumption 1 and with the same setup in Theorem \ref{theorem:proxsgd}, ProxSGD (Algorithm \ref{alg:PSG_arbi}) with independent sampling achieves
		\begin{align*}
		E[& dist( 0, \hat{\partial}F( x_T))^2] 
		\leq \frac{C_1}{n^2}  \left( \frac{1}{b+k-n}(\sum_{i=1}^{k} G_{i})^2  \right)  + \frac{C_2}{T} \Delta.
		\end{align*}
		where $C_1= \frac{1+ 4\tilde{L}\eta - 2\tilde{L}^2\eta^2}{\tilde{L}\eta-2\tilde{L}^2\eta^2}$,
		$C_2 = \frac{2+4\tilde{L}\eta + 4\tilde{L}^2\eta^2}{\eta - 2\tilde{L}\eta^2}$, and  $\Delta = F(x_1)-F(x^*)$.
	\end{corollary}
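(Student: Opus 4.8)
\textit{Proof proposal.} The plan is to reuse the master bound of Theorem \ref{theorem:proxsgd}, which already holds for an arbitrary proper sampling, and to evaluate its variance term under the independent-sampling probabilities. Thus the whole task reduces to bounding $\frac{1}{T}\sum_{t=1}^{T} E[\|\nabla f(x_t)-g_t\|^2]$ by a quantity that does not depend on the iterate.

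First I would fix an iteration $t$ and invoke the sampling inequality of \cite{horvath2019nonconvex} with $\xi_i=\nabla f_i(x_t)$, so that $\tilde\xi=\nabla f(x_t)$ and the estimator coincides with $g_t=\sum_{i\in S_t}\frac{1}{np_i}\nabla f_i(x_t)$. This yields
\begin{align*}
E[\|\nabla f(x_t)-g_t\|^2] &\leq \frac{1}{n^2}\sum_{i=1}^{n}\frac{v_i}{p_i}\|\nabla f_i(x_t)\|^2 \\
&\leq \frac{1}{n^2}\sum_{i=1}^{n}\frac{v_i}{p_i}G_i^2,
\end{align*}
where the last step is the $G_i$-Lipschitz continuity from Assumption \ref{assumption4f}. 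The right-hand side is free of $x_t$, hence of $t$.

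Next I would substitute the independent-sampling data $v_i=1-p_i$ together with the KKT-optimal probabilities derived earlier, $p_i=(b+k-n)G_i/\sum_{j=1}^{k}G_j$ for $i\leq k$ and $p_i=1$ for $i>k$. The block $i>k$ drops out because $v_i=1-p_i=0$ there, and for $i\leq k$ one has $v_i/p_i\leq 1/p_i$, so
\begin{align*}
\frac{1}{n^2}\sum_{i=1}^{n}\frac{v_i}{p_i}G_i^2 &\leq \frac{1}{n^2}\sum_{i=1}^{k}\frac{G_i^2}{p_i} = \frac{1}{n^2}\cdot\frac{\sum_{j=1}^{k}G_j}{b+k-n}\sum_{i=1}^{k}G_i \\
&= \frac{1}{n^2(b+k-n)}\Big(\sum_{i=1}^{k}G_i\Big)^2.
\end{align*}
Since this bound is uniform in $t$, we get $\frac{C_1}{T}\sum_{t=1}^{T} E[\|\nabla f(x_t)-g_t\|^2]\leq \frac{C_1}{n^2(b+k-n)}(\sum_{i=1}^{k}G_i)^2$, and plugging this into Theorem \ref{theorem:proxsgd} (together with its $\frac{C_2}{T}\Delta$ term) gives precisely the stated inequality.

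I expect the only non-mechanical point to be the algebraic collapse $\sum_{i\leq k}G_i^2/p_i=\frac{1}{b+k-n}(\sum_{i\leq k}G_i)^2$ — which is exactly the identity that makes $p_i\propto G_i$ the minimizer of Problem (\ref{optim_sgd_3}) — plus the small bookkeeping check that the cutoff index $k$ in the corollary is the same $k$ appearing in the KKT solution, so that the $i>k$ terms may legitimately be discarded. The sampling inequality, the Lipschitz bound, and averaging a constant over the iterations are all immediate from results already in hand.
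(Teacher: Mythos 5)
Your proposal is correct and follows essentially the same route as the paper's own proof: apply the variance bound from Theorem \ref{theorem:proxsgd}, bound $E[\|\nabla f(x_t)-g_t\|^2]$ by $\frac{1}{n^2}\sum_i \frac{v_i}{p_i}G_i^2$ via the sampling lemma and $G_i$-Lipschitz continuity, substitute $v_i=1-p_i$ and the KKT-optimal $p_i$, and collapse $\sum_{i\le k}G_i^2/p_i$ to $\frac{1}{b+k-n}(\sum_{i\le k}G_i)^2$. The only cosmetic difference is that you discard the $i>k$ block and the $-p_i$ correction up front, whereas the paper carries the $-\frac{C_1}{n^2}\sum_i G_i^2$ term through the algebra and drops it in the last line; both yield the identical bound.
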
 

	\begin{corollary} [Complexity with independent sampling]
		If we further assume that $G_i$'s are similar, $k=n$, $T = \frac{2C_2 \Delta}{\epsilon^2} $, and a fixed batch size $b = \frac{2(\sum_{i=1}^{n} G_{i})^2 C_1}{n^2 \epsilon^2}$, ProxSGD (Algorithm \ref{alg:PSG_arbi}) with independent sampling achieves
		$E[dist( 0, \hat{\partial}F( x_R))^2] \leq \epsilon^2$.
		Then, the number of IFO calls is $\frac{4(\sum_{i=1}^{n} G_{i})^2 C_1 C_2\Delta}{n^2 \epsilon^4}$, so the computational complexity is ${O}(\frac{1}{\epsilon^4})$.
	\end{corollary}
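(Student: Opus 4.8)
The plan is to specialize the convergence corollary for independent sampling to the ``similar $G_i$'' regime and then substitute the prescribed $T$ and $b$. First I would observe that when the $G_i$'s are similar we are in the case $k=n$, so the factor $b+k-n$ in that corollary becomes $b$ and its bound reads
\begin{align*}
E[dist(0,\hat{\partial}F(x_R))^2] \leq \frac{C_1}{n^2 b}\Big(\sum_{i=1}^n G_i\Big)^2 + \frac{C_2}{T}\Delta .
\end{align*}
I would also note that since $R$ is drawn uniformly from $\{1,\dots,T\}$ we have $E[dist(0,\hat{\partial}F(x_R))^2] = \tfrac{1}{T}\sum_{t=1}^T E[dist(0,\hat{\partial}F(x_t))^2]$, so the per-iterate average bound coming out of Theorem \ref{theorem:proxsgd} indeed applies to the returned point $x_R$.

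Next I would plug in the two stated choices. Substituting $b = \frac{2(\sum_{i=1}^n G_i)^2 C_1}{n^2\epsilon^2}$ into the first term makes it exactly $\epsilon^2/2$, and substituting $T = \frac{2C_2\Delta}{\epsilon^2}$ into the second term makes it exactly $\epsilon^2/2$; adding the two halves gives $E[dist(0,\hat{\partial}F(x_R))^2] \leq \epsilon^2$, which is the first claim. For the complexity count I would use that in independent sampling index $i$ is included in $S_t$ with probability $p_i$ and $\sum_{i=1}^n p_i = b$, so each iteration issues $E[|S_t|] = b$ IFO calls and the expected total over the $T$ iterations is
$$ T b \;=\; \frac{2C_2\Delta}{\epsilon^2}\cdot\frac{2(\sum_{i=1}^n G_i)^2 C_1}{n^2\epsilon^2} \;=\; \frac{4(\sum_{i=1}^n G_i)^2 C_1 C_2 \Delta}{n^2\epsilon^4}. $$
Because $C_1$ and $C_2$ depend only on the fixed quantity $\tilde{L}\eta$, and $\Delta$, $n$ and $\sum_{i=1}^n G_i$ are all independent of $\epsilon$, this is $O(1/\epsilon^4)$.

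The step needing the most care is the bookkeeping rather than any estimate: I must make sure the variance term inside Theorem \ref{theorem:proxsgd} is instantiated with the independent-sampling weights $v_i = 1-p_i$ and the optimal $p_i = bG_i/\sum_{j=1}^n G_j$, so that $\frac{1}{n^2}\sum_{i=1}^n \frac{v_i}{p_i}G_i^2 = \frac{1}{n^2}\big(\frac{(\sum_i G_i)^2}{b} - \sum_i G_i^2\big) \le \frac{(\sum_i G_i)^2}{n^2 b}$, i.e. exactly the quantity used above; dropping the nonpositive $-\sum_i G_i^2/n^2$ is the only inequality invoked. A secondary point is the implicit feasibility requirement $b \le n$ (equivalently $\epsilon^2 \ge 2(\sum_i G_i)^2 C_1 / n^3$), so the statement is understood for all sufficiently small $\epsilon$, which does not affect the asymptotic conclusion.
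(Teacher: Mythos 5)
Your proposal is correct and follows essentially the same route as the paper: specialize the independent-sampling convergence bound to $k=n$ so that it reads $\frac{C_1}{n^2 b}(\sum_{i=1}^n G_i)^2 + \frac{C_2}{T}\Delta$, choose $b$ and $T$ to make each term $\epsilon^2/2$, and multiply $T\cdot b$ for the IFO count. Your extra bookkeeping (verifying the variance bound with $v_i=1-p_i$ and $p_i = bG_i/\sum_j G_j$, and noting the implicit feasibility condition $b\le n$) is sound and in fact slightly more careful than the paper's own proof.
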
 
	
	
	
	\begin{remark} \label{remark:sgd_comp}
		Based on the Cauchy-Schwartz inequality, we obtain 
		\begin{align}
		\frac{(\sum_{i=1}^n G_i^2)}{n}/\frac{(\sum_{i=1}^{n} G_{i})^2 }{n^2} = \frac{n \sum_{i=1}^n G_i^2}{(\sum_{i=1}^{n} G_{i})^2} \geq 1. \label{eq:cs}
		\end{align}
		By cross referencing the results with uniform sampling in Corollary \ref{sgd_uniform}, Eq. (\ref{eq:cs}) implies that the independent sampling scheme can improve the computational complexity over the uniform sampling.
	\end{remark}
	
	\section{The ProxSARAH with Arbitrary Sampling}
	In this section, we first propose 
	the ProxSARAH method with AS, named ProxSARAH-AS in Algorithm \ref{alg:SARAH-HT}. 
	We then give a unified convergence, and computation complexity of ProxSARAH-AS under any proper sampling schemes for non-smooth non-convex regularized problems. Similarly, the theoretical results for uniform sampling and independent sampling are also provided. Note that the ProxSARAH\cite{xu2019stochastic} is a special case of our formulation with uniform sampling. 
	Our new analysis actually helps show a better convergence speed for the ProxSARAH method with a tighter bound. 
	
	In the family of variance reduced methods,  there are inner loops in each outer loop. In the $j$-th outer loop, a full gradient $\mathcal{V}_{0}^{(j)}$ is computed (Line 3) for the use of reducing the variance of the stochastic gradients. In the following inner loops, stochastic variance reduced gradient $\mathcal{V}_{t}^{(j)}$ is calculated using a mini-batch $S_t^{(j)}$ that is drawn from $[n]$ according to $\mathbf{P}$, i.e., $\mathcal{V}_{t}^{(j)} = \sum_{i \in S_t^{(j)} } \frac{ 1}{ n p_{i} }  (\nabla  f_{i}(x_{t}^{(j)}) - \nabla f_{i}( x_{t-1}^{(j)} ) ) + \mathcal{V}_{t-1}^{(j)}$. We then update the variable $x_{t+1}^{(j)}$ based on stochastic variance reduced gradient $\mathcal{V}_{t}^{(j)}$ and the proximal mapping of $r(x)$.


	
	\subsection{Unified analysis of ProxSARAH-AS}
	In this subsection, we provide the general convergence and computational complexity analysis for the ProxSARAH with uniform sampling and independent sampling respectively.
	
	
	Similar to ProxSGD, the update of $x_{t+1}^{(j)}$ in ProxSARAH is: 
	\begin{align}
	x_{t+1}^{(j)} &\in \operatorname*{argmin}_{x \in \mathbb{R}^d} \bigg\{ r( x ) + \frac{1}{2\eta}\|x - ( x_t^{(j)} - \eta \mathcal{V}_t^{(j)})\|^2 \bigg\}  \label{eq:5}\\
	&= \operatorname*{argmin}_{x \in \mathbb{R}^d} \bigg\{ r( x ) + \langle \mathcal{V}_t^{(j)}, x - x_t^{(j)}\rangle +  \frac{1}{2\eta}\|x - x_t^{(j)} \|^2 \bigg\}, \nonumber
	\end{align}
	then by the definition of $\arg\min$, we have
	\begin{align*}
	0 \in \hat{\partial}r( x_{t+1}^{(j)} ) + \mathcal{V}_t^{(j)}  + \frac{1}{\eta}( x_{t+1}^{(j)} - x_t^{(j)}).
	\end{align*}
	Hence,
	$
	- \mathcal{V}_t^{(j)}  - \frac{1}{\eta}( x_{t+1}^{(j)}  - x_t^{(j)} )\in \hat{\partial}r( x_{t+1}^{(j)} ), 
	$
	implying 
	\begin{align}\label{eq:6}
	\nabla f( x_{t+1}^{(j)}) - \mathcal{V}_t^{(j)}  - \frac{1}{\eta}( x_{t+1}^{(j)}  - x_t^{(j)})
	&\in \nabla f( x_{t+1}^{(j)} )+ \hat{\partial}r( x_{t+1}^{(j)}) \nonumber\\
	&=  \hat{\partial}F( x_{t+1}^{(j)}  ).
	\end{align}
	
	\begin{algorithm}[H]
		\caption {ProxSARAH-AS }
		\label{alg:SARAH-HT} 
		\begin{algorithmic}[1]
			\Require Number of outer loops $\mathcal{J}$, inner loop $m$,  initial state $\tilde{x}^1$, stepsize $\eta$, probability matrix $\mathbf{P}$
			\For{$j=1,2,\dots,\mathcal{J}$}
			\State $x_0^{(j)} = \tilde{x}^{(j)}$
			\State $\mathcal{V}_{0}^{(j)}= \frac{1}{n} \sum_{i=1}^{n}\nabla f_{i}(x_0^{(j)})$ 
			\State $x_1^{(j)} = x_0^{(j)}$
			\For{$t=1,2,\dots, m$} 
			\State Draw a random subset $S^{(j)}_t  \subset \{1,...,n\}$ \State according to $\mathbf{P}$
			\State {$\mathcal{V}_{t}^{(j)} = \sum_{i \in S_t^{(j)} } \frac{ 1}{ n p_{i} }  (\nabla  f_{i}(x_{t}^{(j)}) - \nabla f_{i}( x_{t-1}^{(j)} ) )$ 
				\State $+ \mathcal{V}_{t-1}^{(j)}$ }
			\State $x_{t+1}^{(j)} = prox_{\eta r}[ x_{t}^{(j)} - \eta \mathcal{V}_{t}^{(j)} ]$
			\EndFor \label{alg:inner_e}
			\State set $\tilde{x}^{j+1} = x_{m+1}^{(j)}  $
			\EndFor\\
			\Return $x_{R}$, where $x_{R}$
			is uniformly sampled from $\{x^{(1)}_1, \ldots, x^{(\mathcal{J})}_m\}$
		\end{algorithmic}
	\end{algorithm}

	Before diving into the proof for the main theorem of convergence, we first give the following three lemmas as preparation.  Detailed proof can be found in supplemental material.
	\begin{lemma}\label{lemma:Q}
		Suppose that Assumption \ref{assumption4f} holds and considering updating formula in ProxSARAH-AS: $\mathcal{V}_{t}^{(j)} = \sum_{i \in S_t^{(j)} } \frac{ 1}{ n p_{i} }  (\nabla  f_{i}(x_{t}^{(j)}) - \nabla f_{i}( x_{t-1}^{(j)} ) ) + \mathcal{V}_{t-1}^{(j)}$, then $\forall 1 \leq t\leq m$,  we have that  for any $j\geq 1$
		\begin{align*} 
		E[\|\mathcal{V}_t^{(j)} -  \nabla f( x_{t}^{(j)}) \|^2] \leq Q \sum_{k=1}^{t} E[\|x_{k}^{(j)}-x_{k-1}^{(j)}\|^2],
		\end{align*}
		where $Q=\sum_{i=1}^{n} \frac{v_iL_i^2}{p_i n^2}$. 
	\end{lemma}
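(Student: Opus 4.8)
\emph{Proof proposal.} Fix the outer index $j$ and, to lighten notation, write $x_t$ for $x_t^{(j)}$, $\mathcal{V}_t$ for $\mathcal{V}_t^{(j)}$, and $S_t$ for $S_t^{(j)}$. The plan is to track the estimation error $\delta_t := \mathcal{V}_t - \nabla f(x_t)$ and show it obeys a simple additive-noise recursion, mirroring the SARAH variance argument but with the arbitrary-sampling aggregation in place of the uniform one. Subtracting $\nabla f(x_t)$ from both sides of the update $\mathcal{V}_t = \mathcal{V}_{t-1} + \sum_{i \in S_t}\frac{1}{np_i}\big(\nabla f_i(x_t) - \nabla f_i(x_{t-1})\big)$ and inserting $\pm\nabla f(x_{t-1})$ gives $\delta_t = \delta_{t-1} + \zeta_t$, where $\zeta_t := \sum_{i \in S_t}\frac{1}{np_i}\big(\nabla f_i(x_t) - \nabla f_i(x_{t-1})\big) - \big(\nabla f(x_t) - \nabla f(x_{t-1})\big)$ is the sampling noise at step $t$.

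Next I would condition on the $\sigma$-algebra $\mathcal{F}_t$ generated by $S_1,\dots,S_{t-1}$, with respect to which $x_{t-1}$, $x_t$ and $\delta_{t-1}$ are all measurable while $S_t$ is fresh randomness. Since the arbitrary-sampling estimator $\sum_{i\in S_t}\frac{1}{np_i}(\cdot)$ is unbiased for $\frac{1}{n}\sum_{i=1}^n(\cdot)$, we get $E[\zeta_t\mid\mathcal{F}_t]=0$; expanding $\|\delta_t\|^2 = \|\delta_{t-1}\|^2 + 2\langle\delta_{t-1},\zeta_t\rangle + \|\zeta_t\|^2$ and taking conditional expectation then annihilates the cross term, leaving $E[\|\delta_t\|^2\mid\mathcal{F}_t] = \|\delta_{t-1}\|^2 + E[\|\zeta_t\|^2\mid\mathcal{F}_t]$. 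For the last term I would apply the arbitrary-sampling variance inequality from \cite{horvath2019nonconvex} recalled in the excerpt with $\xi_i = \nabla f_i(x_t) - \nabla f_i(x_{t-1})$ (so $\tilde\xi = \nabla f(x_t) - \nabla f(x_{t-1})$), obtaining $E[\|\zeta_t\|^2\mid\mathcal{F}_t] \le \frac{1}{n^2}\sum_{i=1}^n\frac{v_i}{p_i}\|\nabla f_i(x_t) - \nabla f_i(x_{t-1})\|^2$, and then use $L_i$-smoothness of each $f_i$ (Assumption \ref{assumption4f}) to replace $\|\nabla f_i(x_t) - \nabla f_i(x_{t-1})\|^2$ by $L_i^2\|x_t - x_{t-1}\|^2$. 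This yields $E[\|\zeta_t\|^2\mid\mathcal{F}_t] \le Q\,\|x_t - x_{t-1}\|^2$ with $Q = \sum_{i=1}^n\frac{v_i L_i^2}{p_i n^2}$.

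Taking total expectations turns this into $E[\|\delta_t\|^2] = E[\|\delta_{t-1}\|^2] + E[\|\zeta_t\|^2] \le E[\|\delta_{t-1}\|^2] + Q\,E[\|x_t - x_{t-1}\|^2]$, and unrolling down to the start of the inner loop gives $E[\|\delta_t\|^2] \le E[\|\delta_0\|^2] + Q\sum_{k=1}^t E[\|x_k - x_{k-1}\|^2]$. Because $\mathcal{V}_0 = \frac{1}{n}\sum_{i=1}^n\nabla f_i(x_0) = \nabla f(x_0)$ is the exact full gradient, $\delta_0 = 0$, and the stated bound follows (the $k=1$ summand vanishes anyway since $x_1 = x_0$). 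The one delicate point is choosing the conditioning $\sigma$-algebra so that the martingale-difference structure is clean — i.e., making sure $x_t$ is determined before $S_t$ is drawn so the cross term truly disappears — and correctly threading the unbiasedness and variance bound for the general probability matrix $\mathbf{P}$ rather than its uniform special case; the remainder is just the telescoping computation. The argument is identical for every outer index $j$.
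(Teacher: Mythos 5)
Your proposal is correct and follows essentially the same route as the paper: the recursion $\delta_t=\delta_{t-1}+\zeta_t$ with the martingale cross-term vanishing by unbiasedness is exactly the content of the paper's intermediate identity (its Lemma on $E[\|\mathcal{V}_t^{(j)}-\nabla f(x_t^{(j)})\|^2]$, obtained by expanding the three-term square), and the per-step variance bound via the arbitrary-sampling inequality of Lemma D.1 followed by $L_i$-smoothness and telescoping from $\delta_0=0$ matches the paper's argument step for step. No gaps.
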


	\begin{lemma} \label{lemma:F} Suppose that Assumption \ref{assumption4f} holds, we have that  for any  $\forall 1 \leq t\leq m$, $j\geq 1$
		\begin{align*} 
		\langle \mathcal{V}_t^{(j)} & - \nabla f( x_t^{(j)}  ), x_{t+1}^{(j)}  - x_t^{(j)} \rangle +  \frac{1}{2}(\frac{1}{\eta} -\tilde{L} )\|x_{t+1}^{(j)}  - x_t^{(j)}  \|^2 \\&
		\leq F( x_t^{(j)}  ) - F( x_{t+1}^{(j)} ).
		\end{align*}
	\end{lemma}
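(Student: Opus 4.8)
\textit{Proof proposal.} The plan is to combine the $\tilde L$-smoothness of $f$ (Lemma~\ref{f_L_smooth}) with the optimality of the proximal update in a standard one-step descent argument. Throughout I suppress the outer-loop index $(j)$ for readability, writing $x_t$, $x_{t+1}$, $\mathcal{V}_t$ for $x_t^{(j)}$, $x_{t+1}^{(j)}$, $\mathcal{V}_t^{(j)}$.

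First I would exploit that $x_{t+1}$ is a \emph{global} minimizer of the objective $r(x) + \langle \mathcal{V}_t, x - x_t\rangle + \frac{1}{2\eta}\|x - x_t\|^2$ appearing in Eq.~(\ref{eq:5}). Evaluating this objective at the feasible point $x_t$ and invoking optimality of $x_{t+1}$ gives
\begin{align*}
r(x_{t+1}) + \langle \mathcal{V}_t, x_{t+1} - x_t\rangle + \frac{1}{2\eta}\|x_{t+1} - x_t\|^2 \le r(x_t),
\end{align*}
that is, $r(x_{t+1}) - r(x_t) \le -\langle \mathcal{V}_t, x_{t+1} - x_t\rangle - \frac{1}{2\eta}\|x_{t+1}-x_t\|^2$. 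Note that this uses only a comparison of the minimum value with the value at one specific point, so the (possible) non-convexity of $r$ is never invoked. Next I would apply Lemma~\ref{f_L_smooth} to the pair $(x_{t+1}, x_t)$ to obtain
\begin{align*}
f(x_{t+1}) - f(x_t) \le \langle \nabla f(x_t), x_{t+1} - x_t\rangle + \frac{\tilde L}{2}\|x_{t+1} - x_t\|^2.
\end{align*}

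Adding these two inequalities and recalling $F = f + r$ yields
\begin{align*}
F(x_{t+1}) - F(x_t) \le \langle \nabla f(x_t) - \mathcal{V}_t, x_{t+1} - x_t\rangle + \Big(\tfrac{\tilde L}{2} - \tfrac{1}{2\eta}\Big)\|x_{t+1} - x_t\|^2 .
\end{align*}
Moving the inner-product term and the quadratic term to the left-hand side (flipping their signs) and $F(x_{t+1}) - F(x_t)$ to the right then gives exactly the claimed bound
$\langle \mathcal{V}_t - \nabla f(x_t), x_{t+1} - x_t\rangle + \frac{1}{2}(\frac{1}{\eta} - \tilde L)\|x_{t+1} - x_t\|^2 \le F(x_t) - F(x_{t+1})$. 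Every step is elementary, so there is no real obstacle; the only point deserving care is to use the optimality of $x_{t+1}$ purely as a value comparison (minimum $\le$ value at $x_t$) rather than through a first-order subdifferential condition, since the latter would require $\hat\partial$-calculus and convexity of $r$, which we want to avoid.
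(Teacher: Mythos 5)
Your proposal is correct and follows essentially the same route as the paper's own proof: compare the value of the prox subproblem's objective at $x_{t+1}$ (its minimizer) with its value at $x_t$ to bound $r(x_{t+1})-r(x_t)$, apply Lemma~\ref{f_L_smooth} to bound $f(x_{t+1})-f(x_t)$, and add the two inequalities. Your remark that only the value comparison (not a first-order condition on the possibly non-convex $r$) is needed is exactly the right point of care, and it is implicitly what the paper does as well.
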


	\begin{lemma} \label{lemma:x}
		Suppose that Assumption \ref{assumption4f} holds, we have that  for any  $j\geq 1$
		\begin{align*}
		\sum_{t=1}^m  &E[\|x_{t+1}^{(j)}-x_{t}^{(j)}\|^2] 
		\\&\leq  \frac{1}{ \frac{1}{2}(\frac{1}{\eta} -2\tilde{L} ) - \frac{mQ}{2\tilde{L}}} E[ F( x_0^{(j)}  )  - F( x_{m+1}^{(j)} ) ], 
		\end{align*}
		where $\frac{1}{2}(\frac{1}{\eta} -2\tilde{L} ) - \frac{mQ}{2\tilde{L}} > 0$.
	\end{lemma}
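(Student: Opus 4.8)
\textit{Proof proposal.}
The plan is to sum the per-iteration descent inequality of Lemma~\ref{lemma:F} over the inner loop, telescope the function-value terms, and then absorb the inner-product terms using Young's inequality together with the variance control of Lemma~\ref{lemma:Q}. First I would sum the estimate in Lemma~\ref{lemma:F} over $t=1,\dots,m$. The right-hand side telescopes to $F(x_1^{(j)})-F(x_{m+1}^{(j)})$, and since Algorithm~\ref{alg:SARAH-HT} sets $x_1^{(j)}=x_0^{(j)}$ this equals $F(x_0^{(j)})-F(x_{m+1}^{(j)})$. This yields
\begin{align*}
\sum_{t=1}^m \langle \mathcal{V}_t^{(j)} - \nabla f(x_t^{(j)}),\, x_{t+1}^{(j)}-x_t^{(j)}\rangle + \tfrac12\big(\tfrac1\eta - \tilde L\big)\sum_{t=1}^m \|x_{t+1}^{(j)}-x_t^{(j)}\|^2 \le F(x_0^{(j)})-F(x_{m+1}^{(j)}).
\end{align*}

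Next I would lower-bound the inner products by Young's inequality with parameter $\tilde L$: for each $t$,
$\langle \mathcal{V}_t^{(j)} - \nabla f(x_t^{(j)}),\, x_{t+1}^{(j)}-x_t^{(j)}\rangle \ge -\tfrac{1}{2\tilde L}\|\mathcal{V}_t^{(j)} - \nabla f(x_t^{(j)})\|^2 - \tfrac{\tilde L}{2}\|x_{t+1}^{(j)}-x_t^{(j)}\|^2$.
Taking expectations, summing, and invoking Lemma~\ref{lemma:Q} gives $\sum_{t=1}^m E[\|\mathcal{V}_t^{(j)} - \nabla f(x_t^{(j)})\|^2] \le Q\sum_{t=1}^m\sum_{k=1}^t E[\|x_k^{(j)}-x_{k-1}^{(j)}\|^2]$. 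I would then swap the order of summation: each term $E[\|x_k^{(j)}-x_{k-1}^{(j)}\|^2]$ is counted $m-k+1\le m$ times, and the $k=1$ term vanishes because $x_1^{(j)}=x_0^{(j)}$, so the double sum is at most $m\sum_{t=1}^m E[\|x_{t+1}^{(j)}-x_t^{(j)}\|^2]$. Writing $D:=\sum_{t=1}^m E[\|x_{t+1}^{(j)}-x_t^{(j)}\|^2]$, this bounds the variance sum by $mQD$.

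Finally I would substitute back: after taking expectations, the summed descent inequality becomes
$\big(\tfrac{1}{2\eta} - \tilde L - \tfrac{mQ}{2\tilde L}\big)D \le E[F(x_0^{(j)})-F(x_{m+1}^{(j)})]$, i.e.\ $\big(\tfrac12(\tfrac1\eta-2\tilde L) - \tfrac{mQ}{2\tilde L}\big)D \le E[F(x_0^{(j)})-F(x_{m+1}^{(j)})]$; dividing by the (positive, by hypothesis) leading coefficient yields the claim. The one design choice that makes everything fit is taking the Young's parameter exactly equal to $\tilde L$: that is what produces the $\tilde L$ in the $-mQ/(2\tilde L)$ denominator term and simultaneously converts the residual $-\tilde L/2$ coefficient into $-\tilde L$. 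The main obstacle is the bookkeeping in the double-sum exchange and making sure the endpoint convention $x_1^{(j)}=x_0^{(j)}$ is used both to telescope the objective and to kill the $k=1$ term so that the bound can be closed in terms of $D$ rather than a shifted sum; everything else is routine.
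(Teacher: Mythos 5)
Your proposal is correct and follows essentially the same route as the paper's proof: apply Young's inequality with parameter $\tilde{L}$ to the inner-product term of Lemma~\ref{lemma:F}, invoke Lemma~\ref{lemma:Q}, exchange the double sum (bounding each multiplicity by $m$ and using $x_1^{(j)}=x_0^{(j)}$ to kill the boundary term), telescope, and divide by the positive coefficient. The only difference is cosmetic — you sum first and then apply Young's, while the paper applies Young's per iteration before summing — and the resulting coefficient $\tfrac12(\tfrac1\eta-2\tilde L)-\tfrac{mQ}{2\tilde L}$ matches exactly.
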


	Using the above lemmas, we can state and prove our core convergence result for ProxSARAH-AS in Theorem \ref{th:sarah}, where we let $\mathcal{J}$ be the total number of epochs, $\Delta = F( \tilde{x}^1 ) - F( x^*)$, where $ \tilde{x}^1$ is the initial state and $x^*$ is the optimal of Problem (\ref{finite_sum_regularized}) and define a constant $Q=\sum_{i=1}^{n}  \frac{v_iL_i^2}{p_i n^2}$. 

	\begin{theorem}\label{th:sarah}(Convergence guarantee for ProxSARAH-AS)
		Given Problem (\ref{finite_sum_regularized}), under Assumption \ref{assumption4f}, $\eta = \frac{1}{ 4\tilde{L}  + \frac{2mQ}{\hat{L}}}$ ,  the ProxSARAH-AS (Algorithm \ref{alg:SARAH-HT}) satisfies
		\begin{align*}
		\frac{1}{m\mathcal{J}}&\sum_{j=1}^{\mathcal{J}}\sum_{t=1}^m E[dist( 0, \hat{\partial}F( x_{t+1}^{(j)}))^2]  \leq  
		\frac{1}{m\mathcal{J}}(24 \tilde{L} 
		+\frac{4mQ}{\tilde{L}} )\Delta. 
		\end{align*}
	\end{theorem}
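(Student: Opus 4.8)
The plan is to follow the standard "telescoping over inner and outer loops" strategy for SARAH-type variance-reduced methods, but adapted to the non-smooth non-convex proximal setting using the subgradient characterization in Eq. (\ref{eq:6}). First I would fix an outer loop index $j$ and an inner index $t$, and bound $dist(0,\hat{\partial}F(x_{t+1}^{(j)}))^2$ from above by $\|\nabla f(x_{t+1}^{(j)}) - \mathcal{V}_t^{(j)} - \frac{1}{\eta}(x_{t+1}^{(j)}-x_t^{(j)})\|^2$, exactly as in the ProxSGD case but now with the recursive estimator $\mathcal{V}_t^{(j)}$ in place of $g_t$. Expanding the square and using $\tilde{L}$-smoothness of $f$ (Lemma \ref{f_L_smooth}) together with the inner product bound of Lemma \ref{lemma:F} to control the cross term, I would obtain something of the form
\begin{align*}
E[dist(0,\hat{\partial}F(x_{t+1}^{(j)}))^2] \leq a\, E[\|\mathcal{V}_t^{(j)}-\nabla f(x_t^{(j)})\|^2] + b\, E[\|x_{t+1}^{(j)}-x_t^{(j)}\|^2] + c\, E[F(x_t^{(j)})-F(x_{t+1}^{(j)})]
\end{align*}
for explicit constants $a,b,c$ depending on $\tilde{L}$ and $\eta$.

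Next I would apply Lemma \ref{lemma:Q} to replace the variance term $E[\|\mathcal{V}_t^{(j)}-\nabla f(x_t^{(j)})\|^2]$ by $Q\sum_{k=1}^t E[\|x_k^{(j)}-x_{k-1}^{(j)}\|^2]$. Summing over $t=1,\dots,m$ within a fixed outer loop, the double sum $\sum_{t=1}^m\sum_{k=1}^t E[\|x_k^{(j)}-x_{k-1}^{(j)}\|^2]$ is at most $m\sum_{t=1}^m E[\|x_{t+1}^{(j)}-x_t^{(j)}\|^2]$ (after a reindexing, using $x_1^{(j)}=x_0^{(j)}$ so the $k=1$ term vanishes), which together with the $b$-term collapses everything into a multiple of $\sum_{t=1}^m E[\|x_{t+1}^{(j)}-x_t^{(j)}\|^2]$ plus the telescoping $F$-difference $c\,E[F(x_0^{(j)})-F(x_{m+1}^{(j)})]$. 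Then I would invoke Lemma \ref{lemma:x} to bound $\sum_{t=1}^m E[\|x_{t+1}^{(j)}-x_t^{(j)}\|^2]$ by $\big(\tfrac12(\tfrac1\eta-2\tilde{L})-\tfrac{mQ}{2\tilde{L}}\big)^{-1} E[F(x_0^{(j)})-F(x_{m+1}^{(j)})]$, so that the entire right-hand side for outer loop $j$ becomes a single constant times $E[F(x_0^{(j)})-F(x_{m+1}^{(j)})] = E[F(\tilde{x}^j)-F(\tilde{x}^{j+1})]$.

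Finally, summing over the outer loops $j=1,\dots,\mathcal{J}$, the right-hand side telescopes to a constant times $E[F(\tilde{x}^1)-F(\tilde{x}^{\mathcal{J}+1})] \leq F(\tilde{x}^1)-F(x^*) = \Delta$, and dividing by $m\mathcal{J}$ gives the claimed bound. The crux of the argument — and where the stated stepsize $\eta = \big(4\tilde{L}+\tfrac{2mQ}{\hat{L}}\big)^{-1}$ is forced — is the bookkeeping that makes all the accumulated constants coalesce into the clean coefficient $24\tilde{L}+\tfrac{4mQ}{\tilde{L}}$. Concretely, with this $\eta$ one has $\tfrac1\eta = 4\tilde{L}+\tfrac{2mQ}{\tilde{L}}$ (reading $\hat{L}$ as $\tilde{L}$), so $\tfrac12(\tfrac1\eta - 2\tilde{L}) - \tfrac{mQ}{2\tilde{L}} = \tilde{L} + \tfrac{mQ}{2\tilde{L}} > 0$, making Lemma \ref{lemma:x} applicable with an inverse factor that is $O(1/\tilde{L})$; substituting back and tracking the coefficients from the expansion of the square yields the constant $24\tilde{L}+\tfrac{4mQ}{\tilde{L}}$. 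The main obstacle I anticipate is precisely this constant-chasing: keeping careful track of $a,b,c$ through the Young's-inequality split of the cross term $-\tfrac2\eta\langle\nabla f(x_{t+1}^{(j)})-\mathcal{V}_t^{(j)}, x_{t+1}^{(j)}-x_t^{(j)}\rangle$ (which reappears, with opposite sign, inside Lemma \ref{lemma:F}) so that no stray positive term in $\|x_{t+1}^{(j)}-x_t^{(j)}\|^2$ survives the final assembly — this is where the choice of how to balance $\eta$ against $mQ$ and $\tilde{L}$ really matters, and a suboptimal split would produce a worse constant than the advertised one that underlies the tighter uniform-sampling bound $28\tfrac{L\sqrt{n}}{\epsilon^2}\Delta$.
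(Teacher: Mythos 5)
Your proposal follows essentially the same route as the paper's proof: bound $dist(0,\hat{\partial}F(x_{t+1}^{(j)}))^2$ via the subgradient characterization in Eq.~(\ref{eq:6}), control the cross term with Lemma~\ref{lemma:F} plus $\tilde{L}$-smoothness and Young's inequality, apply Lemma~\ref{lemma:Q} to the variance term, collapse the displacement sum via Lemma~\ref{lemma:x}, telescope over inner and outer loops, and substitute the stepsize (with $\hat{L}$ read as $\tilde{L}$, as you correctly note) so the denominator becomes $\tfrac{1}{4\eta}$. The approach and the constant-chasing you describe match the paper's argument, so the proposal is sound.
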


	\textit{Proof Sketch.}
	Let's try to bound $dist( 0, \hat{\partial}F( x_{t+1}^{(j)}  ))$. By Eq. (\ref{eq:6}),
	\begin{align}
	dist( 0, & \hat{\partial}F( x_{t+1}^{(j)}  ))^2
	=  \| \nabla f( x_{t+1}^{(j)}) - \mathcal{V}_t^{(j)}  - \frac{1}{\eta}( x_{t+1}^{(j)}  - x_t^{(j)} )\|^2 \nonumber \\
	&=  \| \nabla f( x_{t+1}^{(j)}) - \mathcal{V}_t^{(j)}\|^2  +   \frac{1}{\eta^2}\| x_{t+1}^{(j)}  - x_t^{(j)} \|^2 \nonumber\\&-\frac{2}{\eta}  \langle \nabla f( x_{t+1}^{(j)} ) - \mathcal{V}_t^{(j)}, x_{t+1}^{(j)}  - x_t^{(j)}  \rangle.  \label{eq:60}
	\end{align}
	Then by reorganizing inequality in Lemma \ref{lemma:F} , we obtain:
	\begin{align*}
	-\langle & \nabla f( x_{t+1}^{(j)} ) - \mathcal{V}_t^{(j)},  x_{t+1}^{(j)}  - x_t^{(j)} \rangle 
	\\&\leq F( x_t^{(j)}  ) - F( x_{t+1}^{(j)})-  \frac{1}{2}(\frac{1}{\eta} -\tilde{L} )\|x_{t+1}^{(j)}  - x_t^{(j)}  \|^2 \\ 
	&-\langle \nabla f( x_{t+1}^{(j)} )  - \nabla f( x_t^{(j)}  ), x_{t+1}^{(j)}  - x_t^{(j)} \rangle.
	\end{align*}
	Putting the above result in Eq. (\ref{eq:60}), and further applying $\tilde{L}$-smoothness of $f(x)$ and Young's inequality, we get,
	\begin{align*}
	&dist( 0, \hat{\partial}F( x_{t+1}^{(j)}  ))^2 \\
	&\leq 2\|\mathcal{V}_t^{(j)} -  \nabla f( x_{t}^{(j)}) \|^2  +  ( 2 \tilde{L}^2 +\frac{3\tilde{L}}{\eta})\| x_{t+1}^{(j)}  - x_t^{(j)} \|^2 \\&+\frac{2}{\eta} ( F( x_t^{(j)}  ) - F( x_{t+1}^{(j)})), 
	\end{align*}
	
	By summing over $t = 1, ..., m$, using the result in Lemma \ref{lemma:Q}, and taking the expectation, 
	\begin{align*}
	\sum_{t=1}^m E[&dist( 0, \hat{\partial}F( x_{t+1}^{(j)}))^2]\\  
	&\leq  (2Qm + 2\tilde{L}^2+\frac{3\tilde{L}}{\eta})
	\sum_{t=1}^m E[\| x_{t+1}^{(j)}  - x_t^{(j)} \|^2] \\&+\frac{2}{\eta}  E[F( x_0^{(j)}  ) - F( x_{m+1}^{(j)})].
	\end{align*}
	
	Next, plugging in  Lemma \ref{lemma:x} for the term $\sum_{t=1}^m E[\| x_{t+1}^{(j)}  - x_t^{(j)} \|^2] $,
	\begin{align*}
	&\sum_{t=1}^m E[dist( 0, \hat{\partial}F( x_{t+1}^{(j)}))^2]\\   
	& \leq   (\frac{2Qm + 2\tilde{L}^2+\frac{3\tilde{L}}{\eta}}{ \frac{1}{2}(\frac{1}{\eta} -2\tilde{L} ) - \frac{mQ}{2\tilde{L}}} 
	+\frac{2}{\eta} ) E[F( x_0^{(j)}  ) - F( x_{m+1}^{(j)})]. 
	\end{align*}
	Let $\frac{1}{2}\frac{1}{\eta} = 2\tilde{L}  + \frac{mQ}{\tilde{L}}$, then $\frac{1}{2}\frac{1}{\eta} > \tilde{L}  + \frac{mQ}{\tilde{L}}$, we get the final result.
	\qed
	
	We further analyze the computational complexity of the ProxSARAH-AS and obtain its computational complexity in terms of the IFO calls. Note that this part of our analysis does not need to go down into a specific sampling scheme, since we have a unifying form of $Q$ for different sampling schemes.
	\begin{theorem} \label{th:sarah_Complexity}{ (Complexity for ProxSARAH-AS)}
		In order to achieve an $\epsilon$-accuracy solution, i.e., $E[dist( 0, \hat{\partial}F( x_R))] \leq \epsilon$, the number of epochs required is $\mathcal{J} = \frac{1}{m\epsilon^2}(24 \tilde{L} 
		+\frac{4mQ}{\tilde{L}} )\Delta $, where $\Delta = F( \tilde{x}^1 ) - F( x^*)$. 
		The computational complexity in terms of the number of IFO calls is 
		$ \frac{ n + mb }{m\epsilon^2}(24 \tilde{L} 
		+\frac{4mQ}{\tilde{L}} ) \Delta.$
	\end{theorem}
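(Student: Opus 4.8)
\textit{Proof Sketch.}
The plan is to read off this complexity bound directly from the convergence guarantee in Theorem~\ref{th:sarah}. Since $x_R$ is drawn uniformly from the inner iterates produced across all $\mathcal{J}$ epochs, the average appearing on the left-hand side of Theorem~\ref{th:sarah} is exactly $E[dist(0,\hat{\partial}F(x_R))^2]$, so
\begin{align*}
E[dist(0,\hat{\partial}F(x_R))^2] \;\leq\; \frac{1}{m\mathcal{J}}\Big(24\tilde{L}+\frac{4mQ}{\tilde{L}}\Big)\Delta .
\end{align*}
First I would then apply Jensen's inequality (concavity of $\sqrt{\cdot}$) to pass from the mean-squared subgradient bound to a bound on the expected distance itself:
\begin{align*}
E[dist(0,\hat{\partial}F(x_R))] \;\leq\; \sqrt{E[dist(0,\hat{\partial}F(x_R))^2]} \;\leq\; \sqrt{\frac{1}{m\mathcal{J}}\Big(24\tilde{L}+\frac{4mQ}{\tilde{L}}\Big)\Delta}.
\end{align*}

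Second, to force the right-hand side to be at most $\epsilon$, I would impose $\frac{1}{m\mathcal{J}}\big(24\tilde{L}+\frac{4mQ}{\tilde{L}}\big)\Delta \leq \epsilon^2$, which rearranges to $\mathcal{J}\geq \frac{1}{m\epsilon^2}\big(24\tilde{L}+\frac{4mQ}{\tilde{L}}\big)\Delta$. Choosing $\mathcal{J}$ equal to this expression gives the claimed number of epochs and certifies $E[dist(0,\hat{\partial}F(x_R))]\leq\epsilon$.

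Third, for the IFO count I would tally the oracle calls incurred in one epoch of Algorithm~\ref{alg:SARAH-HT}: Line~3 evaluates the full gradient $\mathcal{V}_0^{(j)}=\frac{1}{n}\sum_{i=1}^n\nabla f_i(x_0^{(j)})$ at a cost of $n$ IFO calls, and each of the $m$ inner iterations queries $\nabla f_i$ on the mini-batch $S_t^{(j)}$ (at two points, the constant factor being absorbed), contributing $E[|S_t^{(j)}|]=b$ calls, hence $mb$ in total. Multiplying the per-epoch cost $n+mb$ by $\mathcal{J}$ yields $\frac{n+mb}{m\epsilon^2}\big(24\tilde{L}+\frac{4mQ}{\tilde{L}}\big)\Delta$, and since $Q$ has a single unified form valid for every proper sampling scheme, this expression holds without specializing the sampling.

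There is no deep obstacle once Theorem~\ref{th:sarah} is available; the only points needing care are the Jensen step (the theorem controls the mean-square, but the target accuracy is stated for the expected distance) and the bookkeeping identifying the averaged quantity in Theorem~\ref{th:sarah} with $E[dist(0,\hat{\partial}F(x_R))^2]$ under the randomized-output rule, modulo the notationally minor shift in the inner-loop index range.\qed
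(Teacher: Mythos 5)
Your proposal is correct and follows essentially the same route the paper takes (the paper treats this theorem as an immediate consequence of Theorem~\ref{th:sarah}, setting the right-hand side to $\epsilon^2$, solving for $\mathcal{J}$, and multiplying by the per-epoch cost $n+mb$). You are slightly more careful than the paper in making the Jensen step from $E[dist^2]$ to $E[dist]$ and the identification of the averaged bound with $E[dist(0,\hat{\partial}F(x_R))^2]$ explicit, both of which the paper leaves implicit.
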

	
	\subsection{ ProxSARAH with uniform sampling}
	For uniform sampling, we have  $p_i = \frac{b}{n}$ and $v_{i}=\frac{n-b}{n-1}$. With Theorem \ref{th:sarah_Complexity}, we are able to get the following corollary. 
	\begin{corollary} [Complexity with uniform sampling] \label{coro:sarah_unif}
		In order to have $E[dist( 0, \hat{\partial}F( x_R))] \leq \epsilon$, the number of epochs
		$\mathcal{J} = \frac{1}{m\epsilon^2}(24 \tilde{L} 
		+\frac{4m}{\tilde{L}} \frac{1}{b} \frac{1}{n} \frac{n-b}{n-1}(\sum_{i=1}^n L_i^2) )  \Delta$, where $\Delta = F( \tilde{x}^1 ) - F( x^*)$. 
		If we further assume $b=m=\sqrt{n}$, the number of IFO calls  is upper bounded by
		$\frac{\sqrt{n}}{\epsilon^2}(24 \tilde{L} 
		+\frac{4}{\tilde{L}}  \frac{1}{n}(\sum_{i=1}^n L_i^2) )  \Delta,$
		so the computational complexity is $O(\frac{1}{\epsilon^2})$.
	\end{corollary}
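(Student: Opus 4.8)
\emph{Proof proposal.} The plan is to specialize Theorem~\ref{th:sarah_Complexity} to the uniform sampling scheme, so the argument is a substitution followed by an elementary simplification. First I would evaluate the constant $Q=\sum_{i=1}^{n}\frac{v_iL_i^2}{p_in^2}$ at the uniform-sampling values $p_i=\frac{b}{n}$ and $v_i=\frac{n-b}{n-1}$. Since both $p_i$ and $v_i$ are constant in $i$, they pull out of the sum and $Q=\frac{n-b}{n-1}\cdot\frac{n}{b}\cdot\frac{1}{n^2}\sum_{i=1}^{n}L_i^2=\frac{1}{b}\frac{1}{n}\frac{n-b}{n-1}\sum_{i=1}^{n}L_i^2$.

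Next I would plug this $Q$ into the epoch count $\mathcal{J}=\frac{1}{m\epsilon^2}(24\tilde L+\frac{4mQ}{\tilde L})\Delta$ supplied by Theorem~\ref{th:sarah_Complexity}; this is exactly the claimed $\mathcal{J}=\frac{1}{m\epsilon^2}\bigl(24\tilde L+\frac{4m}{\tilde L}\frac{1}{b}\frac{1}{n}\frac{n-b}{n-1}\sum_{i=1}^{n}L_i^2\bigr)\Delta$. For the IFO count I would start from the per-epoch cost $n+mb$ (one full-gradient pass plus $m$ mini-batches of expected size $b$) appearing in Theorem~\ref{th:sarah_Complexity}, giving the total $\frac{n+mb}{m\epsilon^2}(24\tilde L+\frac{4mQ}{\tilde L})\Delta$. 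Setting $b=m=\sqrt n$ makes $mb=n$, hence $\frac{n+mb}{m}=\frac{2n}{\sqrt n}=2\sqrt n$, and the term $\frac{4mQ}{\tilde L}$ becomes $\frac{4}{\tilde L}\frac{1}{n}\frac{n-\sqrt n}{n-1}\sum_{i=1}^{n}L_i^2\le\frac{4}{\tilde L}\frac{1}{n}\sum_{i=1}^{n}L_i^2$ after bounding $\frac{n-\sqrt n}{n-1}\le 1$. The total therefore collapses to $\frac{2\sqrt n}{\epsilon^2}\bigl(24\tilde L+\frac{4}{\tilde L}\frac{1}{n}\sum_{i=1}^{n}L_i^2\bigr)\Delta$, which is of the stated order $\frac{\sqrt n}{\epsilon^2}$ (the absolute constant $2$ being absorbed), and since $\tilde L$, $\frac{1}{n}\sum_i L_i^2$ and $\Delta$ are $\epsilon$-independent this is $O(\epsilon^{-2})$.

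The only point that merits checking beyond the arithmetic is that the hypotheses underlying Theorem~\ref{th:sarah_Complexity} survive the substitution: the step size $\eta=(4\tilde L+2mQ/\tilde L)^{-1}$ stays positive and the positivity requirement $\frac12(\frac1\eta-2\tilde L)-\frac{mQ}{2\tilde L}>0$ of Lemma~\ref{lemma:x} still holds with the uniform-sampling $Q$ and $b=m=\sqrt n$; both are automatic because $Q>0$ and $\eta$ was chosen precisely so that $\frac{1}{2\eta}=2\tilde L+\frac{mQ}{\tilde L}>\tilde L+\frac{mQ}{\tilde L}$. I would also recall that passing from Theorem~\ref{th:sarah}'s averaged bound on $\frac{1}{m\mathcal{J}}\sum_{j}\sum_{t}E[dist(0,\hat\partial F(x_{t+1}^{(j)}))^2]$ to $E[dist(0,\hat\partial F(x_R))]\le\epsilon$ uses that $x_R$ is uniform over the inner iterates together with Jensen's inequality $E[dist]\le\sqrt{E[dist^2]}$ — the reduction already performed inside Theorem~\ref{th:sarah_Complexity}. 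Thus there is no genuine obstacle here; the corollary is a direct instantiation of the unified complexity bound.
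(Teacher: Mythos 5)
Your proposal is correct and follows essentially the same route as the paper: substitute the uniform-sampling values $p_i=\tfrac{b}{n}$, $v_i=\tfrac{n-b}{n-1}$ into $Q$, instantiate Theorem~\ref{th:sarah_Complexity}, and multiply by the per-epoch cost $n+mb$ with $b=m=\sqrt n$. Your derivation actually yields the honest constant $\tfrac{2\sqrt n}{\epsilon^2}$ (since $\tfrac{n+mb}{m}=2\sqrt n$), matching the appendix version of the bound, whereas the corollary as stated in the main text silently drops that factor of $2$; this does not affect the $O(\epsilon^{-2})$ conclusion.
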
 
	\begin{remark}
		If all $L_i$'s are the same and equal to $L$, the computational complexity is
		$\frac{28L\sqrt{n}}{\epsilon^2} \Delta.$
		Comparing with the results in \cite{xu2019stochastic}, $
		\frac{( \frac{4}{c} + 8c -2)/(1-3c)L\sqrt{n}}{\epsilon^2}  \Delta$,
		where $0< c < \frac{1}{3}$. Because $ (\frac{4}{c} + 8c -2)/(1-3c) \geq 46.67$, our bound is the tightest one so far.
	\end{remark}

	\subsection{ProxSARAH with independent sampling}
	For independent sampling case, we have $v_{i}=1-p_{i}$.
	To minimize $E[dist( 0, \hat{\partial}F( x_T))^2] $, again we need to optimize the following problem for the best $\mathbf{P^t}$:
	\begin{align} \label{optim_sarah_3}
	\min _{\mathbf{p}=\{ p_{i} \in[0,1]| \sum_{i=1}^{n} p_{i}= b \}} \frac{1}{n^2} \sum_{i=1}^n \frac{ 1 }{ p_{i} } L_i^2
	\end{align}

	Based on the KKT condition, the solution $\mathbf{P}^t$ to the above optimization problem is: 
	\begin{align}
	p_{i} :=\left\{\begin{array}{ll}{(b+k-n) \frac{L_{t}}{\sum_{j=1}^{k} L_{j},}} & {\text { if } i \leq k} \\ {1,} & {\text { if } i>k}\end{array}\right.
	\end{align}
	where $k$ is the largest integer satisfying $0< b+ k-n \leq \frac{\sum_{j=1}^{k} L_{j} }{L_{t}}$. If $L_i$'s for $i \in [1, n ]$ significantly differ one another so that 
	$ 1<\frac{\sum_{j=1}^{k} L_{j} }{L_{k}} < 2$ for $k \in [n ]$, then $k = n -b + 1$ and
	\begin{align} \label{p_i}
	p_{i}=\left\{\begin{array}{ll}{ \frac{L_{i}}{\sum_{j=1}^{k} L_{j}},} & {\text { if } i \leq n-b+1} \\ {1,} & {\text { if } i > n-b+1.}\end{array}\right.
	\end{align}
	If $L_i$'s are similar to each other, so $bL_n \leq \sum_{j=1}^{n} L_{j}$, then $k=n$ and $p_{i} =\frac{b L_{i}}{\sum_{j=1}^{n} L_{j}} $ for $ i \in [n]$.
	
	We also obtain the following specific corollary for independent sampling:
	
	\begin{corollary} [Complexity with independent sampling] \label{coro:sarah_ind}
		In order to have $E[dist( 0, \hat{\partial}F( x_R))] \leq \epsilon$, the number of epochs is 
		\begin{align}
		\mathcal{J} &= \frac{1}{m\epsilon^2}(24 \tilde{L} 
		+\frac{4m}{\tilde{L}}  \frac{C_1}{n^2}   \frac{1}{b+k-n}(\sum_{i=1}^{k}  L_{i})^2   \Delta).
		\end{align} 
		If we further assume  $bL_n \leq \sum_{j=1}^{n} L_{j}$, and $b=m=\sqrt{n}$, the number of IFO calls is bounded by 
		$ \frac{\sqrt{n}}{\epsilon^2}(24 \tilde{L} 
		+\frac{4}{\tilde{L}}  \frac{C_1}{n^2} (\sum_{i=1}^{n}  L_{i})^2 \Delta)$,
		so the computational complexity is $O(\frac{1}{\epsilon^2})$.
	\end{corollary}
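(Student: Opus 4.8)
The plan is to specialize Theorem~\ref{th:sarah_Complexity} to the independent sampling scheme by evaluating the quantity $Q=\sum_{i=1}^{n}\frac{v_iL_i^2}{p_in^2}$ at the variance-minimizing probabilities derived from the KKT conditions in Problem~(\ref{optim_sarah_3}). First I would substitute $v_i=1-p_i$, which gives
\begin{align*}
Q=\frac{1}{n^2}\Big(\sum_{i=1}^{n}\frac{L_i^2}{p_i}-\sum_{i=1}^{n}L_i^2\Big).
\end{align*}
Then, plugging in the optimal $p_i=(b+k-n)\frac{L_i}{\sum_{j=1}^{k}L_j}$ for $i\le k$ and $p_i=1$ for $i>k$, the ratio $L_i^2/p_i$ collapses to $\frac{L_i\sum_{j=1}^{k}L_j}{b+k-n}$ on the first block, so $\sum_{i=1}^{n}L_i^2/p_i=\frac{(\sum_{j=1}^{k}L_j)^2}{b+k-n}+\sum_{i>k}L_i^2$ and hence $Q=\frac{1}{n^2}\big(\frac{(\sum_{j=1}^{k}L_j)^2}{b+k-n}-\sum_{i\le k}L_i^2\big)\le\frac{1}{n^2}\frac{(\sum_{i=1}^{k}L_i)^2}{b+k-n}$. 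Before this I would double-check feasibility of these $p_i$ (namely $p_i\in(0,1]$ and $\sum_i p_i=b$), which is exactly what the definition of $k$ as the largest integer with $0<b+k-n\le\frac{\sum_{j=1}^{k}L_j}{L_k}$ guarantees.

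Substituting this bound on $Q$ into the epoch count $\mathcal{J}=\frac{1}{m\epsilon^2}\big(24\tilde{L}+\frac{4mQ}{\tilde{L}}\big)\Delta$ of Theorem~\ref{th:sarah_Complexity} yields the first displayed expression of the corollary. For the second part I would invoke the regime $bL_n\le\sum_{j=1}^{n}L_j$: since $L_n=\max_i L_i$, this makes $k=n$ admissible in the KKT characterization, and since $k$ cannot exceed $n$ we get $k=n$, so $b+k-n=b$, $\sum_{i=1}^{k}L_i=\sum_{i=1}^{n}L_i$, and $Q\le\frac{1}{n^2 b}(\sum_{i=1}^{n}L_i)^2$. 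Setting $b=m=\sqrt{n}$ makes the per-epoch IFO cost $n+mb=2n$, so $\frac{n+mb}{m}=2\sqrt{n}$, while $\frac{4mQ}{\tilde{L}}\le\frac{4}{\tilde{L}}\frac{(\sum_{i=1}^{n}L_i)^2}{n^2}$; multiplying the per-epoch cost by $\mathcal{J}$ gives an IFO bound of order $\frac{\sqrt{n}}{\epsilon^2}\big(24\tilde{L}+\frac{4}{\tilde{L}}\frac{(\sum_{i=1}^{n}L_i)^2}{n^2}\big)\Delta$, and since $n$, $\tilde{L}$ and $\sum_i L_i$ are $\epsilon$-independent this is $O(1/\epsilon^2)$. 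I would also confirm that the step size $\eta=\frac{1}{4\tilde{L}+2mQ/\tilde{L}}$ used in Theorem~\ref{th:sarah} stays admissible for this $Q$: with this choice $\tfrac12(\tfrac1\eta-2\tilde{L})-\tfrac{mQ}{2\tilde{L}}=\tilde{L}+\tfrac{mQ}{2\tilde{L}}>0$, so the hypotheses of the earlier lemmas and Theorem~\ref{th:sarah} hold.

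The substantive work — and the only place anything could go wrong — is the exact evaluation of $\sum_{i=1}^{n}L_i^2/p_i$ at the optimal probabilities together with the accompanying case split on $k$; everything after that is plugging into Theorems~\ref{th:sarah} and~\ref{th:sarah_Complexity}. I therefore expect the main obstacle to be bookkeeping: correctly handling the two-regime structure of the optimal $p_i$ (coordinates with $p_i<1$ versus saturated coordinates with $p_i=1$), and verifying that the hypothesis $bL_n\le\sum_j L_j$ collapses the general bound $\frac{(\sum_{i=1}^{k}L_i)^2}{n^2(b+k-n)}$ to $\frac{(\sum_{i=1}^{n}L_i)^2}{n^2 b}$ — rather than any deeper analytic difficulty.
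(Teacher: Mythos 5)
Your proposal is correct and follows essentially the same route as the paper: the paper states this corollary without a written proof, but its intended derivation (visible in the analogous ProxSGD independent-sampling corollary) is exactly your computation — substitute $v_i=1-p_i$ and the KKT-optimal $p_i$ into $Q=\sum_i \frac{v_iL_i^2}{p_in^2}$, obtain $Q\le\frac{1}{n^2}\frac{(\sum_{i=1}^{k}L_i)^2}{b+k-n}$, and plug into Theorem \ref{th:sarah_Complexity}. Your bookkeeping is in fact slightly more careful than the stated corollary: your bound carries no stray $C_1$ factor (that constant belongs to the ProxSGD analysis and appears in the corollary only as a typo), and you correctly retain $\frac{n+mb}{m}=2\sqrt{n}$ where the paper drops the factor of $2$; neither affects the $O(1/\epsilon^2)$ conclusion.
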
 
	\begin{remark}
		Results in Corollaries \ref{coro:sarah_unif} and \ref{coro:sarah_ind}
		imply that the independent sampling scheme improves the computational complexity, because
		\begin{align*}
		\frac{(\sum_{i=1}^n L_i^2)}{n}/\frac{(\sum_{i=1}^{n} L_{i})^2 }{n^2} = \frac{n \sum_{i=1}^n L_i^2}{(\sum_{i=1}^{n} L_{i})^2} \geq 1.
		\end{align*}
	\end{remark}
	
	\section{The ProxSPIDER with Arbitrary Sampling}
	
	In this section, we further propose a new method, ProxSPIDER-AS 
	to speed up the convergence process of solving non-convex non-smooth regularized problems. We also provide the convergence and computational complexity results under our unified analytic approach.

	The details of ProxSPIDER-AS are given in Algorithm \ref{alg:SPIDER-HT}. The key difference between Algorithm \ref{alg:SARAH-HT} and \ref{alg:SPIDER-HT} is that ProxSPIDER-AS, unlike ProxSARAH-AS, avoids computation of the full gradient, which can be computationally prohibitive for massive datasets. Instead, it calculates a batch gradient over a mini-batch $S^{(j)}$ for variance reduction. Specifically, at the beginning of each outer loop iteration $j$, we estimate the gradient $\sum_{i \in S^{(j)} } \frac{ 1}{ n p'_{i} } \nabla f_{i}(x_0^{(j)})$ over a random subset $S^{(j)}$ with batch size $B$, which are
	sampled from $[n]$ based on an arbitrary distribution $\mathbf{P'}$. In the following inner loop iterations, we construct the stochastic gradient estimator $\mathcal{V}_{t}^{(j)}$ based on a subset data samples $S^{(j)}_t $ draw from $[n]$ according to a probability matrix $\mathbf{P}$, i.e., 
	$\mathcal{V}_{t}^{(j)} = \sum_{i \in S_t^{(j)} } \frac{ 1}{ n p_{i} }  (\nabla  f_{i}(x_{t}^{(j)}) - \nabla f_{i}( x_{t-1}^{(j)} ) ) + \mathcal{V}_{t-1}^{(j)}$. In order to handle the possible non-smoothness, we then perform a proximal gradient step to update the variable, i.e., $x_{t+1}^{(j)} = prox_{\eta r}[ x_{t}^{(j)} - \eta \mathcal{V}_{t}^{(j)} ]$.


	\begin{algorithm}[!htb]
		\caption {ProxSPIDER-AS}
		\label{alg:SPIDER-HT} 
		\begin{algorithmic}[1]
			\Require Number of outer loops $\mathcal{J}$, inner loop $m$,  initial state $\tilde{x}^1$, stepsize $\eta$, probability matrices $\mathbf{P}$, $\mathbf{P'}$
			\For{$j=1,2,\dots,\mathcal{J}$}
			\State $x_0^{(j)} = \tilde{x}^{(j)}$
			\State Draw a random subset $S^{(j)} \subset \{1,...,n\}$ with
			\State  size $B$, according to $\mathbf{P'}$
			\State $\mathcal{V}_{0}^{(j)}=  \sum_{i \in S^{(j)} } \frac{ 1}{ n p'_{i} } \nabla f_{i}(x_0^{(j)})$ 
			\State $x_1^{(j)} = x_0^{(j)}$
			\For{$t=1,2,\dots, m$} 
			\State Draw a random subset $S^{(j)}_t \subset \{1,...,n\}$
			\State with size $b$, according to $\mathbf{P}$
			\State { $\mathcal{V}_{t}^{(j)} = \sum_{i \in S_t^{(j)} } \frac{ 1}{ n p_{i} }  (\nabla  f_{i}(x_{t}^{(j)}) - \nabla f_{i}( x_{t-1}^{(j)} ) ) $
				\State $+ \mathcal{V}_{t-1}^{(j)}$ }
			\State $x_{t+1}^{(j)} = prox_{\eta r}[ x_{t}^{(j)} - \eta \mathcal{V}_{t}^{(j)} ]$
			\EndFor \label{alg:inner_e}
			\State set $\tilde{x}^{j+1} = x_{m+1}^{(j)}  $
			\EndFor\\
			\Return $x_{R}$, where $x_{R}$
			is uniformly sampled from $\{x^{(1)}_1, \ldots, x^{(\mathcal{J})}_m\}$
		\end{algorithmic}
	\end{algorithm}
	
	\subsection{Unified analysis of ProxSPIDER-AS}
	In this subsection, we will provide the general convergence analysis for the ProxSPIDER-AS, which can cover any proper sampling.
	Before showing the convergence result of the ProxSPIDER-AS, we first provide the following preparation lemmas to help the understanding of main theorem.  Detailed proof can be found in supplemental material.
	
	
	
	\begin{lemma}\label{lemma:Q2}
		Suppose that Assumption \ref{assumption4f} holds and consider updating formula in ProxSPIDER-AS: $\mathcal{V}_{t}^{(j)} = \sum_{i \in S_t^{(j)} } \frac{ 1}{ n p_{i} }  (\nabla  f_{i}(x_{t}^{(j)}) - \nabla f_{i}( x_{t-1}^{(j)} ) ) + \mathcal{V}_{t-1}^{(j)}$,  $Q=\sum_{i=1}^{n} \frac{v_iL_i^2}{p_i n^2}$, then $\forall 1 \leq t\leq m$, we have  for any $j\geq 1$
		\begin{align*} 
		E[\|\mathcal{V}_t^{(j)} -  \nabla f( x_{t}^{(j)}) \|^2] &\leq Q \sum_{k=1}^{t} E[\|x_{k}^{(j)}-x_{k-1}^{(j)}\|^2]+ Q'.
		\end{align*}
		where $Q=\sum_{i=1}^{n} \frac{v_iL_i^2}{p_i n^2}$ and $Q' = \sum_{i=1}^n \frac{ v'_i G_i^2 }{ p'_{i} n^2 }  $. 
	\end{lemma}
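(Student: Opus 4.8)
The plan is to reuse the recursive telescoping argument behind Lemma~\ref{lemma:Q}, the only genuinely new feature being the nonzero error of the subsampled snapshot gradient $\mathcal{V}_0^{(j)}$, which produces the extra additive term $Q'$. Fix an epoch $j$ and set $e_t := \mathcal{V}_t^{(j)} - \nabla f(x_t^{(j)})$. Using the update rule for $\mathcal{V}_t^{(j)}$ and the identity $\nabla f(x) - \nabla f(y) = \frac1n\sum_{i=1}^n(\nabla f_i(x) - \nabla f_i(y))$, I would first write the recursion
\begin{align*}
e_t = e_{t-1} + \Bigg( \sum_{i\in S_t^{(j)}} \frac{1}{np_i}\big(\nabla f_i(x_t^{(j)}) - \nabla f_i(x_{t-1}^{(j)})\big) - \big(\nabla f(x_t^{(j)}) - \nabla f(x_{t-1}^{(j)})\big) \Bigg).
\end{align*}

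Let $\mathcal{F}_{t-1}$ denote the $\sigma$-algebra generated by all randomness up through the construction of $x_t^{(j)}$. Then $e_{t-1}$, $x_t^{(j)}$ and $x_{t-1}^{(j)}$ are $\mathcal{F}_{t-1}$-measurable and $S_t^{(j)}$ is a fresh draw, so the bracketed vector above has zero conditional mean (unbiasedness of the arbitrary-sampling estimator) and the cross term drops out of $E[\|e_t\|^2\mid\mathcal{F}_{t-1}]$. Applying the arbitrary-sampling variance bound (\ref{ineq:xi}) with $\xi_i=\nabla f_i(x_t^{(j)})-\nabla f_i(x_{t-1}^{(j)})$ and then the $L_i$-smoothness bound $\|\xi_i\|^2\le L_i^2\|x_t^{(j)}-x_{t-1}^{(j)}\|^2$ gives
\begin{align*}
E[\|e_t\|^2\mid\mathcal{F}_{t-1}] \le \|e_{t-1}\|^2 + \Bigg(\sum_{i=1}^n\frac{v_iL_i^2}{p_i n^2}\Bigg)\|x_t^{(j)}-x_{t-1}^{(j)}\|^2 = \|e_{t-1}\|^2 + Q\,\|x_t^{(j)}-x_{t-1}^{(j)}\|^2.
\end{align*}
Taking total expectations and telescoping from $t$ down to $1$ (the $k=1$ term vanishes since $x_1^{(j)}=x_0^{(j)}$) yields $E[\|e_t\|^2]\le E[\|e_0\|^2] + Q\sum_{k=1}^t E[\|x_k^{(j)}-x_{k-1}^{(j)}\|^2]$.

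Finally I would bound the initialization error, which is where ProxSPIDER-AS departs from ProxSARAH-AS: here $\mathcal{V}_0^{(j)} = \sum_{i\in S^{(j)}}\frac{1}{np'_i}\nabla f_i(x_0^{(j)})$ is only a mini-batch estimate of $\nabla f(x_0^{(j)})$. Applying (\ref{ineq:xi}) once more, now with $\xi_i=\nabla f_i(x_0^{(j)})$, probabilities $p'_i$ and weights $v'_i$, and then the $G_i$-Lipschitz bound $\|\nabla f_i(x_0^{(j)})\|\le G_i$, gives $E[\|e_0\|^2]\le \frac{1}{n^2}\sum_{i=1}^n\frac{v'_i}{p'_i}\|\nabla f_i(x_0^{(j)})\|^2 \le \sum_{i=1}^n\frac{v'_iG_i^2}{p'_i n^2} = Q'$, which combined with the telescoped bound proves the lemma. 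I expect the only subtle point to be the measurability bookkeeping in the middle step — confirming that inside an inner loop $x_t^{(j)}$ and $x_{t-1}^{(j)}$ are determined \emph{before} $S_t^{(j)}$ is drawn, so that the increment is conditionally unbiased and the cross terms vanish; the rest is a routine double use of the sampling inequality together with Assumption~\ref{assumption4f}.
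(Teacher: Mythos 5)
Your proposal is correct and follows essentially the same route as the paper: the paper's own proof (via its Lemma \ref{lemma:v_f2}) expands $E[\|\mathcal{V}_t^{(j)}-\nabla f(x_t^{(j)})\|^2]$ into a telescoping sum of increment variances plus the initial error $E[\|\mathcal{V}_0^{(j)}-\nabla f(x_0^{(j)})\|^2]$, bounds each increment by $Q\|x_k^{(j)}-x_{k-1}^{(j)}\|^2$ using the arbitrary-sampling variance inequality with $L_i$-smoothness, and bounds the initial error by $Q'$ using the same inequality with the $G_i$-Lipschitz bound — exactly your three steps, merely organized as an exact martingale identity rather than your one-step conditional-variance recursion. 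No gaps.
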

	
	\begin{lemma} \label{lemma:x2}
		Suppose that Assumption \ref{assumption4f} holds, for  $j\geq 1$, we have $\sum_{t=1}^m  E[\|x_{t+1}^{(j)}-x_{t}^{(j)}\|^2] \leq 
		\frac{1}{ \frac{1}{2}(\frac{1}{\eta} -2\tilde{L} ) - \frac{mQ}{2\tilde{L}}} E[ F( x_0^{(j)}  )  - F( x_{m+1}^{(j)} ) ],$ where $\frac{1}{2}(\frac{1}{\eta} -2\tilde{L} ) - \frac{mQ}{2\tilde{L}} > 0$.
	\end{lemma}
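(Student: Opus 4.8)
The plan is to mirror the proof of Lemma~\ref{lemma:x} for ProxSARAH-AS, because the proximal update $x_{t+1}^{(j)} = prox_{\eta r}[x_t^{(j)} - \eta \mathcal{V}_t^{(j)}]$ is structurally identical in the two algorithms; the only change is that the gradient estimator now obeys the bound in Lemma~\ref{lemma:Q2} instead of Lemma~\ref{lemma:Q}. First I would invoke Lemma~\ref{lemma:F} --- whose derivation uses only the proximal step and the $\tilde{L}$-smoothness of $f$ (Lemma~\ref{f_L_smooth}), so it applies here verbatim --- and then lower-bound the cross term $\langle \mathcal{V}_t^{(j)} - \nabla f(x_t^{(j)}), x_{t+1}^{(j)} - x_t^{(j)}\rangle$ by Young's inequality with weight $\tilde{L}$. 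After substitution and cancellation this gives, for each $1 \leq t \leq m$,
\begin{align*}
\frac12\Big(\frac1\eta - 2\tilde{L}\Big)\|x_{t+1}^{(j)} - x_t^{(j)}\|^2 \leq F(x_t^{(j)}) - F(x_{t+1}^{(j)}) + \frac{1}{2\tilde{L}}\|\mathcal{V}_t^{(j)} - \nabla f(x_t^{(j)})\|^2 .
\end{align*}

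Then I would take expectations, sum over $t = 1, \dots, m$ (the function values telescope, and since $x_1^{(j)} = x_0^{(j)}$ the left endpoint is $F(x_0^{(j)})$), and plug in Lemma~\ref{lemma:Q2}. Using the elementary inequality $\sum_{t=1}^m \sum_{k=1}^t E[\|x_k^{(j)} - x_{k-1}^{(j)}\|^2] \leq m\sum_{t=1}^m E[\|x_{t+1}^{(j)} - x_t^{(j)}\|^2]$ (again exploiting $\|x_1^{(j)} - x_0^{(j)}\| = 0$) converts the variance sum into $mQ$ times $\sum_{t=1}^m E[\|x_{t+1}^{(j)} - x_t^{(j)}\|^2]$, plus the $Q'$ contribution. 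Moving the $\frac{mQ}{2\tilde{L}}$ multiple of $\sum_t E[\|x_{t+1}^{(j)} - x_t^{(j)}\|^2]$ to the left-hand side and dividing by the coefficient $\frac12(\frac1\eta - 2\tilde{L}) - \frac{mQ}{2\tilde{L}}$, which the hypothesis asserts is positive, produces the claimed bound.

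The main obstacle is the extra additive constant $Q' = \sum_{i=1}^n \frac{v_i' G_i^2}{p_i' n^2}$ in Lemma~\ref{lemma:Q2}, which has no analogue in the ProxSARAH-AS case: because ProxSPIDER-AS estimates the reference gradient at the start of each epoch from a size-$B$ mini-batch $S^{(j)}$ rather than the full sample, a term proportional to $mQ'/(2\tilde{L})$ is added on the right after summing. To arrive at exactly the stated form one must dominate this constant --- e.g.\ by taking the outer batch size $B$ large enough that $Q'$ is negligible relative to the telescoping term $E[F(x_0^{(j)}) - F(x_{m+1}^{(j)})]$ --- or else carry it along explicitly. I would pin down that choice first, after which the remaining steps are exactly the routine algebra sketched above.
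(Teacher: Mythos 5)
Your route is exactly the paper's: Lemma~\ref{lemma:F} plus Young's inequality with weight $\tilde{L}$ gives the per-step descent inequality you wrote, then expectation, Lemma~\ref{lemma:Q2}, the double-sum bound $\sum_{t=1}^m\sum_{k=1}^t E[\|x_k^{(j)}-x_{k-1}^{(j)}\|^2]\le m\sum_{t=1}^m E[\|x_{t+1}^{(j)}-x_t^{(j)}\|^2]$ (using $x_1^{(j)}=x_0^{(j)}$), and a rearrangement. Up to that point you and the paper agree step for step.

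The obstacle you flag at the end is genuine, and you are right not to wave it away: carrying the algebra honestly, the summed inequality reads
\begin{align*}
E[F(x_{m+1}^{(j)})-F(x_0^{(j)})] \le \Big(\tfrac{mQ}{2\tilde{L}}-\tfrac12\big(\tfrac1\eta-2\tilde{L}\big)\Big)\sum_{t=1}^m E[\|x_{t+1}^{(j)}-x_t^{(j)}\|^2] + \tfrac{mQ'}{2\tilde{L}},
\end{align*}
so after dividing by the positive coefficient $A:=\frac12(\frac1\eta-2\tilde{L})-\frac{mQ}{2\tilde{L}}$ the $Q'$ contribution lands on the right-hand side with a \emph{plus} sign, i.e.\ one obtains $\sum_{t=1}^m E[\|x_{t+1}^{(j)}-x_t^{(j)}\|^2]\le \frac1A E[F(x_0^{(j)})-F(x_{m+1}^{(j)})] + \frac1A\frac{mQ'}{2\tilde{L}}$. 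The paper's own proof writes this last term with a minus sign and then drops it as nonpositive, which is a sign error; the stated lemma does not follow without either carrying the extra additive term $\frac{mQ'}{2\tilde{L}A}$ explicitly or imposing a condition on the outer batch (as you suggest) that controls $Q'$. So your proposal is incomplete only in the sense that you leave this choice open, but your diagnosis is correct and the "routine algebra" you defer to cannot, by itself, produce the statement as written. (Carrying the extra term forward only changes the constant in front of $Q'$ in Theorem~\ref{th:spider} and leaves the complexity corollaries intact, since there $B$ is chosen to make $Q'=O(\epsilon^2)$ anyway.)
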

	
	With the above preparations, we are able to derive the following generic convergence result of the ProxSPIDER-AS for any proper sampling using $\Delta$, $Q$ and $Q'$, where $\Delta = F( \tilde{x}^1 ) - F( x^*)$, $Q=\sum_{i=1}^{n} \frac{v_iL_i^2}{p_i n^2}$, and $Q'=\sum_{i=1}^{n} \frac{v_iG_i^2}{p_i n^2}$.
	
	\begin{theorem}\label{th:spider} (Convergence guarantee for ProxSPIDER-AS)
		Given Problem (\ref{finite_sum_regularized}), under Assumption 1, 
		let $\eta = 1/( 4\tilde{L}  + 2mQ/\hat{L})$,  then for $\forall 1 \leq t\leq m$, $j\geq 1$, the ProxSPIDER-AS (Algorithm \ref{alg:SPIDER-HT}) achieves
		\begin{align*}
		\frac{1}{m\mathcal{J}}\sum_{j=1}^{\mathcal{J}}\sum_{t=1}^m & E[dist( 0, \hat{\partial}F( x_{t+1}^{(j)}))^2]  \\
		&\leq  \frac{1}{m\mathcal{J}}(24 \tilde{L} 
		+\frac{4mQ}{\tilde{L}} ) \Delta+ 2Q'. 
		\end{align*}
		
	\end{theorem}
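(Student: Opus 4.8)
\textit{Proof proposal.} The plan is to follow the argument of Theorem~\ref{th:sarah} almost verbatim; the only structural change is that the biased mini-batch surrogate $\mathcal{V}_0^{(j)}$ for the full gradient replaces Lemma~\ref{lemma:Q} by Lemma~\ref{lemma:Q2}, thereby injecting the additive error $Q'$. First I would start from the subgradient inclusion~(\ref{eq:6}), which holds verbatim for ProxSPIDER-AS since the inner-loop update $x_{t+1}^{(j)} = prox_{\eta r}[x_t^{(j)} - \eta\mathcal{V}_t^{(j)}]$ is identical, so that $dist(0,\hat{\partial}F(x_{t+1}^{(j)}))^2 \le \|\nabla f(x_{t+1}^{(j)}) - \mathcal{V}_t^{(j)} - \tfrac1\eta(x_{t+1}^{(j)} - x_t^{(j)})\|^2$. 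Expanding the square as in~(\ref{eq:60}), using Lemma~\ref{lemma:F} to replace $-\langle \nabla f(x_{t+1}^{(j)}) - \mathcal{V}_t^{(j)}, x_{t+1}^{(j)}-x_t^{(j)}\rangle$ by $F(x_t^{(j)}) - F(x_{t+1}^{(j)})$ minus curvature terms, and then applying $\tilde{L}$-smoothness of $f$ together with Young's inequality, gives the same per-iteration bound as in the ProxSARAH-AS proof: $dist(0,\hat{\partial}F(x_{t+1}^{(j)}))^2 \le 2\|\mathcal{V}_t^{(j)} - \nabla f(x_t^{(j)})\|^2 + (2\tilde{L}^2 + 3\tilde{L}/\eta)\|x_{t+1}^{(j)} - x_t^{(j)}\|^2 + \tfrac{2}{\eta}(F(x_t^{(j)}) - F(x_{t+1}^{(j)}))$.

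Next I would sum over $t=1,\dots,m$ inside outer loop $j$ and take expectations. This is the one place where $Q'$ enters: by Lemma~\ref{lemma:Q2} each $2E[\|\mathcal{V}_t^{(j)} - \nabla f(x_t^{(j)})\|^2] \le 2Q\sum_{k=1}^t E[\|x_k^{(j)}-x_{k-1}^{(j)}\|^2] + 2Q'$, and since $x_0^{(j)}=x_1^{(j)}$ the resulting double sum is at most $Qm\sum_{t=1}^m E[\|x_{t+1}^{(j)}-x_t^{(j)}\|^2]$, so summation yields $\sum_{t=1}^m E[dist(0,\hat{\partial}F(x_{t+1}^{(j)}))^2] \le (2Qm + 2\tilde{L}^2 + 3\tilde{L}/\eta)\sum_{t=1}^m E[\|x_{t+1}^{(j)}-x_t^{(j)}\|^2] + \tfrac{2}{\eta}E[F(x_0^{(j)}) - F(x_{m+1}^{(j)})] + 2mQ'$. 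Now I would bound $\sum_{t=1}^m E[\|x_{t+1}^{(j)}-x_t^{(j)}\|^2]$ via Lemma~\ref{lemma:x2} by $\big(\tfrac12(\tfrac1\eta - 2\tilde{L}) - \tfrac{mQ}{2\tilde{L}}\big)^{-1}E[F(x_0^{(j)}) - F(x_{m+1}^{(j)})]$ and take $\eta$ so that $\tfrac1{2\eta} = 2\tilde{L} + \tfrac{mQ}{\tilde{L}}$, i.e. $\eta = 1/(4\tilde{L} + 2mQ/\tilde{L})$; this makes the denominator equal to $\tilde{L} + \tfrac{mQ}{2\tilde{L}} > \tilde{L}$ (so Lemma~\ref{lemma:x2} applies) and, after adding $\tfrac{2}{\eta}$, compresses the constant $(2Qm + 2\tilde{L}^2 + 3\tilde{L}/\eta)/(\tilde{L} + mQ/(2\tilde{L})) + 2/\eta$ to at most $24\tilde{L} + 4mQ/\tilde{L}$, exactly as in Theorem~\ref{th:sarah}. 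This leaves $\sum_{t=1}^m E[dist(0,\hat{\partial}F(x_{t+1}^{(j)}))^2] \le (24\tilde{L} + 4mQ/\tilde{L})E[F(x_0^{(j)}) - F(x_{m+1}^{(j)})] + 2mQ'$.

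Finally I would sum over $j=1,\dots,\mathcal{J}$ and divide by $m\mathcal{J}$. Since $x_0^{(j)} = \tilde{x}^{(j)}$ and $\tilde{x}^{j+1} = x_{m+1}^{(j)}$, the differences $F(x_0^{(j)}) - F(x_{m+1}^{(j)})$ telescope to $F(\tilde{x}^1) - F(\tilde{x}^{\mathcal{J}+1}) \le F(\tilde{x}^1) - F(x^*) = \Delta$, while the $2mQ'$ terms contribute $\tfrac{1}{m\mathcal{J}}\cdot\mathcal{J}\cdot 2mQ' = 2Q'$, giving the claimed bound $\tfrac{1}{m\mathcal{J}}(24\tilde{L} + 4mQ/\tilde{L})\Delta + 2Q'$. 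I expect the main obstacle to be bookkeeping rather than any single deep step: one must verify that the chosen $\eta$ keeps $\tfrac12(\tfrac1\eta - 2\tilde{L}) - \tfrac{mQ}{2\tilde{L}}$ strictly positive while simultaneously collapsing the leading constant to exactly $24\tilde{L} + 4mQ/\tilde{L}$, and---conceptually---recognize that, unlike ProxSARAH-AS, the mini-batch full-gradient surrogate contributes a variance $Q'$ that does not decay along the inner loop and survives every averaging step, so $2Q'$ is an irreducible bias that must later be driven below $\epsilon^2$ by choosing the outer batch size $B$ (through the probabilities $p'_i$) large enough.
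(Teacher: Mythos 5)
Your proposal is correct and follows essentially the same route as the paper's proof: the identical per-iteration bound $dist(0,\hat{\partial}F(x_{t+1}^{(j)}))^2 \le 2\|\mathcal{V}_t^{(j)}-\nabla f(x_t^{(j)})\|^2 + (2\tilde{L}^2+3\tilde{L}/\eta)\|x_{t+1}^{(j)}-x_t^{(j)}\|^2 + \tfrac{2}{\eta}(F(x_t^{(j)})-F(x_{t+1}^{(j)}))$, summation with Lemma \ref{lemma:Q2} injecting the $2mQ'$ term, Lemma \ref{lemma:x2} for the displacement sum, the same stepsize choice collapsing the constant to $24\tilde{L}+4mQ/\tilde{L}$, and telescoping over $j$. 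Your closing observation that $2Q'$ is a non-decaying bias controlled only through the outer batch size $B$ matches how the paper's subsequent corollaries choose $B$.
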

	
	\textit{Proof Sketch.}
	Let's try to bound $dist( 0, \hat{\partial}F( x_{t+1}^{(j)}  ))$. Similar with the analysis of ProxSARAH-AS,
	\begin{align*}
	&dist( 0, \hat{\partial}F( x_{t+1}^{(j)}  ))^2 
	\leq 2\|\mathcal{V}_t^{(j)} -  \nabla f( x_{t}^{(j)}) \|^2 \\&  +  ( 2 \tilde{L}^2 +\frac{3\tilde{L}}{\eta})\| x_{t+1}^{(j)}  - x_t^{(j)} \|^2 +\frac{2}{\eta} ( F( x_t^{(j)}  ) - F( x_{t+1}^{(j)})), 
	\end{align*}
	
	By summing over $t = 1, ..., m$, using the result in Lemma \ref{lemma:Q2}, and taking the expectation, 
	\begin{align*}
	& \sum_{t=1}^m E[dist( 0, \hat{\partial}F( x_{t+1}^{(j)}))^2]   \\ 
	& \leq  (2Qm + 2\tilde{L}^2+\frac{3\tilde{L}}{\eta})
	\sum_{t=1}^m E[\| x_{t+1}^{(j)}  - x_t^{(j)} \|^2] \\&+\frac{2}{\eta}  E[F( x_1^{(j)}  ) - F( x_{m+1}^{(j)})] + 2mQ' 
	\end{align*}
	
	Next, plugging in  Lemma \ref{lemma:x2} for the term $\sum_{t=1}^m E[\| x_{t+1}^{(j)}  - x_t^{(j)} \|^2] $,
	\begin{align*}
	& \sum_{t=1}^m E[dist( 0, \hat{\partial}F( x_{t+1}^{(j)}))^2]  \\ 
	& \leq (\frac{2Qm + 2\tilde{L}^2+\frac{3\tilde{L}}{\eta}}{ \frac{1}{2}(\frac{1}{\eta} -2\tilde{L} ) - \frac{mQ}{2\tilde{L}}} 
	+\frac{2}{\eta} ) E[F( x_0^{(j)}  ) - F( x_{m+1}^{(j)})] +2mQ'.
	\end{align*}
	
	Therefore:
	\begin{align*}
	&\frac{1}{m\mathcal{J}}\sum_{j=1}^{\mathcal{J}}\sum_{t=1}^m E[dist( 0, \hat{\partial}F( x_{t+1}^{(j)}))^2] \\& \leq
	\frac{1}{m\mathcal{J}}(\frac{2Qm + 2\tilde{L}^2+\frac{3\tilde{L}}{\eta}}{ \frac{1}{2}(\frac{1}{\eta} -2\tilde{L} ) - \frac{mQ}{2\tilde{L}}} 
	+\frac{2}{\eta} ) E[F( \tilde{x}^1  ) - F( x^*)] + 2Q'. 
	\end{align*}
	Let $\frac{1}{2}\frac{1}{\eta} = 2\tilde{L}  + \frac{mQ}{\tilde{L}}$,then $\frac{1}{2}\frac{1}{\eta} > \tilde{L}  + \frac{mQ}{\tilde{L}}$ and we get the final result.
	\qed
	
	For simplicity, we use the same sampling scheme, either the uniform sampling or independent sampling, for drawing both $S^{(j)}$ and $S^{(j)}_t$.
	
	\begin{figure*}[h]
		\centering
		\includegraphics[trim=80 20 80 0,clip,width=0.9\textwidth]{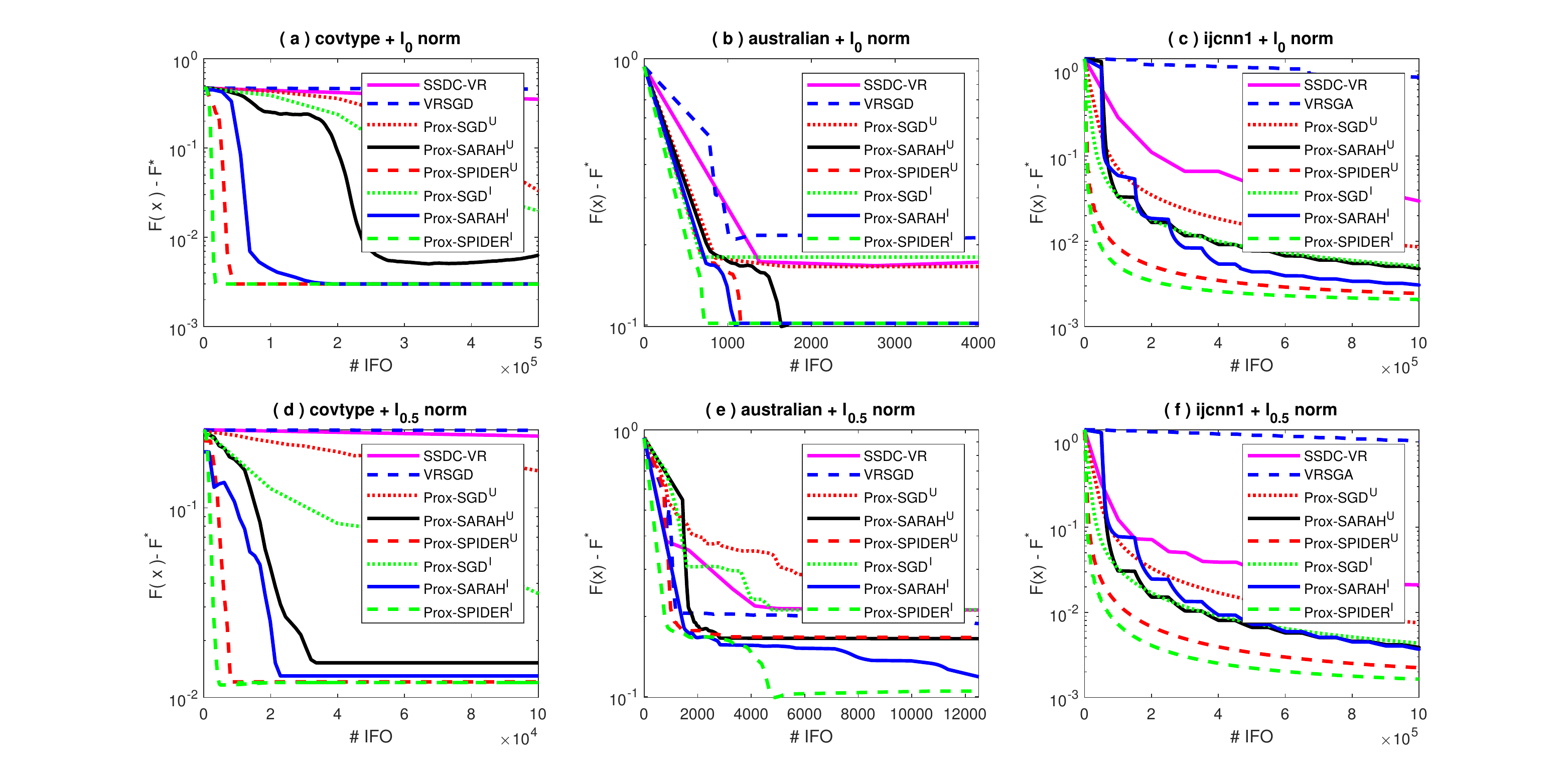}
		\caption{Comparisons of Prox-SGD$^U$, Prox-SARAH$^U$, Prox-SPIDER$^U$, Prox-SGD$^I$,  Prox-SARAH$^I$ and  Prox-SPIDER$^I$,  with SSDC-VR and VRSGD on $l_0$ and $l_{0.5}$ regularized problems. $F^*$ is a lower bound of function $F(x)$.
		}
		\label{fig}
		\vspace{-0.5cm}
	\end{figure*}
	
	\subsection{ProxSPIDER with uniform and independent sampling }
	Again, we obtain the following corollaries on computational complexity if a specific sampling scheme is used.
	
	\begin{corollary}[Complexity with uniform sampling]
		Given Problem (\ref{finite_sum_regularized}), under Assumption 1, if $B =\frac{2}{n\epsilon^2}(\sum_{i=1}^n G_i^2) $, $m = b = \sqrt{B}$, $\Delta = F( \tilde{x}^1 ) - F( x^*)$, then the number of IFO calls to achieve $E[dist( 0, \hat{\partial}F( x_R))] \leq \epsilon$ is bounded by 
		$
		\frac{2\sqrt{B}}{\epsilon^2}(24 \tilde{L} 
		+\frac{4}{\tilde{L}} \frac{1}{n}(\sum_{i=1}^n L_i^2) ) \Delta,
		$ and the computational complexity is $O(\frac{1}{\epsilon^3})$. 
	\end{corollary}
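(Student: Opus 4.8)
The plan is to specialize the generic convergence bound of Theorem~\ref{th:spider} to the uniform sampling scheme and then tune the batch sizes $B$, $b$ and the inner-loop length $m$ to push the bound below the target accuracy while minimizing the total IFO count, exactly paralleling the complexity derivation behind Theorem~\ref{th:sarah_Complexity}. Since $x_R$ is drawn uniformly from the iterates $\{x_1^{(1)},\dots,x_m^{(\mathcal{J})}\}$, the quantity $E[dist(0,\hat{\partial}F(x_R))^2]$ equals the average $\frac{1}{m\mathcal{J}}\sum_{j=1}^{\mathcal{J}}\sum_{t=1}^{m}E[dist(0,\hat{\partial}F(x_{t+1}^{(j)}))^2]$ appearing in Theorem~\ref{th:spider} up to a lower-order boundary term (one uses $x_{m+1}^{(j)}=\tilde{x}^{j+1}=x_1^{(j+1)}$ to identify the two index ranges). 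Hence $E[dist(0,\hat{\partial}F(x_R))^2]\le\frac{1}{m\mathcal{J}}\bigl(24\tilde{L}+\frac{4mQ}{\tilde{L}}\bigr)\Delta+2Q'$.

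Next I would substitute the uniform-sampling quantities $p_i=b/n$, $v_i=\frac{n-b}{n-1}$ and $p_i'=B/n$, $v_i'=\frac{n-B}{n-1}$ into $Q=\sum_i\frac{v_iL_i^2}{p_in^2}$ and $Q'=\sum_i\frac{v_i'G_i^2}{p_i'n^2}$, obtaining $Q=\frac{n-b}{n-1}\cdot\frac{\sum_{i=1}^nL_i^2}{bn}\le\frac{\sum_{i=1}^nL_i^2}{bn}$ and $Q'=\frac{n-B}{n-1}\cdot\frac{\sum_{i=1}^nG_i^2}{Bn}\le\frac{\sum_{i=1}^nG_i^2}{Bn}$, using $\frac{n-b}{n-1}\le1$ and $\frac{n-B}{n-1}\le1$ (in the regime $B\ge n$ one just takes $B=n$, $Q'=0$, which collapses to the ProxSARAH bound). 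Choosing $B=\frac{2}{n\epsilon^2}\sum_{i=1}^nG_i^2$ gives $2Q'\le\epsilon^2$, and then setting $m=b=\sqrt{B}$ gives $\frac{4mQ}{\tilde{L}}\le\frac{4}{\tilde{L}n}\sum_{i=1}^nL_i^2$, so taking $\mathcal{J}=\frac{1}{m\epsilon^2}\bigl(24\tilde{L}+\frac{4mQ}{\tilde{L}}\bigr)\Delta$ forces the first term below $\epsilon^2$ as well. Thus $E[dist(0,\hat{\partial}F(x_R))^2]\le2\epsilon^2$, and Jensen's inequality $E[dist]\le\sqrt{E[dist^2]}$ yields the advertised $\epsilon$-accuracy (after the usual rescaling of $\epsilon$ by $\sqrt{2}$).

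Finally I would count IFO calls. Each outer iteration spends $B$ oracle calls computing $\mathcal{V}_0^{(j)}$ over $S^{(j)}$, plus, under the convention of Theorem~\ref{th:sarah_Complexity}, $b$ calls per inner step for the SARAH-type update $\nabla f_i(x_t^{(j)})-\nabla f_i(x_{t-1}^{(j)})$, $i\in S_t^{(j)}$, i.e. $mb$ over the inner loop; with $m=b=\sqrt{B}$ the per-epoch cost is $B+mb=2B$. Multiplying by $\mathcal{J}$ and inserting its value gives total cost $\frac{2B}{\sqrt{B}\,\epsilon^2}\bigl(24\tilde{L}+\frac{4}{\tilde{L}n}\sum_{i=1}^nL_i^2\bigr)\Delta=\frac{2\sqrt{B}}{\epsilon^2}\bigl(24\tilde{L}+\frac{4}{\tilde{L}n}\sum_{i=1}^nL_i^2\bigr)\Delta$; since $\sqrt{B}=\Theta(1/\epsilon)$, this is $O(1/\epsilon^3)$.

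The main obstacle is not a single hard estimate but the balancing act inherent to SPIDER-type analysis: $B$ must scale like $1/\epsilon^2$ to annihilate the irreducible $2Q'$ term of Theorem~\ref{th:spider}, which inflates the per-epoch cost, while the choice $m=b=\sqrt{B}$ is precisely what keeps the variance factor $mQ$ bounded by an $\epsilon$-independent constant and the per-epoch cost at $\Theta(B)$ — so the bookkeeping has to be done carefully so that both error contributions and all $\sqrt{B}$ factors land on the advertised constants (the exact leading constant depends on how the error budget is split between the two terms and on the IFO-per-inner-step convention). One secondary point to check is that the step size $\eta$ from Theorem~\ref{th:spider} is well defined and that the positivity condition $\frac12\bigl(\frac1\eta-2\tilde{L}\bigr)-\frac{mQ}{2\tilde{L}}>0$ required by Lemma~\ref{lemma:x2} holds automatically: the choice $\frac{1}{2\eta}=2\tilde{L}+mQ/\tilde{L}$ gives $\frac12\bigl(\frac1\eta-2\tilde{L}\bigr)=\tilde{L}+mQ/\tilde{L}>\frac{mQ}{2\tilde{L}}$.
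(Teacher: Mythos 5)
Your proposal is correct and follows essentially the same route as the paper's own proof: specialize Theorem~\ref{th:spider} with the uniform-sampling values $v_i=\frac{n-b}{n-1}$, $p_i=\frac{b}{n}$ in $Q$ and $Q'$, pick $B$ to absorb the $2Q'$ term, set $m=b=\sqrt{B}$ and solve for $\mathcal{J}$, then multiply the per-epoch cost $B+mb=2B$ by $\mathcal{J}$. You are in fact slightly more careful than the paper about the index shift for $x_R$ and about how the $\epsilon^2$ budget split affects the leading constant (the paper's own choices of $B$ and $\mathcal{J}$ actually yield $4\sqrt{B}/\epsilon^2$ rather than the stated $2\sqrt{B}/\epsilon^2$, a constant-factor slip your accounting makes explicit).
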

	
	
	For the independent sampling scheme, we further assume there is no significant difference in $G_i$'s and $L_i$'s (  i.e., $bG_n \leq \sum_{j=1}^{n} G_{j}$  and  $bL_n \leq \sum_{j=1}^{n} L_{j}$ ), we obtain the following result.
	
	\begin{corollary}[Complexity with independent sampling]
		Given Problem (\ref{finite_sum_regularized}), under Assumption 1, if $B =\frac{2}{n^2\epsilon^2}(\sum_{i=1}^n G_i)^2 $, $m = b = \sqrt{B}$, $\Delta = F( \tilde{x}^1 ) - F( x^*)$, then the number of IFO calls required to achieve $E[dist( 0, \hat{\partial}F( x_R))] \leq \epsilon$ is bounded by
		$
		\frac{2\sqrt{B}}{\epsilon^2}(24 \tilde{L} 
		+\frac{4}{\tilde{L}} \frac{1}{n^2 }(\sum_{i=1}^n L_i)^2 ) \Delta.
		$ so the computational complexity is $O(\frac{1}{\epsilon^3})$.
	\end{corollary}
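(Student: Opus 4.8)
The plan is to specialize the generic convergence bound of Theorem~\ref{th:spider} to the independent sampling scheme and then balance the inner batch size $b$, the inner loop length $m$, the outer batch size $B$, and the number of epochs $\mathcal{J}$ against the target accuracy $\epsilon$. Since $x_R$ is drawn uniformly from the iterates, the left-hand side of Theorem~\ref{th:spider} is exactly $E[dist(0,\hat{\partial}F(x_R))^2]$, so by Jensen's inequality $E[dist(0,\hat{\partial}F(x_R))]\le\sqrt{E[dist(0,\hat{\partial}F(x_R))^2]}$; it therefore suffices to drive the right-hand side of Theorem~\ref{th:spider} below $\epsilon^2$ (up to a universal constant that is absorbed into the final $O(\cdot)$).

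First I would evaluate $Q$ and $Q'$ under independent sampling. There $v_i=1-p_i$ and $v'_i=1-p'_i$, and the hypotheses $bL_n\le\sum_j L_j$ and $BG_n\le\sum_j G_j$ put us in the regime $k=n$, where the KKT-optimal probabilities derived earlier read $p_i=bL_i/\sum_j L_j$ (inner loop, matched to the $L_i$ appearing in $Q$) and $p'_i=BG_i/\sum_j G_j$ (outer loop, matched to the $G_i$ appearing in $Q'$). Using $1-p_i\le 1$, $1-p'_i\le 1$ and substituting these probabilities,
\begin{align*}
Q=\sum_{i=1}^n\frac{(1-p_i)L_i^2}{p_i n^2}\le\sum_{i=1}^n\frac{L_i^2}{p_i n^2}=\frac{\left(\sum_{i=1}^n L_i\right)^2}{b\,n^2},\qquad Q'=\sum_{i=1}^n\frac{(1-p'_i)G_i^2}{p'_i n^2}\le\frac{\left(\sum_{i=1}^n G_i\right)^2}{B\,n^2}.
\end{align*}

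Next I would fix the parameters. Choosing $B=\frac{2}{n^2\epsilon^2}\left(\sum_i G_i\right)^2$ forces $2Q'\le\epsilon^2$, controlling the one non-vanishing term of Theorem~\ref{th:spider} (the price of replacing the full gradient by a minibatch). Setting $m=b=\sqrt{B}$ collapses $\frac{mQ}{\tilde L}$ into the $\epsilon$-independent quantity $\frac{\sqrt{B}\,Q}{\tilde L}\le\frac{1}{\tilde L n^2}\left(\sum_i L_i\right)^2$, and keeps $\eta=1/(4\tilde L+2mQ/\tilde L)$ admissible so that $\frac12\big(\frac1\eta-2\tilde L\big)-\frac{mQ}{2\tilde L}>0$, exactly as in the proof of Theorem~\ref{th:spider}. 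Taking $\mathcal{J}=\frac{1}{m\epsilon^2}\big(24\tilde L+\frac{4mQ}{\tilde L}\big)\Delta$ and substituting the bound on $Q$ gives $\mathcal{J}\le\frac{1}{\sqrt{B}\,\epsilon^2}\big(24\tilde L+\frac{4}{\tilde L n^2}(\sum_i L_i)^2\big)\Delta$, which pushes the first term of Theorem~\ref{th:spider} below $\epsilon^2$ as well.

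Finally I would count IFO calls: each outer loop spends $B$ evaluations on $\mathcal{V}_0^{(j)}$ plus $O(b)$ per inner iteration, i.e. $O(B+mb)=O(B)$ per outer loop since $m=b=\sqrt{B}$, hence $O(B\mathcal{J})$ overall; substituting $\mathcal{J}$ yields $\frac{2\sqrt{B}}{\epsilon^2}\big(24\tilde L+\frac{4}{\tilde L n^2}(\sum_i L_i)^2\big)\Delta$, and since $B=\Theta(\epsilon^{-2})$ this is $O(\epsilon^{-3})$. The main obstacle is the bookkeeping of $\epsilon$-dependence: after fixing $m=b=\sqrt{B}$ the product $mQ$ must reduce to an $\epsilon$-free quantity so that the prefactor $24\tilde L+4mQ/\tilde L$ is a genuine constant (whence $\mathcal{J}=\Theta(\epsilon^{-1})$ and the total cost $\mathcal{J}\cdot\Theta(B)=\Theta(\epsilon^{-1})\cdot\Theta(\epsilon^{-2})=\Theta(\epsilon^{-3})$), while the constant-factor slack between enforcing "each of the two terms $\le\epsilon^2$" and the nominal requirement $E[dist^2]\le\epsilon^2$ is absorbed into $O(\cdot)$ (or removed by running with accuracy $\epsilon/2$).
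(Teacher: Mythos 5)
Your proposal is correct and follows essentially the same route as the paper: specialize Theorem \ref{th:spider} to independent sampling via $Q\le(\sum_i L_i)^2/(bn^2)$ and $Q'\le(\sum_i G_i)^2/(Bn^2)$, choose $B$ to kill the $2Q'$ term, set $m=b=\sqrt{B}$ so that $mQ$ is $\epsilon$-free, and count $(B+mb)\mathcal{J}=2B\mathcal{J}$ IFO calls. You are in fact more explicit than the paper's one-line proof about the Jensen step, the KKT-optimal probabilities, and the constant-factor slack from splitting the two terms against $\epsilon^2$.
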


	\section{Experiments}
	
	We empirically compare the proposed algorithms against the state-of-the-art methods: SSDC-VR \cite{xu2018stochastic},  and VRSGD \cite{metel2019simple}. For clarity, we use the superscript U to denote the methods with uniform sampling (Prox-SGD$^U$, Prox-SARAH$^U$, Prox-SPIDER$^U$) and the superscript I  to denote those with independent sampling (Prox-SGD$^I$, Prox-SARAH$^I$, Prox-SPIDER$^I$). 
	Three benchmark datasets are used in our experiments:  covtype, australian and ijcnn1, all of which can be downloaded from the LibSVM website\footnote{http://www.csie.ntu.edu.tw/cjlin/libsvmtools/datasets/}.
	
	Following the convention in the stochastic optimization literature, we use the number of the IFO calls to measure the computational complexity. 
	This can make the computational complexity independent of actual implementation of an algorithm. 
	For a comprehensive comparison, we also include the decrease of function values over the number of iterations into our comparison. 
	The parameter $\mu$ for SSDC-VR and VRSGD is chosen by grid search from $\{10, 1, 0.1, 0.01, 0.001\}$.
	The stepsize $\eta$ for each algorithm is set by a grid search from $\{10, 1, 10^{-1}, 10^{-2}, 10^{-3}, 10^{-4}\}$. All the algorithms are initialized with the same $x^{(0)}$ for the same dataset.
	
	In our experiments, we use different algorithms to solve the following problem for classification tasks:
	\begin{equation*} 
	\min_{x \in \mathbb{R}^d} \left\{ F( x ) := f(x) + \lambda r(x) \right\}.
	\end{equation*} We adopt the following smooth but non-convex regression function $f( x )$ to be the classification loss function:
	$
	f(x)=\frac{1}{n} \sum_{i=1}^{n}\left(1-y_{i} \sigma\left(a_{i}^{\top} x\right)\right)^{2}
	$
	where $\sigma( \cdot )$ is the sigmoid function. This function has been extensively used to test stochastic algorithms with different sampling techniques because $G_i$'s and $L_i$'s for this function can be  computed via $\|x\|_2$. These parameters may be estimated, for some more complex problems \cite{horvath2019nonconvex}.  The non-smooth and non-convex $l_p$ norm is used as the regularizer $r( x ) $ where $0 \leq p < 1 $, which are commonly used in sparse learning. More specifically,  $p=0$ and $p=0.5$ are tested due to their well-studied proximal operators.
	
	
	In our experiments, both SSDC-VR and VRSGD do not perform well for covtype data and VRSGD also reduce function values slower than other algorithms for ijcnn1 data. Compared with the SSDC-VR and VRSGD methods, the Prox-SARAH$^I$ and Prox-SPIDER$^I$ show obvious improvements over the counterparts with uniform sampling. We can also see that the proposed Prox-SPIDER$^I$ algorithm is the fastest among the algorithms across the different tasks. An interesting observation shown in Figure 1 (e) is that the Prox-SPIDER$^I$ can not only reduce the loss function quickly at the beginning stage, but also escape narrow stationary points in the later stage, which may benefit from the variation of the batch gradient in the outer loop. 
	Based on the experimental results, it is safe to conclude that all methods using the independent sampling technique tend to be faster than their corresponding methods with uniform sampling. These empirical observations are consistent with our theoretical results.
	
	\section{Conclusion}
	
	
	To solve the sparse learning problems with  nonconvex nonsmooth regularization, we propose a series of stochastic proximal gradient methods, including ProxSGD-AS, ProxSARAH-AS, and ProxSPIDER-AS, that replace the original methods by a new data sampling scheme. The proposed methods draw mini-batches based on an arbitrary probability distribution when calculating the stochastic gradients. A unified analytic approach is developed to examine the convergence and computational complexity of these methods when an arbitrary sampling scheme is adopted.  This theoretical framework helps us compare the different sampling schemes, and we show that these proximal methods tend to perform better when independent sampling is used rather than uniform sampling. Furthermore, even for the uniform sampling, our new analysis derives a tighter bound on convergence speed than the best one available so far. Empirical studies confirm our theoretical observations. As a future direction, since the current research is focused on $\epsilon$-stationary points, further exploration for escaping saddle points or converging to local minimal might be of great interest for non-convex problems.
	\section*{Acknowledgment}
	We thank the reviewers for their insightful comments. The work of G. Liang, Q. Tong, and J. Bi was funded by NSF grants CCF-1514357 and IIS-1718738. J. Bi was also supported by NIH grants 5K02DA043063-03 and 1R01MH119678-01. The work of J. Ding and M. Pan was supported in part by the U.S. National Science Foundation under grants US CNS-1646607, CNS-1801925, and CNS-2029569.
	\bibliographystyle{IEEEtran}
	\bibliography{SPG_sampling}

\begin{thebibliography}{10}
\providecommand{\url}[1]{#1}
\csname url@samestyle\endcsname
\providecommand{\newblock}{\relax}
\providecommand{\bibinfo}[2]{#2}
\providecommand{\BIBentrySTDinterwordspacing}{\spaceskip=0pt\relax}
\providecommand{\BIBentryALTinterwordstretchfactor}{4}
\providecommand{\BIBentryALTinterwordspacing}{\spaceskip=\fontdimen2\font plus
\BIBentryALTinterwordstretchfactor\fontdimen3\font minus
  \fontdimen4\font\relax}
\providecommand{\BIBforeignlanguage}[2]{{%
\expandafter\ifx\csname l@#1\endcsname\relax
\typeout{** WARNING: IEEEtran.bst: No hyphenation pattern has been}%
\typeout{** loaded for the language `#1'. Using the pattern for}%
\typeout{** the default language instead.}%
\else
\language=\csname l@#1\endcsname
\fi
#2}}
\providecommand{\BIBdecl}{\relax}
\BIBdecl

\bibitem{wahlsten2003different}
D.~Wahlsten, P.~Metten, T.~J. Phillips, S.~L. Boehm, S.~Burkhart-Kasch,
  J.~Dorow, S.~Doerksen, C.~Downing, J.~Fogarty, K.~Rodd-Henricks
  \emph{et~al.}, ``Different data from different labs: lessons from studies of
  gene--environment interaction,'' \emph{Journal of neurobiology}, vol.~54,
  no.~1, pp. 283--311, 2003.

\bibitem{lee2012smartphone}
Y.-G. Lee, W.~S. Jeong, and G.~Yoon, ``Smartphone-based mobile health
  monitoring,'' \emph{Telemedicine and e-Health}, vol.~18, no.~8, pp. 585--590,
  2012.

\bibitem{natarajan1995sparse}
B.~K. Natarajan, ``Sparse approximate solutions to linear systems,'' \emph{SIAM
  journal on computing}, vol.~24, no.~2, pp. 227--234, 1995.

\bibitem{masnadi2009design}
H.~Masnadi-Shirazi and N.~Vasconcelos, ``On the design of loss functions for
  classification: theory, robustness to outliers, and savageboost,'' in
  \emph{Advances in neural information processing systems}, 2009, pp.
  1049--1056.

\bibitem{nitanda2017stochastic}
A.~Nitanda and T.~Suzuki, ``Stochastic difference of convex algorithm and its
  application to training deep boltzmann machines,'' in \emph{Artificial
  Intelligence and Statistics}, 2017, pp. 470--478.

\bibitem{lecun2015deep}
Y.~LeCun, Y.~Bengio, and G.~Hinton, ``Deep learning,'' \emph{nature}, vol. 521,
  no. 7553, p. 436, 2015.

\bibitem{yuan2014gradient}
X.~Yuan, P.~Li, and T.~Zhang, ``Gradient hard thresholding pursuit for
  sparsity-constrained optimization,'' in \emph{International Conference on
  Machine Learning}, 2014, pp. 127--135.

\bibitem{fan2001variable}
J.~Fan and R.~Li, ``Variable selection via nonconcave penalized likelihood and
  its oracle properties,'' \emph{Journal of the American statistical
  Association}, vol.~96, no. 456, pp. 1348--1360, 2001.

\bibitem{zhang2010nearly}
C.-H. Zhang \emph{et~al.}, ``Nearly unbiased variable selection under minimax
  concave penalty,'' \emph{The Annals of statistics}, vol.~38, no.~2, pp.
  894--942, 2010.

\bibitem{allen2017katyusha}
Z.~Allen-Zhu, ``Katyusha: The first direct acceleration of stochastic gradient
  methods,'' \emph{The Journal of Machine Learning Research}, vol.~18, no.~1,
  pp. 8194--8244, 2017.

\bibitem{lan2018optimal}
G.~Lan and Y.~Zhou, ``An optimal randomized incremental gradient method,''
  \emph{Mathematical programming}, vol. 171, no. 1-2, pp. 167--215, 2018.

\bibitem{davis2019stochastic}
D.~Davis and D.~Drusvyatskiy, ``Stochastic model-based minimization of weakly
  convex functions,'' \emph{SIAM Journal on Optimization}, vol.~29, no.~1, pp.
  207--239, 2019.

\bibitem{allen2017natasha}
Z.~Allen-Zhu, ``Natasha: Faster non-convex stochastic optimization via strongly
  non-convex parameter,'' in \emph{International Conference on Machine
  Learning}, 2017, pp. 89--97.

\bibitem{li2018simple}
Z.~Li and J.~Li, ``A simple proximal stochastic gradient method for nonsmooth
  nonconvex optimization,'' in \emph{Advances in Neural Information Processing
  Systems}, 2018, pp. 5564--5574.

\bibitem{pham2019proxsarah}
N.~H. Pham, L.~M. Nguyen, D.~T. Phan, and Q.~Tran-Dinh, ``Proxsarah: An
  efficient algorithmic framework for stochastic composite nonconvex
  optimization,'' \emph{arXiv preprint arXiv:1902.05679}, 2019.

\bibitem{xu2018stochastic}
Y.~Xu, Q.~Qi, Q.~Lin, R.~Jin, and T.~Yang, ``Stochastic optimization for {DC}
  functions and non-smooth non-convex regularizers with non-asymptotic
  convergence,'' in \emph{International Conference on Machine Learning}, 2019,
  pp. 6942--6951.

\bibitem{metel2019simple}
M.~Metel and A.~Takeda, ``Simple stochastic gradient methods for non-smooth
  non-convex regularized optimization,'' in \emph{International Conference on
  Machine Learning}, 2019, pp. 4537--4545.

\bibitem{xu2019stochastic}
Y.~Xu, R.~Jin, and T.~Yang, ``Stochastic proximal gradient methods for
  non-smooth non-convex regularized problems,'' \emph{arXiv preprint
  arXiv:1902.07672}, 2019.

\bibitem{horvath2019nonconvex}
S.~Horv{\'a}th and P.~Richtarik, ``Nonconvex variance reduced optimization with
  arbitrary sampling,'' in \emph{International Conference on Machine Learning},
  2019, pp. 2781--2789.

\bibitem{hanzely2018accelerated}
F.~Hanzely and P.~Richtarik, ``Accelerated coordinate descent with arbitrary
  sampling and best rates for minibatches,'' in \emph{The 22nd International
  Conference on Artificial Intelligence and Statistics}, 2019, pp. 304--312.

\bibitem{anandkumar2016efficient}
A.~Anandkumar and R.~Ge, ``Efficient approaches for escaping higher order
  saddle points in non-convex optimization,'' in \emph{Conference on Learning
  Theory}, 2016, pp. 81--102.

\bibitem{nesterov2013introductory}
Y.~Nesterov, \emph{Introductory lectures on convex optimization: A basic
  course}.\hskip 1em plus 0.5em minus 0.4em\relax Springer Science \& Business
  Media, 2013, vol.~87.

\bibitem{johnson2013accelerating}
R.~Johnson and T.~Zhang, ``Accelerating stochastic gradient descent using
  predictive variance reduction,'' in \emph{Advances in neural information
  processing systems}, 2013, pp. 315--323.

\bibitem{lei2016less}
L.~Lei and M.~Jordan, ``Less than a single pass: Stochastically controlled
  stochastic gradient,'' in \emph{Artificial Intelligence and Statistics},
  2017, pp. 148--156.

\bibitem{defazio2014saga}
A.~Defazio, F.~Bach, and S.~Lacoste-Julien, ``Saga: A fast incremental gradient
  method with support for non-strongly convex composite objectives,'' in
  \emph{Advances in neural information processing systems}, 2014, pp.
  1646--1654.

\bibitem{nguyen2017sarah}
L.~M. Nguyen, J.~Liu, K.~Scheinberg, and M.~Tak{\'a}{\v{c}}, ``Sarah: A novel
  method for machine learning problems using stochastic recursive gradient,''
  in \emph{Proceedings of the 34th International Conference on Machine
  Learning-Volume 70}.\hskip 1em plus 0.5em minus 0.4em\relax JMLR. org, 2017,
  pp. 2613--2621.

\bibitem{fang2018spider}
C.~Fang, C.~J. Li, Z.~Lin, and T.~Zhang, ``Spider: Near-optimal non-convex
  optimization via stochastic path-integrated differential estimator,'' in
  \emph{Advances in Neural Information Processing Systems}, 2018, pp. 689--699.

\bibitem{lei2017non}
L.~Lei, C.~Ju, J.~Chen, and M.~I. Jordan, ``Non-convex finite-sum optimization
  via scsg methods,'' in \emph{Advances in Neural Information Processing
  Systems}, 2017, pp. 2348--2358.

\bibitem{agarwal2014lower}
A.~Agarwal and L.~Bottou, ``A lower bound for the optimization of finite
  sums,'' in \emph{International Conference on Machine Learning}, 2015, pp.
  78--86.

\end{thebibliography}
	\newpage
	\onecolumn
	\section*{Appendix}
	\setcounter{section}{0}
	\section{Minibatch ProxSGD with Arbitrary Sampling}
	We have stated the update of ProxSGD in main paper, here we restate the main results we may use in prove procedure:
	\begin{align}
	x_{t+1} &\in \operatorname*{argmin}_{x \in \mathbb{R}^d} \bigg\{ r( x ) + \frac{1}{2\eta}\|x - ( x_t - \eta g_t)\|^2 \bigg\} \nonumber \\
	&= \operatorname*{argmin}_{x \in \mathbb{R}^d} \bigg\{ r( x ) + \langle g_t, x - x_t\rangle +  \frac{1}{2\eta}\|x - x_t \|^2 \bigg\}, \label{ineq:argmin1}
	\end{align}
	\begin{align}\label{eq:1}
	\nabla f( x_{t+1}) - g_t  - \frac{1}{\eta}( x_{t+1} - x_t)  \in \nabla f( x_{t+1} )+ \hat{\partial}r( x_{t+1} ) =  \hat{\partial}F(x_{t+1}),
	\end{align}
	
	\noindent\textbf{Lemma 4.1.} [distance bound]
	\begin{align*} 
	\left\| {x}_{t+1}- {x}_{t}\right\|^{2}   
	\leq \frac{2\eta}{1-2\tilde{L}\eta}(F\left( {x}_{t}\right)-F\left( {x}_{t+1}\right)) 
	+\frac{\eta}{\tilde{L}-2 \tilde{L}^2\eta}\left\|{g}_{t}-\nabla f\left( {x}_{t}\right)\right\|^{2}.
	\end{align*}
	
	\begin{proof}
		Considering equality (\ref{ineq:argmin1}), we  obtain:
		\begin{align}\label{ineq:r_function}
		r\left( {x}_{t+1}\right)+\left\langle{g}_{t},  {x}_{t+1}- {x}_{t}\right\rangle+\frac{1}{2 \eta}\left\| {x}_{t+1}- {x}_{t}\right\|^{2} \leq r\left( {x}_{t}\right)
		\end{align}
		Since $f( \cdot)$ is $\tilde{L}$-smoothness, we get:
		\begin{align} \label{ineq:L_smoothness}
		f\left( {x}_{t+1}\right) \leq f\left( {x}_{t}\right)+\left\langle\nabla f\left( {x}_{t}\right),  {x}_{t+1}- {x}_{t}\right\rangle+\frac{\tilde{L}}{2}\left\| {x}_{t+1}- {x}_{t}\right\|^{2}
		\end{align}
		Combining inequalities (\ref{ineq:r_function}) and (\ref{ineq:L_smoothness}), we finally get: 
		\begin{align*}
		\left\langle{g}_{t}-\nabla f\left( {x}_{t}\right),  {x}_{t+1}- {x}_{t}\right\rangle&+\frac{1}{2}(1 / \eta-\tilde{L})\left\| {x}_{t+1}- {x}_{t}\right\|^{2} \leq F\left( {x}_{t}\right)-F\left( {x}_{t+1}\right)
		\end{align*}
		Then, 
		\begin{align*} 
		\frac{1}{2}(1/\eta-\tilde{L})\left\| {x}_{t+1}- {x}_{t}\right\|^{2}  
		& \leq F\left( {x}_{t}\right)-F\left( {x}_{t+1}\right)-\left\langle {g}_{t}-\nabla f\left( {x}_{t}\right),  {x}_{t+1}- {x}_{t}\right\rangle \nonumber\\ 
		& \leq F\left( {x}_{t}\right)-F\left( {x}_{t+1}\right)+\frac{1}{2 \tilde{L}}\left\|{g}_{t}-\nabla f\left( {x}_{t}\right)\right\|^{2} +\frac{\tilde{L}}{2}\left\| {x}_{t+1}- {x}_{t}\right\|^{2}. 
		\end{align*}
		The last inequality holds due to $-\langle a, b \rangle \leq  \frac{1}{2c}\|a\|^2 + \frac{c}{2}\|b\|^2$.
		Therefore we achieve the bound of distance of parameter $x$,
		\begin{align*} 
		\left\| {x}_{t+1}- {x}_{t}\right\|^{2}   
		\leq \frac{2\eta}{1-2\tilde{L}\eta}(F\left( {x}_{t}\right)-F\left( {x}_{t+1}\right)) 
		+\frac{\eta}{\tilde{L}-2 \tilde{L}^2\eta}\left\|{g}_{t}-\nabla f\left( {x}_{t}\right)\right\|^{2}. 
		\end{align*}
	\end{proof}

	Then we give the details of proof of our main results in Theorem 4.2.
	
	\noindent\textbf{Theorem 4.2.}
	Considering Problem (1) under Assumption 1, if $ 0 < \eta < \frac{1}{2 \tilde{L}}$, then for all $T \geq 1$, ProxSGD-AS (Algorithm 1) will have,
	\begin{align*}
	E[dist( 0, \hat{\partial}F( x_T))^2] 
	\leq \frac{C_1}{T} \sum_{t=1}^T E[\|\nabla f(x_t) -g_t\|^2]  + \frac{C_2}{T}\Delta,
	\end{align*}
	where $C_1= \frac{1+ 4\tilde{L}\eta - 2\tilde{L}^2\eta^2}{\tilde{L}\eta-2\tilde{L}^2\eta^2}$,
	$C_2 = \frac{2+4\tilde{L}\eta + 4\tilde{L}^2\eta^2}{\eta - 2\tilde{L}\eta^2}$, $\Delta = F( x_1 ) - F( x^*)$. 
	
	\begin{proof}
		Let $\xi_i = \nabla f_i (x_t)$, then from Lemma D.1,
		\begin{align}\label{eq:3}
		E[g_t] = E\bigg[\sum_{i\in S_t} \frac{\xi_i}{np_i}\bigg] = \tilde{\xi} = \nabla f(x_t),
		\end{align}
		and
		\begin{align} \label{eq:4}
		E[\|g_t - \nabla f(x_t)\|^2] \leq \frac{1}{n^2} \sum_{i=1}^n \frac{v_i}{p_i}\|\nabla f_i(x_t)\|^2.   
		\end{align}
		From the above analysis, in order to measure the sub-gradient of regularized non-smooth non-convex problem, we
		now considering $dist( 0, \hat{\partial}F( x_{t+1} ))$. By Eq (\ref{eq:1}), 
		\begin{align*}
		dist( 0, \hat{\partial}F( x_{t+1}))^2 
		&\leq  \| \nabla f( x_{t+1}) - g_t -\frac{1}{\eta}( x_{t+1} - x_t)\|^2 \nonumber\\
		&=  \| \nabla f(x_{t+1})-g_t\|^2+\frac{1}{\eta^2}\| x_{t+1} - x_t\|^2
		-\frac{2}{\eta}\langle\nabla f(x_{t+1})-g_t, x_{t+1}-x_t \rangle
		\end{align*}
		Taking the expectation on both sides,
		\begin{align*}
		&E[dist( 0, \hat{\partial}F( x_{t+1}))^2] \\
		&\leq E[\| \nabla f(x_{t+1})-g_t\|^2] + \frac{1}{\eta^2}E[\| x_{t+1} - x_t\|^2] 
		-\frac{2}{\eta}E[\langle\nabla f(x_{t+1})-g_t, x_{t+1}-x_t \rangle]\\
		& = E[\| \nabla f(x_{t+1})- \nabla f(x_t) + \nabla f(x_t) -g_t\|^2]
		+ \frac{1}{\eta^2} E[\| x_{t+1} - x_t\|^2] \\
		&-\frac{2}{\eta} \langle\nabla f(x_{t+1})-\nabla f(x_t), x_{t+1}-x_t \rangle\\
		&\leq 2E[\| \nabla f(x_{t+1})- \nabla f(x_t)\|^2] + 2E[\|\nabla f(x_t) -g_t\|^2]
		+ \frac{1}{\eta^2}E[\| x_{t+1} - x_t\|^2] \\
		&+\frac{2}{\eta}\|\nabla f(x_{t+1})-\nabla f(x_t)\|\| x_{t+1}-x_t \| \\
		&\leq 2 \tilde{L}^2 E[\| x_{t+1}- x_t\|^2] + 2E[\|\nabla f(x_t) -g_t\|^2]+ \frac{1}{\eta^2}E[\| x_{t+1} - x_t\|^2] +\frac{2 \tilde{L}}{\eta}\| x_{t+1}-x_t \|^2 \\
		& = 2E[\|\nabla f(x_t) -g_t\|^2]
		+ \frac{1+2\tilde{L}\eta + 2\tilde{L}^2\eta^2}{\eta^2}\| x_{t+1}-x_t \|^2 
		\end{align*}
		the second inequality holds due to $\|a+b\|^2\leq 2\|a\|^2 + 2\|b\|^2$ and $-\langle a,b \rangle \leq  \|a\|\|b\|$, the third inequality holds due to $\tilde{L}-$smoothness. 
		
		Put in the distance bound for $\| x_{t+1}-x_t \|^2$ in Lemma 4.1, we get 
		\begin{align*}
		& E[ dist( 0, \hat{\partial}F( x_{t+1}))^2] \\
		& \leq  2E[\|\nabla f(x_t) -g_t\|^2]
		+ \frac{1+2\tilde{L}\eta + 2\tilde{L}^2\eta^2}{\eta^2}\frac{2\eta}{1-2\tilde{L}\eta}(F\left( {x}_{t}\right)-F\left( {x}_{t+1}\right))\\
		&+\frac{1+2\tilde{L}\eta + 2\tilde{L}^2\eta^2}{\eta^2} \frac{\eta}{\tilde{L}-2 \tilde{L}^2\eta}\left\|{g}_{t}-\nabla f\left( {x}_{t}\right)\right\|^{2}\\
		&\leq (\frac{1+ 4\tilde{L}\eta - 2\tilde{L}^2\eta^2}{\tilde{L}\eta-2\tilde{L}^2\eta^2}) E[\|\nabla f(x_t) -g_t\|^2]
		+(\frac{2+4\tilde{L}\eta + 4\tilde{L}^2\eta^2}{\eta - 2\tilde{L}\eta^2}) ( F( x_t ) - F( x_{t+1}))\\
		&= C_1 E[\|\nabla f(x_t) -g_t\|^2]  + C_2 ( F( x_t ) - F( x_{t+1}))
		\end{align*}
		where $ 0 < \eta < \frac{1}{2\tilde{L}}$, 
		let $C_1= \frac{1+ 4\tilde{L}\eta - 2\tilde{L}^2\eta^2}{\tilde{L}\eta-2\tilde{L}^2\eta^2}$,
		$C_2 = \frac{2+4\tilde{L}\eta + 4\tilde{L}^2\eta^2}{\eta - 2\tilde{L}\eta^2}$, $\Delta = F( x_1 ) - F( x^*)$, where $x^*$ is the optimal of Problem (1).
		
		Therefore, we get the final result, 
		\begin{align*}
		E[dist( 0, \hat{\partial}F( x_R))^2] 
		\leq \frac{C_1}{T} \sum_{t=1}^T E[\|\nabla f(x_t) -g_t\|^2]  + \frac{C_2}{T}\Delta.
		\end{align*}
	\end{proof}
	
	\subsection{ Mini-batch ProxSGD with uniform sampling}
	
	\noindent\textbf{Corollary 4.2.1.}
	Considering Problem (1) under Assumption 1, if $ 0 < \eta < \frac{1}{2 \tilde{L}}$, then for all $T \geq 1$, ProxSGD (Algorithm 1) with uniform sampling will have,
	\begin{align*}
	E[& dist( 0, \hat{\partial}F( x_T))^2]  \leq \frac{1}{b} \frac{1}{n} \frac{n-b}{n-1}(\sum_{i=1}^n G_i^2) C_1 +  \frac{C_2}{T} \Delta,
	\end{align*}
	where $C_1= \frac{1+ 4\tilde{L}\eta - 2\tilde{L}^2\eta^2}{\tilde{L}\eta-2\tilde{L}^2\eta^2}$,
	$C_2 = \frac{2+4\tilde{L}\eta + 4\tilde{L}^2\eta^2}{\eta - 2\tilde{L}\eta^2}$, $\Delta = F( x_1 ) - F( x^*)$.   
	\begin{proof}
		For uniform sampling,
		$$v_i=\frac{n-b}{n-1}$$ and $$p_i = \frac{b}{n}$$
		\begin{align*}
		\frac{1}{T} \sum_{t=1}^T E[\|\nabla f(x_t) -g_t\|^2] &\Longleftrightarrow \frac{1}{n^2}\sum_{i=1}^n \frac{v_i}{p_i}\|\nabla f_i(x_t)\|^2\\
		&\leq \frac{1}{n^2}\sum_{i=1}^n \frac{n-b}{n-1} \frac{n}{b} G_i^2
		\end{align*}
		we can get the desired result.
	\end{proof}
	
	\noindent\textbf{Corollary 4.2.2.}
	Considering Problem (1) under Assumption 1, if $ 0 < \eta < \frac{1}{2 \tilde{L}}$,  $T = \frac{2C_2 \Delta}{\epsilon^2} $ and a fixed batchsize $b = \frac{(2\sum_{i=1}^n G_i^2) C_1}{n \epsilon^2}$, ProxSGD (Algorithm 1) with uniform sampling will have,
	\begin{align*}
	\mathrm{E}\left[\operatorname{dist}\left(0, \hat{\partial} F\left(\mathrm{x}_{R}\right)\right)^{2}\right] \leq \epsilon^{2}.
	\end{align*}
	Then, the computational complexity is $$ \frac{4(\sum_{i=1}^n G_i^2) C_1 C_2\Delta}{n \epsilon^4}.$$
	
	\begin{proof}
		\begin{align*}
		\frac{1}{b} \frac{1}{n} \frac{n-b}{n-1}(\sum_{i=1}^n G_i^2) C_1 \leq \frac{\epsilon^2}{2}
		&\overset{\frac{n-b}{n-1}<1}{\Longrightarrow} \frac{1}{b} \frac{1}{n} (\sum_{i=1}^n G_i^2) C_1 \leq \frac{\epsilon^2}{2}\\
		&\Longrightarrow b = \frac{(2\sum_{i=1}^n G_i^2) C_1}{n \epsilon^2}
		\end{align*}
		\begin{align*}
		\frac{C_2}{T} \Delta  \leq \frac{\epsilon^2}{2} \Longrightarrow T \geq \frac{2C_2}{\epsilon^2} \Delta 
		\end{align*}
		Then, we get $\mathrm{E}\left[\operatorname{dist}\left(0, \hat{\partial} F\left(\mathrm{x}_{R}\right)\right)^{2}\right] \leq \epsilon^{2}$. And the total computational complexity is $T * b = \frac{4(\sum_{i=1}^n G_i^2) C_1 C_2\Delta}{n \epsilon^4}$ with  $ 0 < \eta < \frac{1}{2 \tilde{L}}$.
	\end{proof}
	
	\subsection{ Mini-batch ProxSGD with independent sampling}
	\noindent\textbf{Corollary 4.2.3.}
	Considering Problem (1) under Assumption 1, if $ 0 < \eta < \frac{1}{2 \tilde{L}}$, then for all $T \geq 1$, ProxSGD (Algorithm 1) with independent sampling will have,
	\begin{align*}
	E[dist( 0, \hat{\partial}F( x_T))^2] 
	\leq \frac{C_1}{n^2}  \frac{1}{b+k-n}(\sum_{i=1}^{k} G_{i})^2    + \frac{C_2}{T} \Delta,
	\end{align*}
	where $C_1= \frac{1+ 4\tilde{L}\eta - 2\tilde{L}^2\eta^2}{\tilde{L}\eta-2\tilde{L}^2\eta^2}$,
	$C_2 = \frac{2+4\tilde{L}\eta + 4\tilde{L}^2\eta^2}{\eta - 2\tilde{L}\eta^2}$, $\Delta = F( x_1 ) - F( x^*)$.   
	
	\begin{proof}
		\begin{align*}
		E[ dist( 0, \hat{\partial}F( x_T))^2] 
		& \leq   \frac{C_1}{T n^2}\sum_{t=1}^T \sum_{i=1}^n \frac{ v_i }{ p_{i} } G_i^2  + \frac{C_2}{T}\Delta\\
		&=\frac{C_1}{n^2} \sum_{i=1}^n \frac{1-p_i }{ p_{i} } G_i^2  + \frac{C_2}{T}\Delta\\
		&=\frac{C_1}{n^2} \sum_{i=1}^n \frac{ 1}{ p_{i} } G_i^2 - \frac{C_1}{n^2} \sum_{i=1}^n  G_i^2 + \frac{C_2}{T}\Delta\\
		&= \frac{C_1}{n^2}  \left( \frac{1}{b+k-n}\sum_{i=1}^{k} (\sum_{j=1}^{k} G_{j}) G_{i} - \sum_{i=1}^{k} G_i^2 \right)  + \frac{C_2}{T} \Delta\\
		&= \frac{C_1}{n^2}  \left( \frac{1}{b+k-n}(\sum_{i=1}^{k} G_{i})^2 - \sum_{i=1}^{k} G_i^2 \right)  + \frac{C_2}{T} \Delta,\\
		&\leq \frac{C_1}{n^2}  \frac{1}{b+k-n}(\sum_{i=1}^{k} G_{i})^2   + \frac{C_2}{T} \Delta.
		\end{align*}
	\end{proof}
	
	\noindent\textbf{Corollary 4.2.4.}
	Based on the above corollary, we further have that $G_i$ are similar, $k=n$, $T = \frac{2C_2 \Delta}{\epsilon^2} $ and a fixed batchsize $b = \frac{2(\sum_{i=1}^{n} G_{i})^2 C_1}{n^2 \epsilon^2}$, ProxSGD (Algorithm 1) with independent sampling will have,
	\begin{align*}
	\mathrm{E}\left[\operatorname{dist}\left(0, \hat{\partial} F\left(\mathrm{x}_{R}\right)\right)^{2}\right] \leq \epsilon^{2}.
	\end{align*}
	Then, the computational complexity is $$ \frac{4(\sum_{i=1}^{k} G_{i})^2 C_1 C_2\Delta}{n^2 \epsilon^4}.$$
	
	\begin{proof}
		Since $k=n$,
		\begin{align*}
		E[dist( 0, \hat{\partial}F( x_T))^2] 
		\leq \frac{C_1}{n^2}  \frac{1}{b}(\sum_{i=1}^{n} G_{i})^2    + \frac{C_2}{T} \Delta,
		\end{align*}
		\begin{align*}
		\frac{C_1}{n^2}  \frac{1}{b}(\sum_{i=1}^{n} G_{i})^2 \leq \frac{\epsilon^2}{2} \Rightarrow b = \frac{C_1}{n^2}  \frac{2}{ \epsilon^2}(\sum_{i=1}^{n} G_{i})^2 
		\end{align*}
		
		Then, we get $\mathrm{E}\left[\operatorname{dist}\left(0, \hat{\partial} F\left(\mathrm{x}_{R}\right)\right)^{2}\right] \leq \epsilon^{2}$. And the total computational complexity is $T * b = \frac{4(\sum_{i=1}^n G_i^2) C_1 C_2\Delta}{n^2 \epsilon^4}$.
	\end{proof}
	
	\section{ProxSARAH with Arbitrary Sampling}
	We then give the analysis of ProxSARAH with arbitrary sampling (Algorithm 2) under non-smooth non-convex regularized problems. Before proving the main Theorem 5.4, we have some prepared lemmas.
	
	Similar to ProxSGD, the update of $x_{t+1}^{(j)}$ in ProxSARAH is: 
	\begin{align}
	x_{t+1}^{(j)} &\in \operatorname*{argmin}_{x \in \mathbb{R}^d} \bigg\{ r( x ) + \frac{1}{2\eta}\|x - ( x_t^{(j)} - \eta \mathcal{V}_t^{(j)})\|^2 \bigg\} \nonumber \\
	&= \operatorname*{argmin}_{x \in \mathbb{R}^d} \bigg\{ r( x ) + \langle \mathcal{V}_t^{(j)}, x - x_t^{(j)}\rangle +  \frac{1}{2\eta}\|x - x_t^{(j)} \|^2 \bigg\} \label{eq:5}
	\end{align}
	then by the definition of $argmin$, we have
	\begin{align*}
	0 \in \hat{\partial}r( x_{t+1}^{(j)} ) + \mathcal{V}_t^{(j)}  + \frac{1}{\eta}( x_{t+1}^{(j)} - x_t^{(j)}).
	\end{align*}
	Hence,
	\begin{align*}
	- \mathcal{V}_t^{(j)}  - \frac{1}{\eta}( x_{t+1}^{(j)}  - x_t^{(j)} )\in \hat{\partial}r( x_{t+1}^{(j)} ), 
	\end{align*}
	implying 
	\begin{align}\label{eq:6}
	\nabla f( x_{t+1}^{(j)}) - \mathcal{V}_t^{(j)}  - \frac{1}{\eta}( x_{t+1}^{(j)}  - x_t^{(j)})
	&\in \nabla f( x_{t+1}^{(j)} )+ \hat{\partial}r( x_{t+1}^{(j)}) \nonumber\\
	&=  \hat{\partial}F( x_{t+1}^{(j)}  ).
	\end{align}
	
	\begin{lemma}\label{lemma:v_f}
		Considering updating formula in SARAH: $\mathcal{V}_{t}^{(j)} = \sum_{i \in S_t^{(j)} } \frac{ 1}{ n p_{i} }  (\nabla  f_{i}(x_{t}^{(j)}) - \nabla f_{i}( x_{t-1}^{(j)} ) ) + \mathcal{V}_{t-1}^{(j)}$, then $\forall 1\leq t\leq m$, $j \geq 1$, we have 
		\begin{align*} 
		E[\|\mathcal{V}_t^{(j)} -  \nabla f( x_{t}^{(j)}) \|^2]  &= \sum_{k=1}^t E[\|\mathcal{V}_k^{(j)}-\mathcal{V}_{k-1}^{(j)}\|^2] 
		- \sum_{k=1}^t E[\|\nabla f( x_{k}^{(j)})-\nabla f( x_{k-1}^{(j)})\|^2].
		\end{align*}
	\end{lemma}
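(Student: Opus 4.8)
The plan is to unroll the SARAH-type recursion for $\mathcal{V}_t^{(j)}$ one inner step at a time, tracking the error $\zeta_t := \mathcal{V}_t^{(j)} - \nabla f(x_t^{(j)})$. Fix the outer index $j$ and let $\mathcal{F}_{t-1}$ denote the $\sigma$-algebra generated by all randomness up to and including the draw of $S_{t-1}^{(j)}$; then $x_t^{(j)}$ and $x_{t-1}^{(j)}$ are $\mathcal{F}_{t-1}$-measurable, whereas $S_t^{(j)}$ is sampled independently of $\mathcal{F}_{t-1}$. By the unbiasedness of the arbitrary-sampling gradient estimator (Lemma D.1, applied with the vectors $\xi_i = \nabla f_i(x_t^{(j)}) - \nabla f_i(x_{t-1}^{(j)})$), the increment $\mathcal{V}_t^{(j)} - \mathcal{V}_{t-1}^{(j)} = \sum_{i \in S_t^{(j)}} \frac{1}{n p_i}\big(\nabla f_i(x_t^{(j)}) - \nabla f_i(x_{t-1}^{(j)})\big)$ satisfies $E[\mathcal{V}_t^{(j)} - \mathcal{V}_{t-1}^{(j)} \mid \mathcal{F}_{t-1}] = \nabla f(x_t^{(j)}) - \nabla f(x_{t-1}^{(j)})$.

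First I would write $\zeta_t = \zeta_{t-1} + \delta_t$ with $\delta_t := \big(\mathcal{V}_t^{(j)} - \mathcal{V}_{t-1}^{(j)}\big) - \big(\nabla f(x_t^{(j)}) - \nabla f(x_{t-1}^{(j)})\big)$, which is conditionally mean-zero: $E[\delta_t \mid \mathcal{F}_{t-1}] = 0$. Expanding $\|\zeta_t\|^2 = \|\zeta_{t-1}\|^2 + 2\langle \zeta_{t-1}, \delta_t \rangle + \|\delta_t\|^2$ and taking $E[\cdot \mid \mathcal{F}_{t-1}]$, the cross term vanishes because $\zeta_{t-1}$ is $\mathcal{F}_{t-1}$-measurable, so $E[\|\zeta_t\|^2 \mid \mathcal{F}_{t-1}] = \|\zeta_{t-1}\|^2 + E[\|\delta_t\|^2 \mid \mathcal{F}_{t-1}]$. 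Then I would evaluate $E[\|\delta_t\|^2 \mid \mathcal{F}_{t-1}]$ via the elementary identity $E[\|X - E[X\mid\mathcal{F}_{t-1}]\|^2 \mid \mathcal{F}_{t-1}] = E[\|X\|^2 \mid \mathcal{F}_{t-1}] - \|E[X\mid\mathcal{F}_{t-1}]\|^2$ with $X = \mathcal{V}_t^{(j)} - \mathcal{V}_{t-1}^{(j)}$, obtaining $E[\|\delta_t\|^2 \mid \mathcal{F}_{t-1}] = E[\|\mathcal{V}_t^{(j)} - \mathcal{V}_{t-1}^{(j)}\|^2 \mid \mathcal{F}_{t-1}] - \|\nabla f(x_t^{(j)}) - \nabla f(x_{t-1}^{(j)})\|^2$.

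Taking total expectations in the two conditional relations above yields the one-step recursion $E[\|\zeta_t\|^2] = E[\|\zeta_{t-1}\|^2] + E[\|\mathcal{V}_t^{(j)} - \mathcal{V}_{t-1}^{(j)}\|^2] - E[\|\nabla f(x_t^{(j)}) - \nabla f(x_{t-1}^{(j)})\|^2]$. Telescoping this from $k = 1$ to $t$ and invoking the base case $\zeta_0 = \mathcal{V}_0^{(j)} - \nabla f(x_0^{(j)}) = 0$ --- which holds because Algorithm \ref{alg:SARAH-HT} sets $\mathcal{V}_0^{(j)} = \frac{1}{n}\sum_{i=1}^n \nabla f_i(x_0^{(j)})$ and $x_1^{(j)} = x_0^{(j)}$ --- collapses the sum to exactly the claimed identity. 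The only place that needs care is the filtration bookkeeping: making sure $x_t^{(j)}$ and $x_{t-1}^{(j)}$ (and hence $\nabla f$ evaluated at them) are determined before $S_t^{(j)}$ is drawn, so that each conditional cross term genuinely vanishes; once that is set up, the remainder is routine algebra, and the same template will be reused for the related variance bounds in Lemmas \ref{lemma:Q} and \ref{lemma:Q2}.
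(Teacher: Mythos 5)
Your proof is correct and follows essentially the same route as the paper: both derive the one-step recursion $E[\|\mathcal{V}_t^{(j)}-\nabla f(x_t^{(j)})\|^2]=E[\|\mathcal{V}_{t-1}^{(j)}-\nabla f(x_{t-1}^{(j)})\|^2]+E[\|\mathcal{V}_t^{(j)}-\mathcal{V}_{t-1}^{(j)}\|^2]-E[\|\nabla f(x_t^{(j)})-\nabla f(x_{t-1}^{(j)})\|^2]$ from the unbiasedness of the increment and then telescope using $\mathcal{V}_0^{(j)}=\nabla f(x_0^{(j)})$. Your martingale-difference bookkeeping with the conditional variance identity is just a cleaner packaging of the cross-term cancellation that the paper carries out by direct expansion.
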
	
	
	\begin{proof}
		Since  $\mathcal{V}_{t}^{(j)} = \sum_{i \in S_t^{(j)} } \frac{ 1}{ n p_{i} }  (\nabla  f_{i}(x_{t}^{(j)}) - \nabla f_{i}( x_{t-1}^{(j)} ) ) + \mathcal{V}_{t-1}^{(j)}$,
		then $$E[\mathcal{V}_{t}^{(j)}-\mathcal{V}_{t-1}^{(j)}] =E[\sum_{i \in S_t^{(j)} } \frac{ 1}{ n p_{i} }  (\nabla  f_{i}(x_{t}^{(j)}) - \nabla f_{i}( x_{t-1}^{(j)} ) )]  = \nabla f( x_{t}^{(j)}) - \nabla f( x_{t-1}^{(j)}).$$
		Let's further bound $E[\|\mathcal{V}_t^{(j)} -  \nabla f( x_{t}^{(j)}) \|^2]$,
		\begin{align*}
		E&[\|\mathcal{V}_t^{(j)} -  \nabla f( x_{t}^{(j)}) \|^2]  
		= E[\|\mathcal{V}_t^{(j)} - \mathcal{V}_{t-1}^{(j)} + \mathcal{V}_{t-1}^{(j)} -  \nabla f( x_{t-1}^{(j)}) +  \nabla f( x_{t-1}^{(j)})  -  \nabla f( x_{t}^{(j)})\|^2] \\
		&=E[\|\mathcal{V}_t^{(j)} - \mathcal{V}_{t-1}^{(j)}\|^2] +E[\| \mathcal{V}_{t-1}^{(j)} -  \nabla f( x_{t-1}^{(j)})\|^2] +  E[\|\nabla f( x_{t-1}^{(j)})  -  \nabla f( x_{t}^{(j)})\|^2] \\
		&+ 2E[\langle \mathcal{V}_t^{(j)} - \mathcal{V}_{t-1}^{(j)}, v_{t-1}^{(j)} -  \nabla f( x_{t-1}^{(j)})\rangle] \\
		&+ 2E[\langle \mathcal{V}_t^{(j)} - \mathcal{V}_{t-1}^{(j)}, \nabla f( x_{t-1}^{(j)})  -  \nabla f( x_{t}^{(j)}) \rangle] \\
		&+ 2E[\langle \mathcal{V}_{t-1}^{(j)} -  \nabla f( x_{t-1}^{(j)}),\nabla f( x_{t-1}^{(j)})  -  \nabla f( x_{t}^{(j)}) \rangle] \\
		&= E[\|\mathcal{V}_t^{(j)} - \mathcal{V}_{t-1}^{(j)}\|^2] +E[\| \mathcal{V}_{t-1}^{(j)} -  \nabla f( x_{t-1}^{(j)})\|^2] +  E[\|\nabla f( x_{t-1}^{(j)})  -  \nabla f( x_{t}^{(j)})\|^2] \\
		& - 2E[\|\nabla f( x_{t-1}^{(j)})  -  \nabla f( x_{t}^{(j)}) \|^2] \\
		&=E[\|\mathcal{V}_t^{(j)} - \mathcal{V}_{t-1}^{(j)}\|^2] +E[\| \mathcal{V}_{t-1}^{(j)} -  \nabla f( x_{t-1}^{(j)})\|^2] -  E[\|\nabla f( x_{t-1}^{(j)})  -  \nabla f( x_{t}^{(j)})\|^2] \\
		&=\sum_{k=1}^t E[\|\mathcal{V}_k^{(j)}-v_{k-1}^{(j)}\|^2] - \sum_{k=1}^t E[\|\nabla f( x_{k}^{(j)})-\nabla f( x_{k-1}^{(j)})\|^2].
		\end{align*}
	\end{proof}
	
	\begin{lemma}\label{lemma:Q}
		Considering updating formula in SARAH: $\mathcal{V}_{t}^{(j)} = \sum_{i \in S_t^{(j)} } \frac{ 1}{ n p_{i} }  (\nabla  f_{i}(x_{t}^{(j)}) - \nabla f_{i}( x_{t-1}^{(j)} ) ) + \mathcal{V}_{t-1}^{(j)}$,  then $\forall 1\leq t\leq m$, $j \geq 1$, we further have 
		\begin{align*} 
		E[\|\mathcal{V}_t^{(j)} -  \nabla f( x_{t}^{(j)}) \|^2] \leq Q \sum_{k=1}^{t} E[\|x_{k}^{(j)}-x_{k-1}^{(j)}\|^2],
		\end{align*}
		where $Q=\sum_{i=1}^{n} \frac{v_iL_i^2}{p_i n^2}$. 
	\end{lemma}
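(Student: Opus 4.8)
The plan is to build directly on Lemma~\ref{lemma:v_f}, which already expands the estimator error as the telescoping identity
\begin{align*}
E[\|\mathcal{V}_t^{(j)} - \nabla f(x_{t}^{(j)})\|^2]
= \sum_{k=1}^t E[\|\mathcal{V}_k^{(j)}-\mathcal{V}_{k-1}^{(j)}\|^2]
- \sum_{k=1}^t E[\|\nabla f(x_{k}^{(j)})-\nabla f(x_{k-1}^{(j)})\|^2].
\end{align*}
So it suffices to bound each increment $E[\|\mathcal{V}_k^{(j)}-\mathcal{V}_{k-1}^{(j)}\|^2]$ and then check that, after summing over $k$, the negative gradient-difference terms are fully absorbed.

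First I would isolate the increment $\mathcal{V}_k^{(j)}-\mathcal{V}_{k-1}^{(j)} = \sum_{i\in S_k^{(j)}}\frac{1}{np_i}\big(\nabla f_i(x_k^{(j)})-\nabla f_i(x_{k-1}^{(j)})\big)$ and condition on $x_{k-1}^{(j)},x_k^{(j)}$, which are determined before the mini-batch $S_k^{(j)}$ is drawn. Setting $\xi_i := \nabla f_i(x_k^{(j)})-\nabla f_i(x_{k-1}^{(j)})$ so that $\tilde\xi = \nabla f(x_k^{(j)})-\nabla f(x_{k-1}^{(j)})$, the arbitrary-sampling estimate $\sum_{i\in S_k^{(j)}}\xi_i/(np_i)$ is unbiased for $\tilde\xi$; hence the bias--variance split together with the sampling variance bound of \cite{horvath2019nonconvex} gives
\begin{align*}
E[\|\mathcal{V}_k^{(j)}-\mathcal{V}_{k-1}^{(j)}\|^2]
= E\Big[\Big\|\sum_{i\in S_k^{(j)}}\tfrac{\xi_i}{np_i}-\tilde\xi\Big\|^2\Big] + E[\|\tilde\xi\|^2]
\le \frac{1}{n^2}\sum_{i=1}^n\frac{v_i}{p_i}E[\|\xi_i\|^2] + E[\|\nabla f(x_k^{(j)})-\nabla f(x_{k-1}^{(j)})\|^2].
\end{align*}

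Next I would invoke the $L_i$-smoothness of each $f_i$ from Assumption~\ref{assumption4f}, giving $\|\xi_i\|^2 \le L_i^2\|x_k^{(j)}-x_{k-1}^{(j)}\|^2$, which turns $\frac{1}{n^2}\sum_i\frac{v_i}{p_i}\|\xi_i\|^2$ into exactly $Q\|x_k^{(j)}-x_{k-1}^{(j)}\|^2$ with $Q=\sum_{i=1}^n\frac{v_iL_i^2}{p_in^2}$. Summing over $k=1,\dots,t$ and substituting into the identity from Lemma~\ref{lemma:v_f}, the sum $\sum_{k=1}^t E[\|\nabla f(x_k^{(j)})-\nabla f(x_{k-1}^{(j)})\|^2]$ cancels the negative term, leaving precisely $E[\|\mathcal{V}_t^{(j)}-\nabla f(x_t^{(j)})\|^2]\le Q\sum_{k=1}^t E[\|x_k^{(j)}-x_{k-1}^{(j)}\|^2]$.

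The step I would watch most carefully is the bias--variance decomposition: $E[\|\mathcal{V}_k^{(j)}-\mathcal{V}_{k-1}^{(j)}\|^2]$ is not merely the sampling variance but also carries the term $\|\tilde\xi\|^2=\|\nabla f(x_k^{(j)})-\nabla f(x_{k-1}^{(j)})\|^2$, and the argument only closes because Lemma~\ref{lemma:v_f} supplies an exactly matching negative contribution that cancels it after telescoping. Everything else is routine --- a conditioning argument so that $x_{k-1}^{(j)},x_k^{(j)}$ act as constants when $S_k^{(j)}$ is sampled, plus a single application of smoothness.
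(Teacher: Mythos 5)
Your proposal is correct and follows essentially the same route as the paper: the telescoping identity of Lemma~\ref{lemma:v_f}, the arbitrary-sampling variance bound of Lemma D.1 applied (after conditioning) to the increments $\mathcal{V}_k^{(j)}-\mathcal{V}_{k-1}^{(j)}$ with $\xi_i = \nabla f_i(x_k^{(j)})-\nabla f_i(x_{k-1}^{(j)})$, the cancellation of the $\|\nabla f(x_k^{(j)})-\nabla f(x_{k-1}^{(j)})\|^2$ terms, and a final application of $L_i$-smoothness to obtain $Q$. Your explicit bias--variance phrasing and the conditioning remark are just a cleaner presentation of exactly the computation the paper performs.
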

	
	\begin{proof}
		Let $\xi_i = \frac{1}{np_i} (\nabla f_i(x_k^{(j)})-\nabla f_i(x_{k-1}^{(j)}))$,
		then 
		\begin{align*}
		&E[\|\mathcal{V}_k^{(j)}-\mathcal{V}_{k-1}^{(j)}\|^2]-\|\nabla f( x_{k}^{(j)})-\nabla f(x_{k-1}^{(j)})\|^2\\
		&= E[\|\sum_{i \in S^{(j)} } \frac{ 1}{ n p_{i} }  (\nabla  f_{i}(x_{k}^{(j)}) - \nabla f_{i}( x_{k-1}^{(j)})\|^2 ] 
		- \|\frac{1}{n} \sum_{i=1}^n[\nabla f_i( x_{k}^{(j)})-\nabla f_i(x_{k-1}^{(j)})]\|^2\\
		&\overset{Lemma~D.1 }{=}E[\|\sum_{i \in S^{(j)} } \xi_i\|^2 ] - \|\frac{1}{n} \sum_{i=1}^n\xi_i \|^2\\
		&\overset{Lemma~D.1 }{\leq}\sum_{i=1}^n v_i p_i \|\xi_i\|^2\\
		&= \sum_{i=1}^n v_i p_i \|\frac{1}{np_i} (\nabla f_i(x_k^{(j)})-\nabla f_i(x_{k-1}^{(j)})) \|^2\\
		&\leq \sum_{i=1}^n \frac{v_i}{n^2 p_i} L_i^2 \| x_k^{(j)}-x_{k-1}^{(j)} \|^2 \overset{\Delta}{=} Q \| x_k^{(j)}-x_{k-1}^{(j)} \|^2
		\end{align*}
		The last inequality holds due to $L_i$-smoothness assumption. 
		Then we use Lemma \ref{lemma:v_f},
		\begin{align*}
		E[\|\mathcal{V}_t^{(j)} -  \nabla f( x_{t}^{(j)}) \|^2]
		&= \sum_{k=1}^t E[\|\mathcal{V}_k^{(j)}-\mathcal{V}_{k-1}^{(j)}\|^2 - \|\nabla f( x_{k}^{(j)})-\nabla f( x_{k-1}^{(j)})\|^2]\\
		&\leq Q \sum_{k=1}^t E[\| x_k^{(j)}-x_{k-1}^{(j)} \|^2].
		\end{align*}
	\end{proof}
	
	\begin{lemma} \label{lemma:F}
		\begin{align*} 
		\langle \mathcal{V}_t^{(j)}  - \nabla f( x_t^{(j)}  ), x_{t+1}^{(j)}  - x_t^{(j)} \rangle +  \frac{1}{2}(\frac{1}{\eta} -\tilde{L} )\|x_{t+1}^{(j)}  - x_t^{(j)}  \|^2 
		\leq F( x_t^{(j)}  ) - F( x_{t+1}^{(j)} ).
		\end{align*}
	\end{lemma}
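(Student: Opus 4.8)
The plan is to follow the same two-step recipe used to establish Lemma~\ref{distance_bound}, but with the SARAH estimator $\mathcal{V}_t^{(j)}$ in place of the mini-batch gradient $g_t$ and with the doubly-indexed iterates $x_t^{(j)},x_{t+1}^{(j)}$ in place of $x_t,x_{t+1}$. First I would use the optimality of the proximal subproblem: by Eq.~(\ref{eq:5}) the point $x_{t+1}^{(j)}$ minimizes $r(x)+\langle \mathcal{V}_t^{(j)},x-x_t^{(j)}\rangle+\frac{1}{2\eta}\|x-x_t^{(j)}\|^2$ over $\mathbb{R}^d$, so evaluating this objective at the feasible comparison point $x=x_t^{(j)}$ and comparing values gives
\begin{align*}
r(x_{t+1}^{(j)})+\langle \mathcal{V}_t^{(j)},x_{t+1}^{(j)}-x_t^{(j)}\rangle+\frac{1}{2\eta}\|x_{t+1}^{(j)}-x_t^{(j)}\|^2\le r(x_t^{(j)}).
\end{align*}

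Next I would invoke the $\tilde{L}$-smoothness of $f$ from Lemma~\ref{f_L_smooth}, applied with $x_1=x_{t+1}^{(j)}$ and $x_2=x_t^{(j)}$, to obtain
\begin{align*}
f(x_{t+1}^{(j)})\le f(x_t^{(j)})+\langle \nabla f(x_t^{(j)}),x_{t+1}^{(j)}-x_t^{(j)}\rangle+\frac{\tilde{L}}{2}\|x_{t+1}^{(j)}-x_t^{(j)}\|^2.
\end{align*}
Adding the two inequalities and using $F=f+r$, the $r$-terms collapse into $F(x_{t+1}^{(j)})$ on the left and $F(x_t^{(j)})$ on the right, the two squared-norm coefficients combine to $\frac{\tilde{L}}{2}-\frac{1}{2\eta}=-\frac{1}{2}\big(\frac{1}{\eta}-\tilde{L}\big)$, and the two inner products merge into $\langle \nabla f(x_t^{(j)})-\mathcal{V}_t^{(j)},x_{t+1}^{(j)}-x_t^{(j)}\rangle$, yielding
\begin{align*}
F(x_{t+1}^{(j)})\le F(x_t^{(j)})+\langle \nabla f(x_t^{(j)})-\mathcal{V}_t^{(j)},x_{t+1}^{(j)}-x_t^{(j)}\rangle-\frac{1}{2}\Big(\frac{1}{\eta}-\tilde{L}\Big)\|x_{t+1}^{(j)}-x_t^{(j)}\|^2.
\end{align*}
Moving the last two terms to the left-hand side and flipping the sign of the inner product produces exactly the claimed bound.

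The whole argument is a routine combination of two first-order inequalities, so I do not expect any genuine obstacle. The one point that needs care is the first step: one must use the \emph{minimality} of $x_{t+1}^{(j)}$ for the proximal subproblem (so that $x_t^{(j)}$ may legitimately be plugged in as a competitor), rather than the weaker first-order stationarity identity~(\ref{eq:6}) that is used elsewhere. Note also that neither Assumption~\ref{assumption4f}(1) (the $G_i$-bounds) nor the specific recursive form of $\mathcal{V}_t^{(j)}$ plays any role here — only $\tilde{L}$-smoothness of $f$ is needed — and no expectation is taken, so the inequality holds pathwise.
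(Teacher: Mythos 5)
Your proposal is correct and follows exactly the paper's own argument: compare the proximal subproblem's objective at $x_{t+1}^{(j)}$ against the competitor $x_t^{(j)}$ to bound the $r$-terms, add the $\tilde{L}$-smoothness inequality for $f$, and rearrange. The only difference is that you spell out the combination step that the paper leaves as ``combining these two inequalities,'' which is a welcome bit of extra detail rather than a deviation.
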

	
	\begin{proof}
		From updating rule (\ref{eq:5}), we also have 
		\begin{align}
		r( x_{t+1}^{(j)}  ) + \langle \mathcal{V}_t^{(j)}, x_{t+1}^{(j)}  - x_t^{(j)} \rangle +  \frac{1}{2\eta}\|x_{t+1}^{(j)}  - x_t^{(j)}  \|^2 
		\leq r( x_t^{(j)} ).
		\end{align}
		By Lemma 2.1 that $f( x )$ is $\tilde{L}$-smoothness, we further obtain that
		\begin{align} \label{ineq:f2}
		f( x_{t+1}^{(j)} ) \leq f( x_t^{(j)} ) + \langle \nabla f( x_t^{(j)} ), x_{t+1}^{(j)} -x_t^{(j)} \rangle 
		+ \frac{\tilde{L}}{2}\|x_{t+1}^{(j)} -x_t^{(j)} \|^2.
		\end{align}
		Combining these two inequalities, we obtain the result.
	\end{proof}
	
	\begin{lemma} \label{lemma:x}
		\begin{align*}
		\sum_{t=1}^m  E[\|x_{t+1}^{(j)}-x_{t}^{(j)}\|^2] 
		\leq  \frac{1}{ \frac{1}{2}(\frac{1}{\eta} -2\tilde{L} ) - \frac{mQ}{2\tilde{L}}} E[ F( x_0^{(j)}  )  - F( x_{m+1}^{(j)} ) ], 
		\end{align*}
		where $\frac{1}{2}(\frac{1}{\eta} -2\tilde{L} ) - \frac{mQ}{2\tilde{L}} > 0$.
	\end{lemma}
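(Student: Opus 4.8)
The plan is to sum the inequality of Lemma \ref{lemma:F} over the inner loop $t = 1, \dots, m$ and then use Lemma \ref{lemma:Q} (together with Young's inequality) to control the inner-product term by the squared-step terms. First I would rewrite Lemma \ref{lemma:F} as
\begin{align*}
\tfrac{1}{2}\bigl(\tfrac{1}{\eta} - \tilde{L}\bigr)\|x_{t+1}^{(j)} - x_t^{(j)}\|^2
\leq F(x_t^{(j)}) - F(x_{t+1}^{(j)}) - \langle \mathcal{V}_t^{(j)} - \nabla f(x_t^{(j)}), x_{t+1}^{(j)} - x_t^{(j)}\rangle,
\end{align*}
and then bound the cross term using $-\langle a, b\rangle \leq \tfrac{1}{2\tilde{L}}\|a\|^2 + \tfrac{\tilde{L}}{2}\|b\|^2$ with $a = \mathcal{V}_t^{(j)} - \nabla f(x_t^{(j)})$ and $b = x_{t+1}^{(j)} - x_t^{(j)}$. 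This absorbs one $\tfrac{\tilde{L}}{2}\|x_{t+1}^{(j)} - x_t^{(j)}\|^2$ into the left side, leaving
\begin{align*}
\tfrac{1}{2}\bigl(\tfrac{1}{\eta} - 2\tilde{L}\bigr)\|x_{t+1}^{(j)} - x_t^{(j)}\|^2
\leq F(x_t^{(j)}) - F(x_{t+1}^{(j)}) + \tfrac{1}{2\tilde{L}}\|\mathcal{V}_t^{(j)} - \nabla f(x_t^{(j)})\|^2.
\end{align*}

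Next I would take expectations and sum over $t = 1, \dots, m$. The function-value differences telescope to $E[F(x_1^{(j)}) - F(x_{m+1}^{(j)})]$, and since $x_1^{(j)} = x_0^{(j)}$ this equals $E[F(x_0^{(j)}) - F(x_{m+1}^{(j)})]$. For the variance terms, Lemma \ref{lemma:Q} gives $E[\|\mathcal{V}_t^{(j)} - \nabla f(x_t^{(j)})\|^2] \leq Q\sum_{k=1}^t E[\|x_k^{(j)} - x_{k-1}^{(j)}\|^2]$, and noting that $x_1^{(j)} - x_0^{(j)} = 0$ we may rewrite the inner sum as $Q\sum_{k=2}^t E[\|x_k^{(j)} - x_{k-1}^{(j)}\|^2]$. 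Summing over $t$ and swapping the order of summation, $\sum_{t=1}^m \sum_{k=1}^t E[\|x_k^{(j)} - x_{k-1}^{(j)}\|^2] \leq m\sum_{k=1}^{m} E[\|x_{k+1}^{(j)} - x_k^{(j)}\|^2]$ after reindexing (crudely bounding the multiplicity $m - k + 1$ by $m$). This produces a term $\tfrac{mQ}{2\tilde{L}}\sum_{t=1}^m E[\|x_{t+1}^{(j)} - x_t^{(j)}\|^2]$ on the right-hand side.

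Finally I would move that term to the left, yielding
\begin{align*}
\Bigl(\tfrac{1}{2}\bigl(\tfrac{1}{\eta} - 2\tilde{L}\bigr) - \tfrac{mQ}{2\tilde{L}}\Bigr)\sum_{t=1}^m E[\|x_{t+1}^{(j)} - x_t^{(j)}\|^2]
\leq E[F(x_0^{(j)}) - F(x_{m+1}^{(j)})],
\end{align*}
and dividing by the positive coefficient $\tfrac{1}{2}(\tfrac{1}{\eta} - 2\tilde{L}) - \tfrac{mQ}{2\tilde{L}} > 0$ gives the claim. The main obstacle — really the only delicate point — is the bookkeeping in the double sum: one must carefully track that the diagonal term $k=1$ vanishes and that reindexing $\sum_{t=1}^m\sum_{k=1}^t$ into $\sum_k (m-k+1)$ and then bounding by $m$ does not lose the factor needed for the stated constant. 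Everything else is a routine application of smoothness, Young's inequality, and telescoping.
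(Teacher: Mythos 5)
Your proposal is correct and follows essentially the same route as the paper's proof: rearrange Lemma \ref{lemma:F}, apply Young's inequality with weights $\tfrac{1}{2\tilde{L}}$ and $\tfrac{\tilde{L}}{2}$ to absorb the cross term, telescope the function values, invoke Lemma \ref{lemma:Q}, and handle the double sum by exploiting $x_1^{(j)}=x_0^{(j)}$ and bounding the multiplicity by $m$. The index-shifting bookkeeping you flag as the delicate point is handled identically in the paper.
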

	
	\begin{proof}
		By Lemma \ref{lemma:F}, we obtain that 
		\begin{align*} 
		F( x_{t+1}^{(j)} )  -F( x_t^{(j)}  )  &\leq
		-\langle \mathcal{V}_t^{(j)}  - \nabla f( x_t^{(j)}  ), x_{t+1}^{(j)}  - x_t^{(j)} \rangle -  \frac{1}{2}(\frac{1}{\eta} -\tilde{L} )\|x_{t+1}^{(j)}  - x_t^{(j)}  \|^2 \\
		&\leq\frac{1}{2\tilde{L}}\|\mathcal{V}_t^{(j)}  - \nabla f( x_t^{(j)}  )\|^2 + \frac{\tilde{L}}{2}\|x_{t+1}^{(j)}  - x_t^{(j)}\|^2 - \frac{1}{2}(\frac{1}{\eta} -\tilde{L} )\|x_{t+1}^{(j)}  - x_t^{(j)}  \|^2 \\
		&=\frac{1}{2\tilde{L}}\|\mathcal{V}_t^{(j)}  - \nabla f( x_t^{(j)}  )\|^2 - \frac{1}{2}(\frac{1}{\eta} -2\tilde{L} )\|x_{t+1}^{(j)}  - x_t^{(j)}  \|^2. 
		\end{align*}
		The second inequality holds due to $-\langle a,b \rangle \leq \frac{1}{2}\|a\|^2+ \frac{1}{2}\|b\|^2$.
		
		Taking expectation on both sides and plugging Lemma \ref{lemma:Q} in, we get:
		\begin{align*} 
		E[ F( x_{t+1}^{(j)} )  -F( x_t^{(j)}  ) ] 
		&\leq\frac{1}{2\tilde{L}}E[\|\mathcal{V}_t^{(j)}  - \nabla f( x_t^{(j)}  )\|^2] - \frac{1}{2}(\frac{1}{\eta} -2\tilde{L} )E[\|x_{t+1}^{(j)}  - x_t^{(j)}  \|^2] \\
		&\leq  \frac{Q}{2\tilde{L}} \sum_{k=1}^{t} E[\|x_{k}^{(j)}-x_{k-1}^{(j)}\|^2] - \frac{1}{2}(\frac{1}{\eta} -2\tilde{L} )E[\|x_{t+1}^{(j)}  - x_t^{(j)}  \|^2].
		\end{align*}
		Re-adjust $m$ and add one more term, we get
		\begin{align*} 
		E[ F( x_{m+1}^{(j)} )  -F( x_1^{(j)}  ) ] 
		&\leq  \frac{Q}{2\tilde{L}} \sum_{t=1}^m \sum_{k=1}^{t} E[\|x_{k}^{(j)}-x_{k-1}^{(j)}\|^2] - \frac{1}{2}(\frac{1}{\eta} -2\tilde{L} )\sum_{t=1}^{m} E[\|x_{t+1}^{(j)}  - x_t^{(j)}  \|^2]\\
		&\leq  \frac{Q}{2\tilde{L}} \sum_{t=1}^m \sum_{k=1}^{t} E[\|x_{k+1}^{(j)}-x_{k}^{(j)}\|^2] - \frac{1}{2}(\frac{1}{\eta} -2\tilde{L} )\sum_{t=1}^{m} E[\|x_{t+1}^{(j)}  - x_t^{(j)}  \|^2]\\
		& \leq  \frac{mQ}{2\tilde{L}} \sum_{t=1}^m  E[\|x_{t+1}^{(j)}-x_{t}^{(j)}\|^2] - \frac{1}{2}(\frac{1}{\eta} -2\tilde{L} )\sum_{t=1}^{m} E[\|x_{t+1}^{(j)}  - x_t^{(j)}  \|^2]\\
		&=(\frac{mQ}{2\tilde{L}}- \frac{1}{2}(\frac{1}{\eta} -2\tilde{L} ) ) \sum_{t=1}^m  E[\|x_{t+1}^{(j)}-x_{t}^{(j)}\|^2]
		\end{align*}
		By noticing that $x_0^{(j)}=x_1^{(j)}$, we get
		\begin{align*} 
		\sum_{t=1}^m  E[\|x_{t+1}^{(j)}-x_{t}^{(j)}\|^2] \leq 
		\frac{1}{ \frac{1}{2}(\frac{1}{\eta} -2\tilde{L} ) - \frac{mQ}{2\tilde{L}}} E[ F( x_0^{(j)}  )  - F( x_{m+1}^{(j)} ) ], 
		\end{align*}
		where $\frac{1}{2}(\frac{1}{\eta} -2\tilde{L} ) - \frac{mQ}{2\tilde{L}} > 0$.
	\end{proof}
	
	\noindent\textbf{Theorem 5.1.}
	Considering Problem (1) under Assumption 1,  then $\forall 1\leq t\leq m$, $j \geq 1$, ProxSARAH-AS (Algorithm 2) will have,
	\begin{align*}
	\frac{1}{m\mathcal{J}}&\sum_{j=1}^{\mathcal{J}}\sum_{t=1}^m E[dist( 0, \hat{\partial}F( x_{t+1}^{(j)}))^2]   
	\leq  
	\frac{1}{m\mathcal{J}}(24 \tilde{L} 
	+\frac{4mQ}{\tilde{L}} ) E[F( \tilde{x}^1  ) - F( x^*)], 
	\end{align*}
	where $Q=\sum_{i=1}^{n} \frac{v_iL_i^2}{p_i n^2}$.
	
	\begin{proof}
		Let's try to bound $dist( 0, \hat{\partial}F( x_{t+1}^{(j)}  ))$. By Eq (\ref{eq:6}),
		\begin{align*}
		dist( 0, \hat{\partial}F( x_{t+1}^{(j)}  ))^2
		&=  \| \nabla f( x_{t+1}^{(j)}) - \mathcal{V}_t^{(j)}  - \frac{1}{\eta}( x_{t+1}^{(j)}  - x_t^{(j)} )\|^2 \\
		&=  \| \nabla f( x_{t+1}^{(j)}) - \mathcal{V}_t^{(j)}\|^2  +   \frac{1}{\eta^2}\| x_{t+1}^{(j)}  - x_t^{(j)} \|^2 -\frac{2}{\eta}  \langle \nabla f( x_{t+1}^{(j)} ) - \mathcal{V}_t^{(j)}, x_{t+1}^{(j)}  - x_t^{(j)}  \rangle  
		\end{align*}
		Then by reorganizing inequality in Lemma \ref{lemma:F} , we obtain:
		\begin{align*}
		-\langle \nabla f( x_{t}^{(j)} ) - \mathcal{V}_t^{(j)} , x_{t+1}^{(j)}  - x_t^{(j)} \rangle \leq F( x_t^{(j)}  ) - F( x_{t+1}^{(j)}  ) - \frac{1}{2}(\frac{1}{\eta} -\tilde{L} )\|x_{t+1}^{(j)}  - x_t^{(j)} \|^2
		\end{align*}
		i.e.,
		\begin{align*}
		-\langle \nabla f( x_{t+1}^{(j)} ) - \mathcal{V}_t^{(j)},  x_{t+1}^{(j)}  - x_t^{(j)} \rangle 
		&\leq F( x_t^{(j)}  ) - F( x_{t+1}^{(j)})-  \frac{1}{2}(\frac{1}{\eta} -\tilde{L} )\|x_{t+1}^{(j)}  - x_t^{(j)}  \|^2 \\ 
		&-\langle \nabla f( x_{t+1}^{(j)} )  - \nabla f( x_t^{(j)}  ), x_{t+1}^{(j)}  - x_t^{(j)} \rangle.
		\end{align*}
		Plug in the above result,
		\begin{align*}
		&dist( 0, \hat{\partial}F( x_{t+1}^{(j)}  ))^2 \\
		&\leq \| \nabla f( x_{t+1}^{(j)}) - \mathcal{V}_t^{(j)}\|^2  +  \frac{\tilde{L}}{\eta}\| x_{t+1}^{(j)}  - x_t^{(j)} \|^2 
		+\frac{2}{\eta} (F( x_t^{(j)}) - F(x_{t+1}^{(j)}))\\
		& - \frac{2}{\eta}\langle \nabla f( x_{t+1}^{(j)} )  - \nabla f( x_t^{(j)}  ), x_{t+1}^{(j)}  - x_t^{(j)} \rangle  \\
		&\leq \| \nabla f( x_{t+1}^{(j)}) - \mathcal{V}_t^{(j)}\|^2  +  \frac{\tilde{L}}{\eta}\| x_{t+1}^{(j)}  - x_t^{(j)} \|^2 
		+\frac{2}{\eta} (F( x_t^{(j)}) -F(x_{t+1}^{(j)}))\\
		&+\frac{2}{\eta} \|\nabla f( x_{t+1}^{(j)} )  - \nabla f( x_t^{(j)}  )\| \| x_{t+1}^{(j)}  - x_t^{(j)} \| \\
		&\leq \| \nabla f( x_{t+1}^{(j)}) - \mathcal{V}_t^{(j)}\|^2  +  \frac{\tilde{L}}{\eta}\| x_{t+1}^{(j)}  - x_t^{(j)} \|^2 +\frac{2}{\eta} ( F( x_t^{(j)}  ) - F( x_{t+1}^{(j)} ) )+\frac{2\tilde{L}}{\eta}\| x_{t+1}^{(j)}  - x_t^{(j)} \|^2  \\
		&=\|\nabla f( x_{t+1}^{(j)}) - \mathcal{V}_t^{(j)}\|^2  +  \frac{3\tilde{L}}{\eta}\| x_{t+1}^{(j)}  - x_t^{(j)} \|^2 + \frac{2}{\eta} ( F(x_t^{(j)}) - F( x_{t+1}^{(j)})) \\ 
		&\leq 2\| \nabla f( x_{t}^{(j)}) - \mathcal{V}_t^{(j)}\|^2 + 2\| \nabla f( x_{t+1}^{(j)}) - \nabla f( x_{t}^{(j)})\|^2  +  \frac{3\tilde{L}}{\eta}\| x_{t+1}^{(j)}  - x_t^{(j)} \|^2+\frac{2}{\eta} ( F( x_t^{(j)}  ) - F( x_{t+1}^{(j)}) ) \\ 
		&\leq 2\|\mathcal{V}_t^{(j)} -  \nabla f( x_{t}^{(j)}) \|^2  +  ( 2 \tilde{L}^2 +\frac{3\tilde{L}}{\eta})\| x_{t+1}^{(j)}  - x_t^{(j)} \|^2 +\frac{2}{\eta} ( F( x_t^{(j)}  ) - F( x_{t+1}^{(j)})), 
		\end{align*}
		where the third inequality and the last inequality are due to $\|\nabla f(x_{t+1}^{(j)})-\nabla f(x_t^{(j)})\|\leq \tilde{L}\|x_{t+1}^{(j)}-x_t^{(j)}\|$,  penultimate inequality is due to Young's inequality $\|a+b\|^2 \leq 2\|a\|^2 +2\|b\|^2$.
		
		By summing over $t = 1, ..., m$, using the result in Lemma \ref{lemma:Q}, and taking the expectation, 
		
		\begin{align*}
		&\sum_{t=1}^m E[dist( 0, \hat{\partial}F( x_{t+1}^{(j)}))^2]\\  
		&\leq  (2Qm + 2\tilde{L}^2+\frac{3\tilde{L}}{\eta})
		\sum_{t=1}^m E[\| x_{t+1}^{(j)}  - x_t^{(j)} \|^2] +\frac{2}{\eta}  E[F( x_1^{(j)}  ) - F( x_{m+1}^{(j)})]\\
		&=  (2Qm + 2\tilde{L}^2+\frac{3\tilde{L}}{\eta})
		\sum_{t=1}^m E[\| x_{t+1}^{(j)}  - x_t^{(j)} \|^2] +\frac{2}{\eta}  E[F( x_0^{(j)}  ) - F( x_{m+1}^{(j)})]
		\end{align*}
		
		Next, plugging in  Lemma \ref{lemma:x} for the term $\sum_{t=1}^m E[\| x_{t+1}^{(j)}  - x_t^{(j)} \|^2] $,
		
		\begin{align*}
		&\sum_{t=1}^m E[dist( 0, \hat{\partial}F( x_{t+1}^{(j)}))^2]\\   
		& \leq  \frac{2Qm + 2\tilde{L}^2+\frac{3\tilde{L}}{\eta}}{ \frac{1}{2}(\frac{1}{\eta} -2\tilde{L} ) - \frac{mQ}{2\tilde{L}}} E[ F( x_0^{(j)}  )  - F( x_{m+1}^{(j)} ) ]
		+\frac{2}{\eta}  E[F( x_0^{(j)}  ) - F( x_{m+1}^{(j)})]  \\
		&= (\frac{2Qm + 2\tilde{L}^2+\frac{3\tilde{L}}{\eta}}{ \frac{1}{2}(\frac{1}{\eta} -2\tilde{L} ) - \frac{mQ}{2\tilde{L}}} 
		+\frac{2}{\eta} ) E[F( x_0^{(j)}  ) - F( x_{m+1}^{(j)})]. 
		\end{align*}
		
		Therefore:
		\begin{align*}
		\frac{1}{m\mathcal{J}}&\sum_{j=1}^{\mathcal{J}}\sum_{t=1}^m E[dist( 0, \hat{\partial}F( x_{t+1}^{(j)}))^2]  \leq  
		\frac{1}{m\mathcal{J}}(\frac{2Qm + 2\tilde{L}^2+\frac{3\tilde{L}}{\eta}}{ \frac{1}{2}(\frac{1}{\eta} -2\tilde{L} ) - \frac{mQ}{2\tilde{L}}} 
		+\frac{2}{\eta} ) E[F( \tilde{x}^1  ) - F( x^*)]. 
		\end{align*}
		Let $\frac{1}{2}\frac{1}{\eta} = 2\tilde{L}  + \frac{mQ}{\tilde{L}}$, then $\frac{1}{2}\frac{1}{\eta} > \tilde{L}  + \frac{mQ}{\tilde{L}}$ and we get
		\begin{align*}
		\frac{1}{m\mathcal{J}}\sum_{j=1}^{\mathcal{J}}\sum_{t=1}^m E[dist( 0, \hat{\partial}F( x_{t+1}^{(j)}))^2]    
		&=  
		\frac{1}{m\mathcal{J}}(\frac{(2Qm + 2\tilde{L}^2+\frac{3\tilde{L}}{\eta})}{ \frac{1}{4}\frac{1}{\eta} } 
		+\frac{2}{\eta} ) E[F( \tilde{x}^1  ) - F( x^*)] \\
		&\leq  
		\frac{1}{m\mathcal{J}}(16 \tilde{L} 
		+\frac{2}{\eta} ) E[F( \tilde{x}^1  ) - F( x^*)] \\
		&=\frac{1}{m\mathcal{J}}(24 \tilde{L} 
		+\frac{4mQ}{\tilde{L}} ) E[F( \tilde{x}^1  ) - F( x^*)] 
		\end{align*}
		where $Q=\sum_{i=1}^{n} \frac{v_iL_i^2}{p_i n^2}$.
	\end{proof}
	
	\subsection{ProxSARAH with uniform sampling}
	
	\noindent\textbf{Corollary 5.1.2.}
	In order to have $E[dist( 0, \hat{\partial}F( x_R))] \leq \epsilon$, the number of 
	\begin{align}
	\mathcal{J} &= \frac{1}{m\epsilon^2}(24 \tilde{L} 
	+\frac{4m}{\tilde{L}} \frac{1}{b} \frac{1}{n} \frac{n-b}{n-1}(\sum_{i=1}^n L_i^2) )  E[F( \tilde{x}^1  ) - F( x^*)]
	\end{align}
	epochs is required. 
	The computational complexity (number of IFO calls) is 
	\begin{align}
	& \frac{(n+mb)}{m\epsilon^2}(24 \tilde{L} 
	+\frac{4m}{\tilde{L}} \frac{1}{b} \frac{1}{n} \frac{n-b}{n-1}(\sum_{i=1}^n L_i^2) )  E[F( \tilde{x}^1  ) - F( x^*)].
	\end{align}
	
	\begin{proof}
		With uniform sampling, $v_i = \frac{n-b}{n-1}$ and $p_i = \frac{b}{n}$ and further let it less or equal to $\epsilon^2$, we ge the desired result.
	\end{proof}
	
	\subsection{ProxSARAH with independent sampling}
	\noindent\textbf{Corollary 5.1.3.}
	In order to have $E[dist( 0, \hat{\partial}F( x_R))] \leq \epsilon$, the number of 
	\begin{align*}
	\mathcal{J} = \frac{1}{m\epsilon^2}(24 \tilde{L} 
	+\frac{4m}{\tilde{L}}  \frac{C_1}{n^2}  ( \frac{1}{b+k-n}(\sum_{i=1}^{k}  L_{i})^2 - \sum_{i=1}^{k} L_i^2 ) ) E[F( \tilde{x}^1  ) - F( x^*)]
	\end{align*}
	epochs is required. 
	The computational complexity (number of IFO calls) is 
	\begin{align*}
	\frac{(n+mb)}{m\epsilon^2}(24 \tilde{L} 
	+\frac{4m}{\tilde{L}}  \frac{C_1}{n^2}  \frac{1}{b+k-n}(\sum_{i=1}^{k}  L_{i})^2 - \sum_{i=1}^{k} L_i^2 ) ) E[F( \tilde{x}^1  ) - F( x^*)].
	\end{align*}
	
	
	\section{ProxSPIDER with Arbitrary Sampling}
	Besides ProxSARAH, we are also interested in studying ProxSPIDER method.
	Similar to ProxSARAH, the following formula  still holds for  ProxSPIDER: 
	\begin{align*}
	\nabla f( x_{t+1}^{(j)}) - \mathcal{V}_t^{(j)}  - \frac{1}{\eta}( x_{t+1}^{(j)}  - x_t^{(j)})
	&\in \nabla f( x_{t+1}^{(j)} )+ \hat{\partial}r( x_{t+1}^{(j)}) \\
	&=  \hat{\partial}F( x_{t+1}^{(j)}  ).
	\end{align*}
	
	\begin{lemma} \label{lemma:v_f2}
		Considering updating formula in SPIDER: $\mathcal{V}_{t}^{(j)} = \sum_{i \in S_t^{(j)} } \frac{ 1}{ n p_{i} }  (\nabla  f_{i}(x_{t}^{(j)}) - \nabla f_{i}( x_{t-1}^{(j)} ) ) + \mathcal{V}_{t-1}^{(j)}$, then $\forall 1\leq t\leq m$, $j \geq 1$, we have 
		\begin{align*} 
		E[\|\mathcal{V}_t^{(j)} -  \nabla f( x_{t}^{(j)}) \|^2]  &= \sum_{k=1}^t E[\|\mathcal{V}_k^{(j)}-\mathcal{V}_{k-1}^{(j)}\|^2] 
		- \sum_{k=1}^t E[\|\nabla f( x_{k}^{(j)})-\nabla f( x_{k-1}^{(j)})\|^2]\\
		&+ E[\|\mathcal{V}_{0}^{(j)} -\nabla f( x_{0}^{(j)})\|^2]  .
		\end{align*}
	\end{lemma}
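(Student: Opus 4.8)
The plan is to mirror the proof of the ProxSARAH analog, Lemma~\ref{lemma:v_f}, and simply \emph{retain} the one term that vanishes there: since ProxSARAH initializes $\mathcal{V}_0^{(j)}$ with the exact full gradient $\tfrac1n\sum_i\nabla f_i(x_0^{(j)})$, the base term of the telescoping sum is zero, whereas ProxSPIDER initializes with the mini-batch estimate $\mathcal{V}_0^{(j)}=\sum_{i\in S^{(j)}}\tfrac{1}{np_i'}\nabla f_i(x_0^{(j)})$, so that base term survives and becomes $E[\|\mathcal{V}_0^{(j)}-\nabla f(x_0^{(j)})\|^2]$.

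First I would fix the outer index $j$ and let $\mathcal{F}_{t-1}$ be the $\sigma$-algebra generated by all randomness up to and including the determination of $x_t^{(j)}$, so that $x_{t-1}^{(j)}$, $x_t^{(j)}$ and $\mathcal{V}_{t-1}^{(j)}$ are $\mathcal{F}_{t-1}$-measurable while the fresh sample $S_t^{(j)}\sim\mathbf{P}$ is independent of $\mathcal{F}_{t-1}$. Applying the unbiasedness part of Lemma~D.1 with $\xi_i=\tfrac{1}{np_i}\big(\nabla f_i(x_t^{(j)})-\nabla f_i(x_{t-1}^{(j)})\big)$ yields the conditional identity $E[\mathcal{V}_t^{(j)}-\mathcal{V}_{t-1}^{(j)}\mid\mathcal{F}_{t-1}]=\nabla f(x_t^{(j)})-\nabla f(x_{t-1}^{(j)})$.

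Next I would write $\mathcal{V}_t^{(j)}-\nabla f(x_t^{(j)})=\big(\mathcal{V}_t^{(j)}-\mathcal{V}_{t-1}^{(j)}\big)+\big(\mathcal{V}_{t-1}^{(j)}-\nabla f(x_{t-1}^{(j)})\big)+\big(\nabla f(x_{t-1}^{(j)})-\nabla f(x_t^{(j)})\big)$, expand the squared norm into three square terms and three cross terms, and take expectations. Using the tower property, conditioning on $\mathcal{F}_{t-1}$ replaces $\mathcal{V}_t^{(j)}-\mathcal{V}_{t-1}^{(j)}$ by $\nabla f(x_t^{(j)})-\nabla f(x_{t-1}^{(j)})$ inside the two cross terms pairing it with an $\mathcal{F}_{t-1}$-measurable vector; the cross term of $\mathcal{V}_t^{(j)}-\mathcal{V}_{t-1}^{(j)}$ with $\mathcal{V}_{t-1}^{(j)}-\nabla f(x_{t-1}^{(j)})$ then exactly cancels the cross term of $\mathcal{V}_{t-1}^{(j)}-\nabla f(x_{t-1}^{(j)})$ with $\nabla f(x_{t-1}^{(j)})-\nabla f(x_t^{(j)})$, while the cross term of $\mathcal{V}_t^{(j)}-\mathcal{V}_{t-1}^{(j)}$ with $\nabla f(x_{t-1}^{(j)})-\nabla f(x_t^{(j)})$ collapses to $-2E[\|\nabla f(x_t^{(j)})-\nabla f(x_{t-1}^{(j)})\|^2]$. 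Collecting terms gives the one-step recursion
\begin{align*}
E[\|\mathcal{V}_t^{(j)}-\nabla f(x_t^{(j)})\|^2]
&= E[\|\mathcal{V}_t^{(j)}-\mathcal{V}_{t-1}^{(j)}\|^2]-E[\|\nabla f(x_t^{(j)})-\nabla f(x_{t-1}^{(j)})\|^2]\\
&\quad+E[\|\mathcal{V}_{t-1}^{(j)}-\nabla f(x_{t-1}^{(j)})\|^2].
\end{align*}

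Finally I would unroll this recursion from $t$ down to $1$. Since $x_1^{(j)}=x_0^{(j)}$, the $t=1$ step contributes no $-\|\nabla f(x_1^{(j)})-\nabla f(x_0^{(j)})\|^2$ term, and the recursion bottoms out at $E[\|\mathcal{V}_0^{(j)}-\nabla f(x_0^{(j)})\|^2]$ instead of at $0$; telescoping then produces exactly the claimed identity. I do not expect a serious obstacle here — the content is essentially that of Lemma~\ref{lemma:v_f} — the only point requiring care is the bookkeeping of the cross-term cancellation under the correct filtration, and, crucially, \emph{not} discarding the base term, since ProxSPIDER replaces the full-gradient evaluation of ProxSARAH with a stochastic mini-batch at the start of each outer loop.
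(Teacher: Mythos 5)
Your proposal is correct and follows essentially the same route as the paper's proof: the same three-term decomposition of $\mathcal{V}_t^{(j)}-\nabla f(x_t^{(j)})$, the same use of the conditional unbiasedness of the SARAH/SPIDER increment to evaluate the cross terms, the same one-step recursion, and the same telescoping with the nonzero base term $E[\|\mathcal{V}_0^{(j)}-\nabla f(x_0^{(j)})\|^2]$ retained because ProxSPIDER initializes with a mini-batch rather than the full gradient. The filtration bookkeeping you add is a welcome clarification but does not change the argument.
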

	
	\begin{proof}
		Since  $\mathcal{V}_{t}^{(j)} = \sum_{i \in S_t^{(j)} } \frac{ 1}{ n p_{i} }  (\nabla  f_{i}(x_{t}^{(j)}) - \nabla f_{i}( x_{t-1}^{(j)} ) ) + \mathcal{V}_{t-1}^{(j)}$,
		then $$E[\mathcal{V}_{t}^{(j)}-\mathcal{V}_{t-1}^{(j)}] =E[\sum_{i \in S_t^{(j)} } \frac{ 1}{ n p_{i} }  (\nabla  f_{i}(x_{t}^{(j)}) - \nabla f_{i}( x_{t-1}^{(j)} ) )]  = \nabla f( x_{t}^{(j)}) - \nabla f( x_{t-1}^{(j)}).$$
		
		Let's further bound $E[\|\mathcal{V}_t^{(j)} -  \nabla f( x_{t}^{(j)}) \|^2]$, notice that the full gradient calculated in SPIDER is a bit different from SARAH,
		
		\begin{align*}
		E&[\|\mathcal{V}_t^{(j)} -  \nabla f( x_{t}^{(j)}) \|^2]  
		= E[\|\mathcal{V}_t^{(j)} - \mathcal{V}_{t-1}^{(j)} + \mathcal{V}_{t-1}^{(j)} -  \nabla f( x_{t-1}^{(j)}) +  \nabla f( x_{t-1}^{(j)})  -  \nabla f( x_{t}^{(j)})\|^2] \\
		&=E[\|\mathcal{V}_t^{(j)} - \mathcal{V}_{t-1}^{(j)}\|^2] +E[\| \mathcal{V}_{t-1}^{(j)} -  \nabla f( x_{t-1}^{(j)})\|^2] +  E[\|\nabla f( x_{t-1}^{(j)})  -  \nabla f( x_{t}^{(j)})\|^2] \\
		&+ 2E[\langle \mathcal{V}_t^{(j)} - \mathcal{V}_{t-1}^{(j)}, \mathcal{V}_{t-1}^{(j)} -  \nabla f( x_{t-1}^{(j)})\rangle] \\
		&+ 2E[\langle \mathcal{V}_t^{(j)} - \mathcal{V}_{t-1}^{(j)}, \nabla f( x_{t-1}^{(j)})  -  \nabla f( x_{t}^{(j)}) \rangle] \\
		&+ 2E[\langle \mathcal{V}_{t-1}^{(j)} -  \nabla f( x_{t-1}^{(j)}),\nabla f( x_{t-1}^{(j)})  -  \nabla f( x_{t}^{(j)}) \rangle] \\
		&= E[\|\mathcal{V}_t^{(j)} - \mathcal{V}_{t-1}^{(j)}\|^2] +E[\| \mathcal{V}_{t-1}^{(j)} -  \nabla f( x_{t-1}^{(j)})\|^2] +  E[\|\nabla f( x_{t-1}^{(j)})  -  \nabla f( x_{t}^{(j)})\|^2] \\
		& - 2E[\|\nabla f( x_{t-1}^{(j)})  -  \nabla f( x_{t}^{(j)}) \|^2] \\
		&=E[\|\mathcal{V}_t^{(j)} - \mathcal{V}_{t-1}^{(j)}\|^2] +E[\| \mathcal{V}_{t-1}^{(j)} -  \nabla f( x_{t-1}^{(j)})\|^2] -  E[\|\nabla f( x_{t-1}^{(j)})  -  \nabla f( x_{t}^{(j)})\|^2] \\
		&=\sum_{k=1}^t E[\|\mathcal{V}_k^{(j)}-\mathcal{V}_{k-1}^{(j)}\|^2] - \sum_{k=1}^t E[\|\nabla f( x_{k}^{(j)})-\nabla f( x_{k-1}^{(j)})\|^2] + E[\|\mathcal{V}_{0}^{(j)} -\nabla f( x_{0}^{(j)})\|^2].
		\end{align*}
	\end{proof}
	
	\begin{lemma}\label{lemma:Q2}
		\begin{align*} 
		E[\|\mathcal{V}_t^{(j)} -  \nabla f( x_{t}^{(j)}) \|^2] &\leq Q \sum_{k=1}^{t} E[\|x_{k}^{(j)}-x_{k-1}^{(j)}\|^2]+ Q'.
		\end{align*}
		where $Q=\sum_{i=1}^{n} \frac{v_iL_i^2}{p_i n^2}$ and $Q' = \sum_{i=1}^n \frac{ v'_i G_i^2 }{ p'_{i} n^2 }  $. 
	\end{lemma}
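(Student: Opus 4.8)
\textit{Proof Sketch.}
The plan is to start from the exact identity established in Lemma~\ref{lemma:v_f2} and bound each of its three groups of terms separately. From
\begin{align*}
E[\|\mathcal{V}_t^{(j)} -  \nabla f( x_{t}^{(j)}) \|^2]
&= \sum_{k=1}^t \Big( E[\|\mathcal{V}_k^{(j)}-\mathcal{V}_{k-1}^{(j)}\|^2] - E[\|\nabla f( x_{k}^{(j)})-\nabla f( x_{k-1}^{(j)})\|^2] \Big) \\
&\quad + E[\|\mathcal{V}_{0}^{(j)} -\nabla f( x_{0}^{(j)})\|^2],
\end{align*}
I would bound the paired sum exactly as in the ProxSARAH analysis (Lemma~\ref{lemma:Q}): for each $k$, apply the arbitrary-sampling variance inequality of \cite{horvath2019nonconvex} with $\zeta_i = \nabla f_i(x_k^{(j)}) - \nabla f_i(x_{k-1}^{(j)})$, for which $\sum_{i\in S_k^{(j)}}\frac{\zeta_i}{n p_i} = \mathcal{V}_k^{(j)}-\mathcal{V}_{k-1}^{(j)}$ and $\frac1n\sum_{i=1}^n \zeta_i = \nabla f(x_k^{(j)}) - \nabla f(x_{k-1}^{(j)})$. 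Since the inner-loop estimator is unbiased, the left side of that inequality equals $E[\|\mathcal{V}_k^{(j)}-\mathcal{V}_{k-1}^{(j)}\|^2] - \|\nabla f( x_{k}^{(j)})-\nabla f( x_{k-1}^{(j)})\|^2$, while its right side $\frac{1}{n^2}\sum_{i=1}^n\frac{v_i}{p_i}\|\zeta_i\|^2$ is upgraded to $Q\,\|x_k^{(j)}-x_{k-1}^{(j)}\|^2$ by the $L_i$-smoothness in Assumption~\ref{assumption4f}. Taking total expectations and summing over $k = 1,\dots,t$ produces the $Q\sum_{k=1}^t E[\|x_k^{(j)}-x_{k-1}^{(j)}\|^2]$ term.

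The only ingredient not already present in Lemma~\ref{lemma:Q}, and the step I would flag as the crux, is controlling the extra term $E[\|\mathcal{V}_{0}^{(j)} -\nabla f( x_{0}^{(j)})\|^2]$ arising from ProxSPIDER's subsampled outer-loop gradient $\mathcal{V}_0^{(j)} = \sum_{i\in S^{(j)}}\frac{1}{n p'_i}\nabla f_i(x_0^{(j)})$, which is drawn with the separate probability matrix $\mathbf{P}'$ and associated vector $v'$. Applying the same variance inequality with $\zeta_i = \nabla f_i(x_0^{(j)})$ — so that $\frac1n\sum_i\zeta_i = \nabla f(x_0^{(j)})$ — gives $E[\|\mathcal{V}_{0}^{(j)} -\nabla f( x_{0}^{(j)})\|^2] \le \frac{1}{n^2}\sum_{i=1}^n\frac{v'_i}{p'_i}\|\nabla f_i(x_0^{(j)})\|^2$, and then the $G_i$-Lipschitz-continuity bound $\|\nabla f_i(\cdot)\|\le G_i$ from Assumption~\ref{assumption4f} turns this into $\sum_{i=1}^n\frac{v'_i G_i^2}{p'_i n^2} = Q'$. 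Adding the two contributions yields the claimed inequality.

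Beyond this, the remaining work is bookkeeping with nested conditional expectations — conditioning on the $\sigma$-algebra generated by the iterates up to step $k$ before invoking unbiasedness and the variance bound, then using the tower property so that the cross terms in Lemma~\ref{lemma:v_f2} vanish. I do not expect any genuine difficulty there; the essential novelty over the ProxSARAH case is simply the persistent additive bias $Q'$ contributed by the minibatch (rather than full) gradient at the start of each outer loop.
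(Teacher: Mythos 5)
Your proposal is correct and follows exactly the route the paper takes: its own proof of this lemma just says to repeat the argument of Lemma~\ref{lemma:Q} on the identity from Lemma~\ref{lemma:v_f2} and bound the extra term $E[\|\mathcal{V}_{0}^{(j)} -\nabla f( x_{0}^{(j)})\|^2]$ ``correspondingly,'' which is precisely your application of the arbitrary-sampling variance inequality with $\mathbf{P}'$, $v'$ followed by the $G_i$-Lipschitz bound. You have simply made explicit the details the paper leaves implicit.
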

	
	\begin{proof}
		Follow the same line of proof with Lemma \ref{lemma:Q}, but have one more last term $E[\|\mathcal{V}_{0}^{(j)} -\nabla f( x_{0}^{(j)})\|^2] $, which can be bounded correspondingly.
	\end{proof}

	\begin{lemma} \label{lemma:x2}
		\begin{align*} 
		\sum_{t=1}^m  E[\|x_{t+1}^{(j)}-x_{t}^{(j)}\|^2] \leq 
		\frac{1}{ \frac{1}{2}(\frac{1}{\eta} -2\tilde{L} ) - \frac{mQ}{2\tilde{L}}} E[ F( x_0^{(j)}  )  - F( x_{m+1}^{(j)} ) ], 
		\end{align*}
		where $\frac{1}{2}(\frac{1}{\eta} -2\tilde{L} ) - \frac{mQ}{2\tilde{L}} > 0$.
	\end{lemma}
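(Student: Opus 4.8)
\textit{Proof Sketch.}
The plan is to mirror, almost verbatim, the argument given for Lemma~\ref{lemma:x} in the ProxSARAH-AS analysis. The proximal update $x_{t+1}^{(j)} = prox_{\eta r}[\,x_t^{(j)} - \eta \mathcal{V}_t^{(j)}\,]$ has exactly the same form in ProxSPIDER-AS, so the only ingredient that changes is the variance bound on $\mathcal{V}_t^{(j)} - \nabla f(x_t^{(j)})$: we invoke Lemma~\ref{lemma:Q2} in place of Lemma~\ref{lemma:Q}. First I would note that Lemma~\ref{lemma:F} carries over to ProxSPIDER-AS without any change, because its proof uses only the optimality inequality produced by the proximal step and the $\tilde{L}$-smoothness of $f$, neither of which depends on how $\mathcal{V}_t^{(j)}$ is built. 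Hence, for all $1\le t\le m$ and $j\ge 1$,
\[
F(x_{t+1}^{(j)}) - F(x_t^{(j)}) \le -\langle \mathcal{V}_t^{(j)} - \nabla f(x_t^{(j)}),\, x_{t+1}^{(j)} - x_t^{(j)}\rangle - \tfrac12\big(\tfrac1\eta - \tilde{L}\big)\|x_{t+1}^{(j)} - x_t^{(j)}\|^2 .
\]

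Next I would apply Young's inequality $-\langle a,b\rangle \le \tfrac{1}{2\tilde{L}}\|a\|^2 + \tfrac{\tilde{L}}{2}\|b\|^2$ with $a = \mathcal{V}_t^{(j)} - \nabla f(x_t^{(j)})$ and $b = x_{t+1}^{(j)} - x_t^{(j)}$, which collapses the quadratic terms to give $F(x_{t+1}^{(j)}) - F(x_t^{(j)}) \le \tfrac{1}{2\tilde{L}}\|\mathcal{V}_t^{(j)} - \nabla f(x_t^{(j)})\|^2 - \tfrac12\big(\tfrac1\eta - 2\tilde{L}\big)\|x_{t+1}^{(j)} - x_t^{(j)}\|^2$. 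I would then take expectations, substitute the ProxSPIDER variance bound $E[\|\mathcal{V}_t^{(j)} - \nabla f(x_t^{(j)})\|^2] \le Q\sum_{k=1}^t E[\|x_k^{(j)} - x_{k-1}^{(j)}\|^2] + Q'$ from Lemma~\ref{lemma:Q2}, and sum over $t = 1,\dots,m$. The left-hand side telescopes to $E[F(x_{m+1}^{(j)}) - F(x_1^{(j)})]$; on the right, using $x_0^{(j)} = x_1^{(j)}$ the $k=1$ summand drops out and an index shift yields $\sum_{t=1}^m\sum_{k=1}^t E[\|x_k^{(j)} - x_{k-1}^{(j)}\|^2] \le \sum_{t=1}^m\sum_{k=1}^t E[\|x_{k+1}^{(j)} - x_k^{(j)}\|^2] \le m\sum_{t=1}^m E[\|x_{t+1}^{(j)} - x_t^{(j)}\|^2]$. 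Replacing $F(x_1^{(j)})$ by $F(x_0^{(j)})$ (again $x_0^{(j)} = x_1^{(j)}$) and rearranging isolates $\sum_{t=1}^m E[\|x_{t+1}^{(j)} - x_t^{(j)}\|^2]$ on one side with coefficient $\tfrac12\big(\tfrac1\eta - 2\tilde{L}\big) - \tfrac{mQ}{2\tilde{L}}$, and dividing through gives the stated bound; the extra $\tfrac{mQ'}{2\tilde{L}}$ contributed by Lemma~\ref{lemma:Q2} is the lower-order term that is carried alongside the function-value gap exactly as it surfaces in the proof of Theorem~\ref{th:spider}.

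The only point needing genuine care is the bookkeeping in the double summation of squared increments together with the repeated use of $x_0^{(j)} = x_1^{(j)}$; the remainder is a direct transcription of the ProxSARAH-AS argument. One should also verify the positivity requirement $\tfrac12\big(\tfrac1\eta - 2\tilde{L}\big) - \tfrac{mQ}{2\tilde{L}} > 0$, which follows from the prescribed stepsize $\eta = 1/(4\tilde{L} + 2mQ/\tilde{L})$: that choice gives $\tfrac12\big(\tfrac1\eta - 2\tilde{L}\big) = \tilde{L} + \tfrac{mQ}{\tilde{L}}$, so the coefficient equals $\tilde{L} + \tfrac{mQ}{2\tilde{L}} > 0$. \qed
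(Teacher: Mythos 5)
Your route is the same as the paper's: carry the descent inequality of the ProxSARAH analysis over to ProxSPIDER (it indeed only uses the proximal optimality condition and $\tilde{L}$-smoothness), apply Young's inequality, substitute the SPIDER variance bound $E[\|\mathcal{V}_t^{(j)}-\nabla f(x_t^{(j)})\|^2]\le Q\sum_{k=1}^t E[\|x_k^{(j)}-x_{k-1}^{(j)}\|^2]+Q'$, telescope over $t$, bound the double sum by $m\sum_{t=1}^m E[\|x_{t+1}^{(j)}-x_t^{(j)}\|^2]$ using $x_0^{(j)}=x_1^{(j)}$, and rearrange. All of that bookkeeping is correct, and your verification that the coefficient $\frac12(\frac1\eta-2\tilde L)-\frac{mQ}{2\tilde L}=\tilde L+\frac{mQ}{2\tilde L}>0$ under the prescribed stepsize is also right.

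The gap is in the very last step. Writing $C=\frac12(\frac1\eta-2\tilde L)-\frac{mQ}{2\tilde L}$, the rearrangement actually yields
\begin{align*}
\sum_{t=1}^m E[\|x_{t+1}^{(j)}-x_t^{(j)}\|^2]\;\le\;\frac1C\,E[F(x_0^{(j)})-F(x_{m+1}^{(j)})]\;+\;\frac{1}{C}\cdot\frac{mQ'}{2\tilde L},
\end{align*}
i.e. after dividing by $C$ the $Q'$ contribution appears on the right-hand side with a \emph{positive} sign, so it cannot be discarded, and the inequality as stated (with no $Q'$ term) does not follow from this argument. Your write-up simultaneously asserts that ``dividing through gives the stated bound'' and that the extra $\frac{mQ'}{2\tilde L}$ ``is carried alongside the function-value gap''; these two claims are inconsistent --- either the term is in the bound or it is not, and here it is. For what it is worth, the paper's own proof has exactly the same defect: it records this term as $-\frac1C\frac{mQ'}{2\tilde L}$ and then drops it as if it were negative, which is a sign error. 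The honest conclusion of this approach is the displayed bound with the additive $\frac{mQ'}{2\tilde L C}$ term; carrying that term through the ProxSPIDER convergence theorem only inflates the constant multiplying $Q'$ in the final estimate, so the order of the complexity results survives, but the lemma as literally stated needs either this extra term or a separate argument showing it can be absorbed.
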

	
	\begin{proof}
		For ProxSPIDER, Lemma \ref{lemma:F} still holds, and we can obtain that 
		\begin{align*} 
		F( x_{t+1}^{(j)} )  -F( x_t^{(j)}  )  &\leq
		-\langle \mathcal{V}_t^{(j)}  - \nabla f( x_t^{(j)}  ), x_{t+1}^{(j)}  - x_t^{(j)} \rangle -  \frac{1}{2}(\frac{1}{\eta} -\tilde{L} )\|x_{t+1}^{(j)}  - x_t^{(j)}  \|^2 \\
		&\leq\frac{1}{2\tilde{L}}\|\mathcal{V}_t^{(j)}  - \nabla f( x_t^{(j)}  )\|^2 + \frac{\tilde{L}}{2}\|x_{t+1}^{(j)}  - x_t^{(j)}\|^2 - \frac{1}{2}(\frac{1}{\eta} -\tilde{L} )\|x_{t+1}^{(j)}  - x_t^{(j)}  \|^2 \\
		&=\frac{1}{2\tilde{L}}\|\mathcal{V}_t^{(j)}  - \nabla f( x_t^{(j)}  )\|^2 - \frac{1}{2}(\frac{1}{\eta} -2\tilde{L} )\|x_{t+1}^{(j)}  - x_t^{(j)}  \|^2 . \\
		\end{align*}
		Taking expectation on both sides and plugging Lemma \ref{lemma:Q2} in, we get:
		\begin{align*} 
		E[ F( x_{t+1}^{(j)} )  -F( x_t^{(j)}  ) ] 
		&\leq\frac{1}{2\tilde{L}}E[\|\mathcal{V}_t^{(j)}  - \nabla f( x_t^{(j)}  )\|^2] - \frac{1}{2}(\frac{1}{\eta} -2\tilde{L} )E[\|x_{t+1}^{(j)}  - x_t^{(j)}  \|^2] \\
		&\leq  \frac{Q}{2\tilde{L}} \sum_{k=1}^{t} E[\|x_{k}^{(j)}-x_{k-1}^{(j)}\|^2] - \frac{1}{2}(\frac{1}{\eta} -2\tilde{L} )E[\|x_{t+1}^{(j)}  - x_t^{(j)}  \|^2] +  \frac{Q'}{2 \tilde{L}}.
		\end{align*}
		Re-adjust $m$ and add one more term, we get
		\begin{align*} 
		E[ F( x_{m+1}^{(j)} )  -F( x_1^{(j)}  ) ] 
		&\leq  \frac{Q}{2\tilde{L}} \sum_{t=1}^m \sum_{k=1}^{t} E[\|x_{k}^{(j)}-x_{k-1}^{(j)}\|^2] - \frac{1}{2}(\frac{1}{\eta} -2\tilde{L} )\sum_{t=1}^{m} E[\|x_{t+1}^{(j)}  - x_t^{(j)}  \|^2]+ \frac{mQ'}{2 \tilde{L}}\\
		&\leq  \frac{Q}{2\tilde{L}} \sum_{t=1}^m \sum_{k=1}^{t} E[\|x_{k+1}^{(j)}-x_{k}^{(j)}\|^2] - \frac{1}{2}(\frac{1}{\eta} -2\tilde{L} )\sum_{t=1}^{m} E[\|x_{t+1}^{(j)}  - x_t^{(j)}  \|^2]+ \frac{mQ'}{2 \tilde{L}}\\
		& \leq  \frac{mQ}{2\tilde{L}} \sum_{t=1}^m  E[\|x_{t+1}^{(j)}-x_{t}^{(j)}\|^2] - \frac{1}{2}(\frac{1}{\eta} -2\tilde{L} )\sum_{t=1}^{m} E[\|x_{t+1}^{(j)}  - x_t^{(j)}  \|^2]+ \frac{mQ'}{2 \tilde{L}}\\
		&=(\frac{mQ}{2\tilde{L}}- \frac{1}{2}(\frac{1}{\eta} -2\tilde{L} ) ) \sum_{t=1}^m  E[\|x_{t+1}^{(j)}-x_{t}^{(j)}\|^2]+ \frac{mQ'}{2 \tilde{L}}
		\end{align*}
		By noticing that $x_0^{(j)}=x_1^{(j)}$, we get
		\begin{align*} 
		\sum_{t=1}^m  E[\|x_{t+1}^{(j)}-x_{t}^{(j)}\|^2] &\leq 
		\frac{1}{ \frac{1}{2}(\frac{1}{\eta} -2\tilde{L} ) - \frac{mQ}{2\tilde{L}}} E[ F( x_0^{(j)}  )  - F( x_{m+1}^{(j)} ) ] - \frac{ 1} { \frac{1}{2}(\frac{1}{\eta} -2\tilde{L} ) - \frac{mQ}{2\tilde{L}}}\frac{mQ'}{2 \tilde{L}} \\
		&\leq 
		\frac{1}{ \frac{1}{2}(\frac{1}{\eta} -2\tilde{L} ) - \frac{mQ}{2\tilde{L}}} E[ F( x_0^{(j)}  )  - F( x_{m+1}^{(j)} ) ] 
		\end{align*}
		where $\frac{1}{2}(\frac{1}{\eta} -2\tilde{L} ) - \frac{mQ}{2\tilde{L}} > 0$.
	\end{proof}
	
	\noindent \textbf{Theorem 6.3.}
	Considering Problem (1) under Assumption 1, then $\forall 1\leq t\leq m$, $j \geq 1$, ProxSPIDER-AS (Algorithm 3) will have,
	\begin{align*}
	\frac{1}{m\mathcal{J}}\sum_{j=1}^{\mathcal{J}}\sum_{t=1}^m E[dist( 0, \hat{\partial}F( x_{t+1}^{(j)}))^2]  
	\leq  \frac{1}{m\mathcal{J}}(24 \tilde{L} 
	+\frac{4mQ}{\tilde{L}} ) E[F( \tilde{x}^1  ) - F( x^*)]+ 2Q', 
	\end{align*}
	where $Q=\sum_{i=1}^{n} \frac{v_iL_i^2}{p_i n^2}$ and $Q^{'}=\sum_{i=1}^{n} \frac{v_i^{'} G_i^2}{p_i^{'} n^2}.$ 
	
	\begin{proof}
		Let's try to bound $dist( 0, \hat{\partial}F( x_{t+1}^{(j)}  ))$. Similar with the analysis of ProxSARAH,
		\begin{align*}
		dist( 0, \hat{\partial}F( x_{t+1}^{(j)}  ))^2 
		\leq 2\|\mathcal{V}_t^{(j)} -  \nabla f( x_{t}^{(j)}) \|^2  +  ( 2 \tilde{L}^2 +\frac{3\tilde{L}}{\eta})\| x_{t+1}^{(j)}  - x_t^{(j)} \|^2 +\frac{2}{\eta} ( F( x_t^{(j)}  ) - F( x_{t+1}^{(j)})), 
		\end{align*}
		
		By summing over $t = 1, ..., m$, using the result in Lemma \ref{lemma:Q2}, and taking the expectation, 
		\begin{align*}
		& \sum_{t=1}^m E[dist( 0, \hat{\partial}F( x_{t+1}^{(j)}))^2]   \\ 
		& \leq  (2Qm + 2\tilde{L}^2+\frac{3\tilde{L}}{\eta})
		\sum_{t=1}^m E[\| x_{t+1}^{(j)}  - x_t^{(j)} \|^2] +\frac{2}{\eta}  E[F( x_1^{(j)}  ) - F( x_{m+1}^{(j)})] + 2mQ' 
		\end{align*}
		
		Next, plugging in  Lemma \ref{lemma:x2} for the term $\sum_{t=1}^m E[\| x_{t+1}^{(j)}  - x_t^{(j)} \|^2] $,
		
		\begin{align*}
		& \sum_{t=1}^m E[dist( 0, \hat{\partial}F( x_{t+1}^{(j)}))^2]  \\ 
		& \leq  \frac{2Qm + 2\tilde{L}^2+\frac{3\tilde{L}}{\eta}}{ \frac{1}{2}(\frac{1}{\eta} -2\tilde{L} ) - \frac{mQ}{2\tilde{L}}} E[ F( x_0^{(j)}  )  - F( x_{m+1}^{(j)} ) ] 
		+\frac{2}{\eta}  E[F( x_1^{(j)}  ) - F( x_{m+1}^{(j)})]  +2mQ' \\
		&= (\frac{2Qm + 2\tilde{L}^2+\frac{3\tilde{L}}{\eta}}{ \frac{1}{2}(\frac{1}{\eta} -2\tilde{L} ) - \frac{mQ}{2\tilde{L}}} 
		+\frac{2}{\eta} ) E[F( x_0^{(j)}  ) - F( x_{m+1}^{(j)})] +2mQ'.
		\end{align*}
		
		Therefore:
		\begin{align*}
		\frac{1}{m\mathcal{J}}&\sum_{j=1}^{\mathcal{J}}\sum_{t=1}^m E[dist( 0, \hat{\partial}F( x_{t+1}^{(j)}))^2]  \leq  
		\frac{1}{m\mathcal{J}}(\frac{2Qm + 2\tilde{L}^2+\frac{3\tilde{L}}{\eta}}{ \frac{1}{2}(\frac{1}{\eta} -2\tilde{L} ) - \frac{mQ}{2\tilde{L}}} 
		+\frac{2}{\eta} ) E[F( \tilde{x}^1  ) - F( x^*)] + 2Q'. 
		\end{align*}
		Let $\frac{1}{2}\frac{1}{\eta} = 2\tilde{L}  + \frac{mQ}{\tilde{L}}$,then $\frac{1}{2}\frac{1}{\eta} > \tilde{L}  + \frac{mQ}{\tilde{L}}$ and we get
		\begin{align*}
		\frac{1}{m\mathcal{J}}\sum_{j=1}^{\mathcal{J}}\sum_{t=1}^m E[dist( 0, \hat{\partial}F( x_{t+1}^{(j)}))^2]    
		&\leq  
		\frac{1}{m\mathcal{J}}(\frac{(2Qm + 2\tilde{L}^2+\frac{3\tilde{L}}{\eta})}{ \frac{1}{4}\frac{1}{\eta} } 
		+\frac{2}{\eta} ) E[F( \tilde{x}^1  ) - F( x^*)] + 2Q'\\
		&\leq  
		\frac{1}{m\mathcal{J}}(16 \tilde{L} 
		+\frac{2}{\eta} ) E[F( \tilde{x}^1  ) - F( x^*)] + 2Q'\\
		&=\frac{1}{m\mathcal{J}}(24 \tilde{L} 
		+\frac{4mQ}{\tilde{L}} ) E[F( \tilde{x}^1  ) - F( x^*)] + 2Q'.
		\end{align*}
	\end{proof}
	
	\subsection{ProxSPIDER with uniform sampling}
	
	With uniform sampling, we obtain the computational complexity:
	
	\noindent\textbf{Corollary 6.1.1.}
	Considering Problem (1) under Assumption 1 and same setup with Theorem 6.1, $B =\frac{2}{n\epsilon^2}(\sum_{i=1}^n G_i^2) $, $m = b = \sqrt{B}$, the computational complexity to achieve $E[dist( 0, \hat{\partial}F( x_R))] \leq \epsilon$  for uniform sampling of ProxSPIDER is 
	\begin{align*}
	\frac{2B}{m}(24 \tilde{L} 
	+\frac{4m}{\tilde{L}} \frac{1}{b} \frac{1}{n}(\sum_{i=1}^n L_i^2) ) E[F( \tilde{x}^1  ) - F( x^*)] = O( \frac{1}{\epsilon^3} ).
	\end{align*}
	
	\begin{proof}
		\begin{align*}
		\frac{1}{m\mathcal{J}}&\sum_{j=1}^{\mathcal{J}}\sum_{t=1}^m E[dist( 0, \hat{\partial}F( x_{t+1}^{(j)}))^2]  \\
		&\leq  
		\frac{1}{m\mathcal{J}}(24 \tilde{L} 
		+\frac{4m}{\tilde{L}} \frac{1}{b} \frac{1}{n}(\sum_{i=1}^n L_i^2) ) E[F( \tilde{x}^1  ) - F( x^*)] + \frac{1}{B} \frac{1}{n}(\sum_{i=1}^n G_i^2)
		\end{align*}
		Then, 
		\begin{align*}
		\frac{1}{B} \frac{1}{n}(\sum_{i=1}^n G_i^2) \leq \frac{\epsilon^2}{2} \Rightarrow B =\frac{2}{n\epsilon^2}(\sum_{i=1}^n G_i^2)
		\end{align*}
		
		\begin{align*}
		&\frac{1}{m\mathcal{J}}(24 \tilde{L} 
		+\frac{4m}{\tilde{L}} \frac{1}{b} \frac{1}{n}(\sum_{i=1}^n L_i^2) ) E[F( \tilde{x}^1  ) - F( x^*)] \\
		& \Rightarrow \mathcal{J} 
		\geq \frac{2}{m\epsilon^2} (24 \tilde{L} 
		+\frac{4m}{\tilde{L}} \frac{1}{b} \frac{1}{n}(\sum_{i=1}^n L_i^2) ) E[F( \tilde{x}^1) - F(x^*)]
		\end{align*}
		Then, we get the final result by $(B*mb)*\mathcal{J}$.
	\end{proof}

	\subsection{ProxSPIDER with independent sampling}
	
	With independent sampling and further assume there is no significant difference in $G_i$'s and $L_i$'s, we obtain the computational complexity:
	
	\noindent\textbf{Corollary 6.1.2.}
	Considering Problem (1) under Assumption 1 and same setup with Theorem 6.1, $B =\frac{2}{n^2\epsilon^2}(\sum_{i=1}^n G_i)^2 $, $m = b = \sqrt{B}$, the computational complexity to achieve $E[dist( 0, \hat{\partial}F( x_R))] \leq \epsilon$ for independent sampling of ProxSPIDER is 
	\begin{align*}
	\frac{2B}{m}(24 \tilde{L} 
	+\frac{4m}{\tilde{L}} \frac{1}{b} \frac{1}{n^2}(\sum_{i=1}^n L_i)^2 ) E[F( \tilde{x}^1  ) - F( x^*)] = O( \frac{1}{\epsilon^3}).
	\end{align*}
	
	\begin{proof}
		\begin{align*}
		\frac{1}{m\mathcal{J}} &\sum_{j=1}^{\mathcal{J}}\sum_{t=1}^m E[dist( 0, \hat{\partial}F( x_{t+1}^{(j)}))^2]  \\
		&\leq  
		\frac{1}{m\mathcal{J}}(24 \tilde{L} 
		+\frac{m}{\tilde{L}} \frac{1}{n^2}   \frac{1}{b}(\sum_{i=1}^{k} L_{i})^2    ) E[F( \tilde{x}^1  ) - F( x^*)] + \frac{1}{n^2}  \frac{1}{B}(\sum_{i=1}^{k} G_{i})^2.
		\end{align*}
		Similar line of proof, we get the desired results.
	\end{proof}
	
	\section{Technique Lemma}
	For completeness, we include the proof for Lemma D.1 here.
	
	\noindent\textbf{Lemma D.1.} \cite{horvath2019nonconvex} 
	Let $\xi_1, \xi_2, ..., \xi_n$ be a vectors in $\mathbb{R}^d$ and let $\tilde{ \xi} = \frac{1}{n}\sum_{i=1}^{n} \xi_i$. Let $S$ be a proper sampling ( i.e., assume that $p_i = Prob( i\in S) > 0$ for all $i$). Assume that there is  $v\in \mathbb{R}^n$ such that 
	\begin{align} \label{ineq:P_v_2}
	\mathbf{P} - pp^T \preceq Diag( p_1v_1, p_2v_2, ..., p_n v_n).
	\end{align}
	
	Then
	\begin{align}
	&E\bigg[\sum_{i\in S} \frac{\xi_i}{np_i}\bigg] = \tilde{\xi} \label{eq:xi},\\
	&E\bigg[\| \sum_{i\in S} \frac{\xi_i}{np_i} - \tilde{\xi}\|^2\bigg] \leq \frac{1}{n^2} \sum_{i=1}^n \frac{v_i}{p_i}\|\xi_i\|^2, \label{ineq:xi}
	\end{align}
	where the expectation is taken over sampling S. Moreover, inequality (\ref{ineq:P_v_2}) can always be satisfied by 
	{$$ v_i=\left\{\begin{array}{ll}{n( 1-p_i)} & {i\leq k(S)} \\ {0} & {otherwise}\end{array}\right.$$}
	where constant $k(S)=\left|\left\{i \in[n] : p_{i}<1\right\}\right|=\max \left\{i : p_{i}<1\right\}$. More specifically, the standard uniform sampling admits $v_{i}=\frac{n-b}{n-1}$ and the independent sampling admits $v_{i}=1-p_{i}$.
	
	\begin{proof}
		First, let's define indicator functions:
		\begin{align*}
		\mathbb{I}_{i \in S}:=\left\{\begin{array}{ll}{ 1 } & {\text { if } i \in S} \\ {0,} & {\text { if } otherwise}\end{array}\right.
		\end{align*}
		
		\begin{align*}
		\mathbb{I}_{i,j \in S}:=\left\{\begin{array}{ll}{ 1 } & {\text { if } i,j \in S} \\ {0,} & {\text { if } otherwise}\end{array}\right.
		\end{align*}
		Then, we have the expectation:
		\begin{align*}
		E\left[\sum_{i \in S} \frac{\xi_{i}}{n p_{i}}\right]=\mathrm{E}\left[\sum_{i=1}^{n} \frac{\xi_{i}}{n p_{i}} \mathbb{I}_{i \in S}\right]=\sum_{i=1}^{n} \frac{\xi_{i}}{n p_{i}} \mathrm{E}\left[\mathbb{I}_{i \in S}\right]=\frac{1}{n} \sum_{i=1}^{n} \xi_{i}=\bar{\xi}
		\end{align*}
		and the variance:
		\begin{align*}
		E\bigg[\| \sum_{i\in S} \frac{\xi_i}{np_i} - \tilde{\xi}\|^2\bigg] & =  E\bigg[\| \sum_{i\in S} \frac{\xi_i}{np_i}\|^2\bigg] - \|\tilde{\xi}\|^2 \\ 
		& =  E\bigg[\sum_{i,j} \frac{\xi_i^T}{np_i}\frac{\xi_j}{np_j} \mathbb{I}_{i,j \in S}\bigg] - \|\tilde{\xi}\|^2 \\
		& =  \sum_{i,j} p_{i,j}\frac{\xi_i^T}{np_i}\frac{\xi_j}{np_j}- \|\tilde{\xi}\|^2 \\
		& =  \sum_{i,j} p_{i,j}\frac{\xi_i^T}{np_i}\frac{\xi_j}{np_j}- \sum_{i,j} \frac{\xi_i^T}{n}\frac{\xi_j}{n} \\
		& =  \frac{1}{n^2}\sum_{i,j} (p_{i,j} - p_ip_j)\frac{\xi_i^T}{p_i}\frac{\xi_j}{p_j} \\
		& =\frac{1}{n^{2}} e^{\top}\left(\left(\mathbf{P}-p p^{\top}\right) \circ \Xi^T \Xi \right) e,
		\end{align*}
		where $e$ is the vector all of ones in $\mathbb{R}^{n}$,  $\Xi = [ \frac{\xi_1}{p_1},\frac{\xi_1}{p_1}, ..., \frac{\xi_n}{p_n}] \in  \mathbb{R}^{d \times n} $ and $\circ$ is element-wise production operator.
		
		Since we assume $\mathbf{P}-p p^{\top} \preceq \operatorname{Diag}(p \circ v)$, we have 
		\begin{align*}
		e^{\top}\left(\left(\mathbf{P}-p p^{\top}\right) \circ \Xi^{\top} \Xi\right) e \leq e^{\top}\left(\operatorname{Diag}(p \circ v) \circ \Xi^{\top} \Xi\right) e = \frac{1}{n^2} \sum_{i=1}^n \frac{v_i}{p_i}\|\xi_i\|^2
		\end{align*}
		Therefore, we have
		\begin{align*}
		E\bigg[\| \sum_{i\in S} \frac{\xi_i}{np_i} - \tilde{\xi}\|^2\bigg] \leq  \frac{1}{n^2} \sum_{i=1}^n \frac{v_i}{p_i}\|\xi_i\|^2
		\end{align*}
	\end{proof}

\end{document}